\theoremstyle{thmstyleone}%
\newtheorem{theorem}{Theorem}%  meant for continuous numbers
\theoremstyle{thmstyletwo}%
\theoremstyle{thmstylethree}%
\begin{document}

%% Title, authors and addresses

%% use the tnoteref command within \title for footnotes;
%% use the tnotetext command for theassociated footnote;
%% use the fnref command within \author or \address for footnotes;
%% use the fntext command for theassociated footnote;
%% use the corref command within \author for corresponding author footnotes;
%% use the cortext command for theassociated footnote;
%% use the ead command for the email address,
%% and the form \ead[url] for the home page:
%% \title{Title\tnoteref{label1}}
%% \tnotetext[label1]{}
%% \author{Name\corref{cor1}\fnref{label2}}
%% \ead{email address}
%% \ead[url]{home page}
%% \fntext[label2]{}
%% \cortext[cor1]{}
%% \affiliation{organization={},
%%             addressline={},
%%             city={},
%%             postcode={},
%%             state={},
%%             country={}}
%% \fntext[label3]{}

\title[Article Title]{Re-initialization-free Level Set Method via Molecular Beam Epitaxy Equation Regularization for Image Segmentation}
\author{\fnm{Fanghui} \sur{Song}}\email{21b912024@stu.hit.edu.cn}
\author{\fnm{Jiebao} \sur{Sun}}\email{sunjiebao@hit.edu.cn}
\author*{\fnm{Shengzhu} \sur{Shi*}}\email{mathssz@hit.edu.cn}

\author{\fnm{Zhichang} \sur{Guo}}\email{mathgzc@hit.edu.cn}
\author{\fnm{Dazhi} \sur{Zhang}}\email{zhangdazhi@hit.edu.cn}

%\affil*{\orgdiv{ School of Mathematics}, \orgname{ Harbin Institute of Technology}, \orgaddress{\street{No.92 Dazhi Street}, \city{Harbin}, \postcode{150001}, \country{China}}}
\affil{\orgdiv{ School of Mathematics}, \orgname{ Harbin Institute of Technology}, \orgaddress{\street{No. 92 Dazhi Street}, \city{Harbin}, \postcode{150001}, \country{China}}}

%\affil{\orgdiv{Department}, \orgname{Organization}, \orgaddress{\street{Street}, \city{City}, \postcode{610101}, \state{State}, \country{Country}}}
%% use optional labels to link authors explicitly to addresses:

\abstract{
Variational level set method has become a powerful tool in image segmentation due to its ability to handle complex topological changes and maintain continuity and smoothness in the process of evolution.
However its evolution process can be unstable, which results in over flatted or over sharpened contours and segmentation failure. To improve the accuracy and stability of evolution, we propose a high-order level set variational segmentation method integrated with molecular beam epitaxy (MBE) equation regularization. This method uses the crystal growth in the MBE process to limit the evolution of the level set function. Thus can avoid the re-initialization in the evolution process and regulate the smoothness of the segmented curve and keep the segmentation results independent of the initial curve selection. It also works for noisy images with intensity inhomogeneity, which is a challenge in image segmentation.
To solve the variational model, we derive the gradient flow and design a scalar auxiliary variable (SAV) scheme, which can significantly improve the computational efficiency compared with the traditional semi-implicit and semi-explicit scheme. Numerical experiments show that the proposed method can generate smooth segmentation curves, preserve segmentation details and obtain robust segmentation results of small objects. Compared to existing level set methods, this model is state-of-the-art in both accuracy and efficiency.
}
\keywords{Image segmentation, Variational level set method, Molecular beam epitaxy equation, Re-initialization-free,  Scalar auxiliary variable}
%% PACS codes here, in the form: \PACS code \sep code

%% MSC codes here, in the form: \MSC code \sep code
\pacs[MSC Classification]{54H30, 68U10, 65D18}

\maketitle
\section{Introduction}

Image segmentation is an important part of computer vision where the goal is to partition a given image into several regions that usually represent objects of interest. It has attracted significant attention for its wide applications in areas such as medical imaging, remote sensing, automatic driving robotics and object detection \cite{guo2021effective,lambert2024incorporation,li2010edge,falcone2020high,bowden2021active}.

Numerous image segmentation methods have been developed including thresholding-based models \cite{otsu1979threshold}, clustering-based models \cite{shi2000normalized}, watershed-based models \cite{vincent1991watersheds}, methods with graph theory \cite{tai2021multigrid}, level set theory \cite{CVmodel} and so on \cite{yang2019image,cardelino2013contrario}.
The level set method was initially introduced by Osher and Sethian and has developed into one of the most competitive segmentation methods for its ability to handle complex topological changes \cite{luo2022convex,gao2011level,liu2022two}. It uses a level set function to describe the difference in image properties between the inside and outside of the curve and evolves an initial curve into the boundary of the object of interest. %minimizing a energy functional.%
Compared to other segmentation methods, it has advantages such as not requiring parameterization, increased robustness to noise and enhanced flexibility \cite{MR4014923,osher1988levelset}.

One of the most representative level set methods is the geodesic active contour (GAC) model \cite{GACmodel}. The GAC model is utilizes the information of the image gradient and distance transformation. It can deal with complex gray distributions and textures. However, it is sensitive to the initial position of the curve and may result in poor segmentation for objects with weak boundaries.
Another typical model is the Chan-Vese (CV) model \cite{CVmodel}. It is based on the grayscale values and curvature. The CV model  is effective in addressing object segmentation problems with complex shapes, but it is only applicable to images with uniform regions.
To overcome the challenges posed by inhomogeneous regions, Li et al. proposed a region-scalable fitting (RSF) energy model \cite{li2008RSF}. The fidelity terms of this model are derived from local intensity information, allowing it to effectively handle segmentation in inhomogeneous regions.
However, similar to other level set models, the RSF model is sensitive to the initial curve, often resulting in local minimization and the segmentation results can be significantly affected by image noise.

The above methods heavily rely on the initial curve selection for segmentation and lack smooth control over the level set function. As a result, in recent years, several fourth-order segmentation models have been developed to overcome these limitations and address the shortcomings of the second-order models. For instance, Gao and Bertozzi proposed a new active contour model for segmentation based on the Chan-Vese model \cite{gao2011level}. It effectively captures sharp features, like object corners, often smoothed by regularization terms in conventional approaches. Yang et al. explored a model for image segmentation using the Cahn-Hilliard equation, which stands out for its ability to interpolate missing contours along wide gaps, resulting in meaningful object boundaries \cite{yang2019image}. While Falcone suggested a high-order accurate scheme for image segmentation using the level-set method \cite{falcone2020high}. The scheme improves stability and achieves a more precise approximation by modifying the curve evolution velocity. It combines a monotone scheme and a high-order scheme using a filter function for automatic adaptation based on solution regularity.

In summary, the fourth-order segmentation model has the advantages of reducing the sensitivity of initial curve selection, improving the control of level set function, enhancing contour regularization and superior performance in complex segmentation scenarios. These improvements make them a promising choice for various image segmentation applications.

To obtain more accurate and stable segmentation method, it is crucial to keep the level set function remains as a directed distance function relative to the desired evolving interface. This property is known as the signed distance function \cite{MR3001147}.
However,  during the evolution process, the level set function often fails to maintain the property of the signed distance function. This can result in the degeneration of the evolution equation or the unbounded gradients of the level set function, leading to segmentation failure \cite{osher1988levelset}.
To overcome this challenge, one possible approach is to carefully select the initial contour, such that it corresponds to the signed distance function to an initial curve encompassing the objects.  This can be achieved by using the signed distance function as the initial condition at the starting time \cite{osher2004levelset}. Additionally, re-initialization techniques can be employed to ensure the curve remains smooth and close to the signed distance function  \cite{chopp1991computing,peng1999pde,sussman1999efficient}. Unfortunately, re-initialization is complex to implement in practice and may generate undesirable side effects. These can impact the subsequent evolution process and have the risk of preventing new zero curves from emerging, which can lead to undesirable results for image segmentation, such as failures in detecting the interior boundary.

The variational level set method allows for controlling the evolution process by adjusting the fidelity and regularization terms.  This approach can maintain the signed function property without the need for explicit re-initialization steps, which is known as the re-initialization-free method.

Estellers et al. \cite{estellers2012efficient} proposed the $l^1$ optimization model, which minimizing the sum of the absolute values of the differences between the current and updated distance functions at each time step to preserve distance functions. It is computationally efficient due to the utilization of linear programming techniques.
Li et al. developed a series work on re-initialization free method by using the distance regularization term which constrain the gradient of the geometric active curves.
They first proposed DR1 regularization term \cite{li2005DR1} which limits the gradient of the curves to approach $1$. It ensures the level set function remains smooth and well-behaved during the evolution process, thereby improving the accuracy and stability of the level set method. This is especially important for handling complex and irregular shapes. But, this method relies on the selection of parameters, such as the regularization parameter and the time step. Furthermore, it is mainly designed for binary image segmentation and may not be suitable for other types of problems, such as multi-phase segmentation or shape reconstruction. To address these problems, Li et al. introduced the double-well potential distance regularization term (DR2) \cite{li2010distance}. This regularization term offers a solution that can be easily adapted to handle various types of image segmentation tasks, including binary, multi-phase, and object tracking.
In addition, Xie  et al. \cite{xie2009diffusion} proposed a regularization term to minimize steep surfaces and smooth the zero level set. This method can deal with the initialization dependency problem that commonly appears in variational level methods. but the segmentation results may not be satisfactory for images with a large amount of details.
Zhang et al. summarized the relevant methods of distance regularization and proposed a variation model derived from a reaction-diffusion model \cite{zhang2012rd}, which can smooth the evolving curve and prevent it from becoming too jagged or too irregular. Specially, the segmentation process can start from an arbitrary initial curve or surface that is placed inside or outside the object of interest.%

In conclusion, the above methods and their variants \cite{li2011level,li2022image} can avoid re-initialization and improve the accuracy of the segmentation results. However, they may still face challenges in handling complex images with a large amount of details.

Moreover, the artificial distance regularization term can keep the signed distance property but lose the advantage of controlling the evolution process by real physical principles. Can we improve the segmentation effect by constructing the distance regularization term using the equation with real physical meaning? For this purpose, we propose a high order level set variational segmentation method \cite{slope2000} by using the molecular beam epitaxial (MBE) equation to limit the evolution of the level set function.
The MBE is a technique used to deposit thin films of atoms or molecules onto a substrate to create semiconductor devices and other advanced materials.
This MBE regularization term we proposed has the following two advantages. First, the non-equilibrium term of the MBE equation forces the gradient of level set function to be  $1$, which avoids the need for re-initialization. Secondly, the surface diffusion current term, which represents the smoothness constraint of the crystal growth in the MBE process, regulates the smoothness of the segmented curve.

The major contributions of our work are threefold:

\begin{itemize}
  \item We propose a high order segmentation method based on the MBE equation. To the best of our knowledge, we are the first to apply MBE to image processing tasks.  This idea is reasonable and interesting as MBE is a typical phase-field model with complete physical meaning, which ensures that the evolution process is smooth and controllable, avoiding the problem of insufficiently smooth evolution caused by artificial distance regularization such as DR1 and DR2. It is less sensitive to the initial curve selection, provides better control over the level set function, improves contour regularization, and performs well in complex segmentation scenarios. These improvements make it a promising option for various image segmentation applications.
  \item %To overcome the numerical difficulty cause by the high order term, we derive the scalar auxiliary variable (SAV) scheme coupled with the fast Fourier transform (FFT) for the proposed model. The scheme is unconditionally first-order energy stable. Compared with the traditional semi-implicit and semi-explicit finite difference scheme \cite{li2003withoutslope}, it can improve the computational efficiency significantly, ensuring that the scheme is excellent in both accuracy and efficiency.
     We derive the scalar auxiliary variable (SAV) scheme \cite{shen2019new,cheng2019highly,yao2020total} for the proposed model and solve it by fast Fourier transform (FFT) and inverse fast Fourier transform(IFFT). Compared with the traditional semi-implicit semi-explicit finite difference scheme \cite{li2003withoutslope}, SAV scheme is unconditional first-order energy stable, which significantly improves the computational efficiency.
  \item We design various numerical experiments to show the superiorities of the proposed model. The results show that the MBE regularization term can control the range of the gradient of the level set function and obtain more stable evolution process. The segmentation accuracy is independence of initial curve and the MBE model tends to generate smooth segmentation boundaries and have good anti-noise performance. In addition, the model also works for multi-target and small target image segmentation.
\end{itemize}

The paper is structured as follows. Section \ref{sec:motivation} presents the motivation and preliminary knowledge. In section \ref{sec:method}, we introduce the MBE regularization term and propose two related models: MBE-GAC and MBE-RSF. Section \ref{sec:numerical} outlines the SAV scheme coupled with the FFT method for our model and presents a first-order unconditional energy stability theorem. Section \ref{sec:exresults} presents numerical experiments to demonstrate the advantages of the proposed MBE model. Finally, section \ref{sec:conclusions} provides the conclusion.
\section{Motivation}
\label{sec:motivation}
Although the level set method has the advantages such as high accuracy, local adaptivities and the ability to handle complex boundaries, its evolution process may be unstable. How to improve the accuracy and stability of the evolution is one of the major challenge in the level set method, which is also the main contribution of this paper. We will demonstrate the detailed motivation of this paper from the aspect of the level set method, the distance regularization term and the property of the molecular beam epitaxial (MBE) equation.
\subsection{Level set segmentation method and re-initialization}
 Let $\Omega\subset\mathbb{R}^2$ be the image domain, $C(t)=(x(t),y(t))$ be a closed curve evolved as follows
\begin{equation}\label{eq:ev-c}
\begin{cases}
\dfrac{{\partial C}}{{\partial t}} = \alpha (x,y,t)\mathbf{T} + \beta (x,y,t)\mathbf{N}, & (x,y,t)\in \Omega \times (0,T],
\\
C(0)=C_{0}, & (x,y) \in \Omega,
\end{cases}
\end{equation}
where $\alpha$ is tangential velocity, $\beta$ is normal velocity, $\mathbf{T}$ and $\mathbf{N}$ represent the unit tangent vector and unit normal vector, respectively, and $C_0$ is the initial value.

On a local scale, by representing $y=\gamma(x(t),t)$, one has $C(t)=(x(t),\gamma(x,t))$ and $ C_x=(1,\gamma_x)$,
\begin{equation*}
\mathbf{T}=\frac{(1,\gamma_x)}{\sqrt {1 + {\gamma _x}^2}},\quad
\mathbf{N}=\frac{(-\gamma_x,1)}{\sqrt {1 + {\gamma _x}^2}}.
\end{equation*}
When $C$ evolves according to \eqref{eq:ev-c}, $x$ and $y$ move according to the following equations
\begin{align*}
& \frac{{dy}}{{dt}} = \alpha \frac{{{\gamma _x}}}{{\sqrt {1 + {\gamma _x}^2} }} + \beta \frac{1}{{\sqrt {1 + {\gamma _x}^2} }}
\\
&\frac{{dx}}{{dt}} = \alpha \frac{1}{{\sqrt {1 + {\gamma _x}^2} }} + \beta \frac{{ - {\gamma _x}}}{{\sqrt {1 + {\gamma _x}^2} }}.
\end{align*}
Considering $$\frac{{dy}}{{dt}} = {\gamma _x}\frac{{dx}}{{dt}} + {\gamma _t},$$
then
\begin{equation*}
{\gamma _t}= \frac{dy}{dt} - {\gamma _x}\frac{dx}{dt}=\beta \sqrt {1 + {\gamma _x}^2}.
\end{equation*}
So the evolution of \eqref{eq:ev-c} can be simplified as
\begin{equation}\label{ev-c-final}
\frac{\partial{C}}{\partial{t}}=\beta{\mathbf{N}},
\end{equation}
which shows the change in the geometric shape of a curve only depends on its normal component \cite{aubert2006mathematical}.
Solving \eqref{ev-c-final} directly is difficult. To address this issue, Osher and Sethian proposed the level set method \cite{osher1988levelset}. It represent $C$ as a zero level set of a higher dimensional function $\phi:\Omega\rightarrow \mathbb{R}$, which is
\begin{equation}
\label{230711-1}
C(t) = \{(x,y)|\phi(x,y,t)=0\}.
\end{equation}
In \eqref{230711-1}, $\phi(x,y,t)$ is determined by solving the nonlinear equation,
\begin{equation}\label{eq:level set}
\begin{cases}
\dfrac{\partial \phi }{\partial t} =F\left | \nabla\phi \right |,& (x,y,t)\in \Omega \times (0,T],
\\
\phi(x,y,0)=\phi_{0}(x,y), & (x,y) \in \Omega,
\end{cases}
\end{equation}
where $\nabla \phi$ denotes the spatial gradient of $\phi$, $F$ is related to normal component of the velocity, which does not change sign during the evolution and the orientation depends on the type of evolution (outward for an expansion and inward for a shrinking), $\phi_0$ satisfies
 \begin{equation*}
\left\{\begin{array}{ll}
\phi_{0}(x, y)<0, & (x, y) \in \Omega_{0}, \\
\phi_{0}(x, y)=0, & (x, y) \in C_{0}, \\
\phi_{0}(x, y)>0, & (x, y) \in \mathbb{R}^{2} \backslash \bar{\Omega}_{0},
\end{array}\right.
\end{equation*}
and $\Omega_0$ is the region delimited by $C_0$.

Typically, the evolution governed by equation \eqref{eq:level set} can be unstable. The gradient of $\phi$ can become unbounded or zero, which prevents $\phi$ from being preserved as a signed distance function and leads segmentation failure. To address this issue, one popular method is re-initialization \cite{chopp1991computing}, which consists the following two parts.
\begin{itemize}
\item[(i)]{Choose a good initial value:
A good choice of $\phi_0$ is the signed distance function to an initial given curve $C_0$ surrounding the objects, which is given by
\begin{equation*}
\left\{\begin{array}{ll}
\phi_{0}(x,y)=+d((x,y),C_0), &  (x, y) \in \Omega_{0}, \\
\phi_{0}(x,y)=-d((x,y),C_0), & (x, y) \in \mathbb{R}^{2} \backslash \bar{\Omega}_{0},
\end{array}\right.
\end{equation*}
where $d((x,y),C_0)$ is the Euclidean distance between the point $(x,y)$ and the curve $C_{0}$. }
\item[(ii)]{Re-initialization: When the gradient of $\phi$ tends to become irregular, one can restart the evolution with a new initial value $\phi_0$, which is the steady solution of the Eikonal equation
\begin{equation}\label{eq:re-init-pde}
\begin{cases}
	 \dfrac{\partial \phi}{\partial t} = \mbox{sign}(\phi(x,y,\tilde{t}_{0}))(1 - |\nabla \phi|),& (x,y,t)\in \Omega \times (0,T],
\\  \phi(x,y, 0)=\phi(x,y, \tilde{t}_{0}),& (x,y)\in \Omega,
 \end{cases}
\end{equation}
where $\tilde{t_{0}}$ is the moment when re-initialization needs to be applied, $\phi(x,y,\tilde{t}_{0})$ is the currently updated level set function. The steady solution of \eqref{eq:re-init-pde} satisfies $|\nabla \phi| = 1$, which can ensure $\phi$ satisfies the property of the signed distance function over a period of time.}
\end{itemize}

However, in real implementation, the re-initialization process faces some difficulties: (i) The re-initialization process needs to be done on a regular basis. The rule of when and how to use it is unknown. (ii) The equation \eqref{eq:re-init-pde} is a nonlinear hyperbolic equation that presents challenges in dealing with discontinuity and nondifferentiability. When the equation is transformed into a time-dependent problem, the CFL condition \cite{kamga2006cfl} for finite propagation velocity and time stability may require many time steps for the solution to converge to the entire domain. When considered as a stable boundary value problem, the existence of discontinuity can seriously affect the design of upwind difference schemes and some fast algorithms. Although the emergence of fast sweeping method \cite{zhao2005fast} provides a good solution for this kind of problem, it still faces development difficulties in practical applications.

\subsection{ Distance regularization term for re-initialization-free} It can be observed that the intention of re-initialization method is to control $|\nabla \phi|$ within a certain range during the evolution process and thus guarantee the stability of the evolution. Another natural idea is to constrain $|\nabla \phi|$ by the regularization term under the framework of the variational level set method, which formulates the segmentation problem as an energy minimization problem and incorporates prior knowledge \cite{wali2023level} or constraints to improve the accuracy and stability \cite{li2008RSF,chen2002using,falcone2020high}.

To illustrate our motivation, we briefly recall some representative distance regularization methods. For simplicity, we assume that $x \in \Omega \subset \mathbb{R}^2$ in the following.
%The regularization term $\mathcal{R}(\phi)$ constrains the function $\phi$, which avoids the problems of re-initialization during process of the level set function evolution.
In \cite{li2005DR1}, Li et al. proposed DR1 regularization term
\begin{equation}\label{eq:dr1-f}
	\mathcal{R}_1(\phi) = \int_\Omega r_1(|\nabla\phi|)\mbox{d}x,
\end{equation}
where $r_1(s) =  \dfrac{1}{2}(s-1)^2$. The gradient flow of \eqref{eq:dr1-f} is
\begin{equation}\label{eq:drlse1-e}
	\frac{\partial \phi}{\partial t} = \mbox{div}(d_1(|\nabla\phi|)\nabla\phi),
\end{equation}
where $d_1(s)= 1-1/s.$ The evolution equation (2.7) has an undesirable side effect. When $|\nabla \phi| \rightarrow 0$, the diffusion rate $(1-\frac{1}{|\nabla\phi|}) \rightarrow -\infty$ (Fig.\ref{fig:drlse-plot}). This causes strong backward diffusion, resulting in oscillations of $\phi$. To avoid this, Li et al. proposed the double-well regularization term (DR2)
%\textcolor{red}{TBC. The express of the DR regularization}
\begin{equation}\label{eq:drlse2-f}
	\mathcal{R}_2(\phi) = \int_\Omega r_2(|\nabla\phi|)\mbox{d}x,
\end{equation}
where
\begin{equation*}
r_2(s) = \left\{\begin{array}{ll}
	(1 - \mbox{cos}(2\pi s))/{(2\pi)^2}, & \mbox{if} \hspace{0.1cm} s\le1,\\
	(s-1)^2/2, & \mbox{if} \hspace{0.1cm}  s>1.
\end{array}\right.
\end{equation*}
The corresponding gradient flow is
\begin{equation}\label{eq:drlse2-e}
	\frac{\partial\phi}{\partial t} = \mbox{div}(d_2(|\nabla\phi|)\nabla\phi),
\end{equation}
where
\begin{equation*}
d_2(s) = \left\{\begin{array}{ll}
	\sin(2\pi s)/(2\pi s), & \mbox{if} \hspace{0.1cm} s\le1,\\
	1-1/s, & \mbox{if} \hspace{0.1cm}  s>1.
\end{array}\right.
\end{equation*}
Different from the DR1, the DR2 constrains $|\nabla \phi|$ to $0$ or $1$, the coefficient of \eqref{eq:drlse2-e} approaches $1$ when $|\nabla\phi|$ approaches $0$ (Fig. \ref{fig:drlse-plot}), which modify the oscillation of the DR1 when $|\nabla \phi|$ is very small. At the same time, DR2 could cause the desired sign distance function around zero level set because of the design of function $r_2$ at $|\nabla\phi|>1/2$.
\begin{figure}
	\centering
	\includegraphics[scale=0.6]{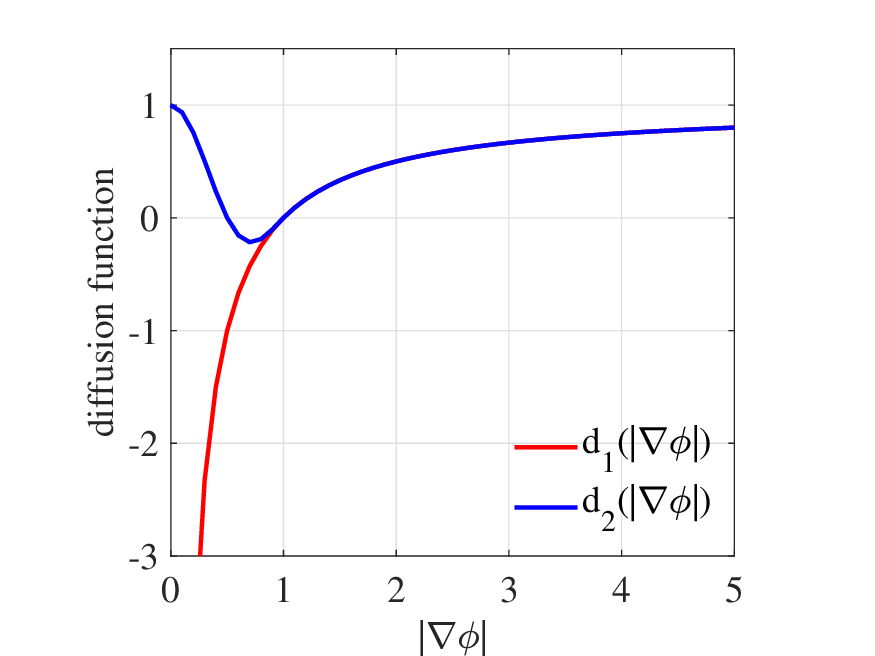}
	\caption{The plots of the diffusion coefficient of the DR1, DR2 of $|\nabla\phi|$.}
    \label{fig:drlse-plot}
\end{figure}

Despite the success of the re-initialization-free DR method, there are still some issues that need to be addressed. According to the experimental results, the gradient amplitude of the level set function $\phi$ does not approach a value close enough to $1$. Additionally, both regularization term rely on first-order derivatives. The arc length term constrains $|\nabla \phi|$ to approach $0$ near the zero level set region, while the distance regularization term enforces $|\nabla \phi|$ to approach $1$. Balancing these two terms can be a challenging task.
% Moreover, there is a lack of theoretical analysis  for the extension of the DR method to the level set method.
To address these issues, we propose higher order regularization term based on MBE equation. This approach can provide a theoretical and physical foundation for the development of the model.
\subsection{Molecular Beam Epitaxy regularization term}
The molecular beam epitaxy (MBE) technique is renowned for its precision in growing thin solid films, enabling the formation of monolayer-thin interfaces and atomically flat surfaces \cite{slope2000}.
The MBE equation is a fourth-order partial differential equation that ensures the level set function's gradient magnitude approaches $1$ while maintaining the desired signed distance function around the zero level set.

Consider the MBE equation with slop selection, which represents the epitaxial growth of thin films, which can be derived from the following functional
\begin{equation}\label{eq:MBE-functional}
	\mathcal{E}_{MBE}(\phi) = \int_{\Omega} \frac{\alpha}{2}|\Delta\phi|^2  + \frac{1}{4} (|\nabla \phi|^2 - 1)^2 \mbox{d}x,
\end{equation}
where $\phi=\phi(x,t)$ is a scaled height function of a thin film in a co-moving frame.

The MBE model usually has the physical characteristics of unconditional energy stability and unique solution, and the slope can be selected according to the actual research needs. $\mathcal{E}_{MBE}(\phi)$ as the effect free energy, is an example of energy functional of $\phi$ in the strain gradient theory for structural phase transitions in solids. The second term of $\mathcal{E}_{MBE}(\phi)$ clearly shows the non-equilibrium term of \eqref{eq:MBE-functional} forces $|\nabla \phi|$ approach $1$.

We compare the MBE regularization term with different regularization terms under the same conditions. Experiments show that the range of the gradient of level set function of MBE regularization tends to be small than that of other regularization terms. See for instance, the MBE regularization term is steadily approaching $1$ within a range from $0$ to $5$, while the range of DR1 becomes from $0$ to $16$, see Fig. \ref{fig:text-alpha}.

\begin{figure}[htbp]
	\centerline{
		\subfigcapskip=-3pt
        %\subfigure[]{\includegraphics[trim={3cm 1cm 3cm 1cm},clip, width=0.25\textwidth]{fig/disk_inhome.png}}
		\subfigure[]{\includegraphics[height=0.25\textwidth, width=0.35\textwidth]{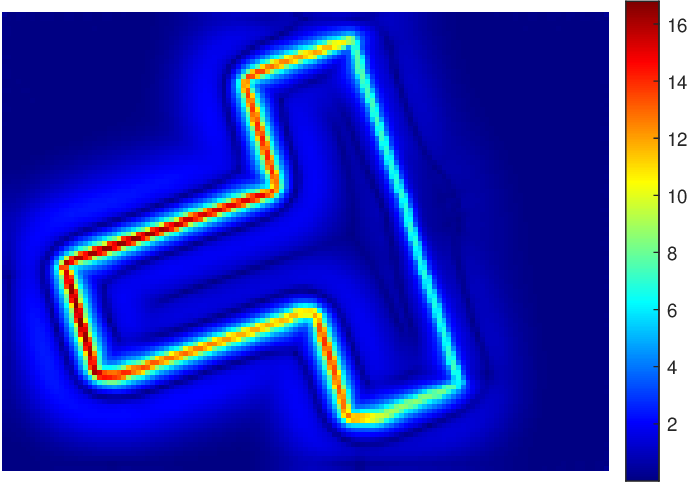}}\quad
		\subfigure[]{\includegraphics[trim={0cm 0cm 0cm 0cm},clip, width=0.35\textwidth]{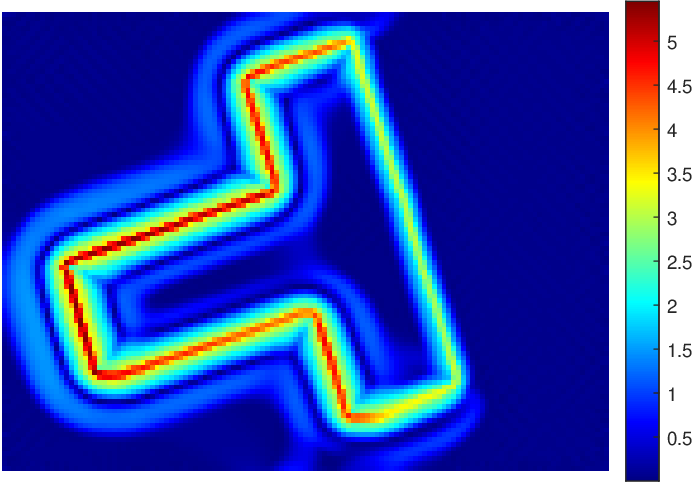}}
	}
	\caption{The $|\nabla \phi|$ of segmentation result of different regularization term: (a)DR1 (b)MBE}\label{fig:text-alpha}
\end{figure}

Furthermore, the MBE equation possesses properties that correspond to various image features.  For instance, the biharmonic term in the equation contributes to smoothness, reducing noise and irregularities on the thin film surface.  This leads to smoother images that are more resistant to noise.  Additionally, the non-equilibrium diffusion term treats the thin film surface as a distance function, which preserves its shape and position, similar to preserving the shape and position of image edges.  This makes it useful for segmenting objects with complex shapes while retaining image information.  Finally, the parameters of the equation can be adjusted to achieve the best segmentation effect that meets practical needs.

In summary, the physical principle of the MBE equation is aligned with the objective of re-initializing the level set function.  The embedding function (i.e., the film growth height function) is neither too flat nor too steep.  Therefore, we apply the MBE equation to the level set method as the regularization term of the level set segmentation model to avoid the need for re-initialization and regulate the smooth of the segmented curve.
\section{Main method}
\label{sec:method}
In this section, we present a general framework for variational segmentation level set methods integrated with the MBE distance regularization term.
Consider the general form of variational level set segmentation model
\begin{equation}\label{eq:seg-model}
	\mathcal{E}(\phi) = \mathcal{S}(\phi,I) + \mu\mathcal{R}(\phi) + \nu \mathcal{L}(\phi),
\end{equation}
where $I:\Omega\rightarrow \mathbb{R}$ is a given vector valued image, $\mathcal{S}(\phi,I)$ is called segmentation term or data fidelity term, $\mathcal{R}(\phi)$ is the regularization term, $\mathcal{L}(\phi)$ is the arc length term and $\nu$, $\mu$ are the parameters to adjust the length and smoothness of the curve.

Next, we propose the novel MBE regularization term and present two relevant segmentation models. The regularization term we propose in \eqref{eq:seg-model} is designed to avoid re-initialization of the segmentation procedure and keep the evolution process accurate and stable. To demonstrate this, we apply the MBE regularization term to the geodesic active contour (GAC) model \cite{GACmodel} and the region-scalable fitting (RSF) model \cite{li2008RSF} and illustrate the superiority of using the MBE regularization term.

\subsection{The MBE regularization term}
Denote the MBE functional as
\begin{equation}
\label{230711-2}
\mathcal{R}_{MBE} =  \int_{\Omega} \frac{\alpha}{2}|\Delta\phi|^2  + \frac{1}{4} (|\nabla \phi|^2 - 1)^2 \mbox{d}x.
\end{equation}
%\tcr{This re-initialization-free process represents a significant advancement in segmentation methodology. ——put in conclusion}
The $L^2$ gradient flow of \eqref{230711-2} is
\begin{equation}\label{eq:mbe-eq}
\begin{cases}
\dfrac{\partial \phi}{\partial t} = -\alpha \Delta^2\phi + \mbox{div}((|\nabla \phi|^2 - 1)\nabla \phi), & \quad  (x,t)\in\Omega\times(0,T],\\
\phi(x,0)=\phi_0(x), & \quad  x\in \Omega,
\end{cases}
\end{equation}
supplemented with suitable boundary conditions,
which is the MBE equation with slop selection.

The first term in the right side of equation \eqref{eq:mbe-eq} is the equilibrium term, which is the surface diffusion current that describes surface diffusion driven by surface tension. This term represents the equilibrium situation, $\alpha$ is the surface diffusion constant. The second term (a nonlinear second-order term) in the right side of equation \eqref{eq:mbe-eq} is  non-equilibrium diffusion current that depends on the local slope $|\nabla \phi|$.  This term is caused by Ehrlich-Schwoebel instability, which selects the slope of film surface. The diffusion term of MBE equation is nonlinear and can describe the nonlinear behaviors of substance diffusion, aggregation, and recrystallization on thin film surfaces. For small slopes, this current is positive, making initially flat interfaces unstable. However, the current vanishes where the slope is $1$, which is the preferred slope \cite{slope2000}. Actually, the physical principles of the MBE equation are consistent with the constraint that the level set function is a distance function. Essentially, both require level set function (thin film growth height function) $\phi$ is neither too flat nor too sharp, meaning $|\nabla \phi|$ should approach $1$ as much as possible.

To illustrate the necessity of fourth-order terms in the MBE regularization term, we analyze the following two cases.
On the one hand, when $\alpha=0$, the diffusion rate is proportional to $(|\nabla \phi|^2 - 1)$. If $|\nabla \phi|>1$, the diffusion rate is positive and the effect is similar to usual diffusion, which makes the level set function more even and reduces $|\nabla \phi|$. However, if $|\nabla \phi|<1$, the diffusion term has a reverse effect and may increase the gradient. This can result in an ill-posed equation, where $|\nabla \phi|$ cannot be well-controlled within a stable range. On the other hand, when $\alpha\neq0$, the term $-\alpha \Delta^2\phi$ acts as a type of viscous term that provides a well-controlled mechanism for diffusion. Specifically, the fourth-order regularization term can penalize changes in second-order curvature, resulting in a segmentation result with continuity and smoothness. This regularization term can reduce noise and discontinuity in the segmentation results, while preserving more details.

Therefore, the MBE equation imposes a stronger smoothness requirement than isotropic and anisotropic diffusion models, thus eliminating the staircase effect. It can also be combined with a $L^2$ fidelity term to generate piecewise linear solutions \cite{gao2011level}.
In addition, the MBE equation predicts that mound-like or pyramid structures in the surface profile will tend to have a uniform, constant mound slope \cite{slope2000,li2003withoutslope,ortiz1999continuum}, which ensuring the smooth of the evolution.

Overall, by incorporating this higher order regularization framework, the need for re-initialization procedures can be eliminated and the evolution will be more accurate and stable.
Meanwhile, the MBE equation satisfies the energy dissipation law, which facilitates the acquisition of numerical schemes. The unconditional stability of the energy dissipation maintenance scheme allows for flexibility in model processing.
\subsection{Two applications of MBE regularization method}
\subsubsection{The MBE-GAC model}
The GAC model \cite{GACmodel} is a popular image segmentation method that incorporates a geodesic distance term, which is a commonly used data fidelity term.

Let  $\delta(\cdot)$ be the Dirac function, $H(\cdot)$ be the Heaviside function \cite{weisstein2002heaviside},
\begin{equation}
\label{230713-1}
	H(\phi) = \left\{\begin{array}{ll}
		1, & \mbox{if} \hspace{0.1cm} \phi\ge0,\\
		0, & \mbox{if} \hspace{0.1cm}  \phi<0,
	\end{array}\right.
\end{equation}
and $g = \dfrac{1}{{1 + {{\left| {\nabla {G_\sigma }*I} \right|}^2}}}$ is the edge detection function, where ${G_\sigma }$ is Gaussian kernel.
Rewriting GAC model as
$$\mathcal{S}_{GAC}(\phi)=\lambda\int_{\Omega}g \delta(\phi)|\nabla \phi|\mbox{d}x+\gamma \int_{\Omega}g H(-\phi)\mbox{d}x,$$
we propose the MBE-GAC model
\begin{equation*}
\mathcal{E}_{MBE-GAC}(\phi) = \mathcal{S}_{GAC}(\phi) + \mu\mathcal{R}_{MBE}(\phi).
\end{equation*}
Denoting $F_{GAC}(\phi) = \lambda \delta(\phi) \operatorname{div}\left(g \frac{\nabla \phi}{|\nabla \phi|}\right)+\gamma g \delta(\phi)$, one derives the corresponding $L^2$ gradient flow as
\begin{equation} \label{230714-2}
\begin{cases}
\dfrac{\partial \phi}{\partial t} = \mu(-\alpha \Delta^2\phi + \mbox{div}((|\nabla \phi|^2 - 1)\nabla \phi)) + F_{GAC}(\phi), & \   (x,t)\in\Omega\times(0,T],\\
\phi(x,0)=\phi_0(x), & \quad  x\in \Omega,
\end{cases}
\end{equation}
supplemented with suitable boundary conditions. The energy functional corresponding to the GAC model can actually be regarded as the arc length with weights, so the MBE-GAC model contains only data fidelity term and regularization term.

The original GAC model is independent of the curve's parameters but is highly sensitive to the selection of the initial position. It often fails when applied to the segmentation of objects with non-convex boundaries. Additionally, the original GAC model is a second-order model that imposes constraints only on the first derivative of the level set function. By incorporating the MBE regularization term, the new model applies higher-order constraints to the level set function, allowing it to better capture and maintain complex shapes and boundaries.

\subsubsection{The MBE-RSF model}
The RSF model has achieved good performance in handling the image of heterogeneous regions. It is a two-phase segmentation method that uses two fitting functions, denoted by $f_1$ and $f_2$, to approximate the image intensity on both sides of the object boundary.  The data fidelity term is then formulated based on the difference between the image intensity and the fitting functions inside the boundary of object, forcing the evolution towards the object boundary.

Let $C$ is the closed curve separating the region $\Omega$, $\Omega_1 = \mbox{inside}(C)$, $\Omega_2 = \mbox{outside}(C)$, the locally fitting function can be represented as
\begin{equation*}
	\mathcal{S}_x(C, f_1, f_2) = \sum_{i=1}^{2} \lambda_i \int_{\Omega_i}K_\sigma({x}-{y})|I(y)-f_i(x)|^2\mbox{d}x,
\end{equation*}
where $\lambda_1$, $\lambda_2$ are positive constants and $K_\sigma(u) = \dfrac{1}{(2\pi)^{n/2}\sigma^2}\exp(-|u|^2/2\sigma^2)$ is the kernel function.
Integrating $\mathcal{S}_{x}(C, f_1, f_2)$ over the image domain $\Omega$ and by \cite{li2008RSF}, based on level set method, one derives RSF fidelity term
%\begin{small}
\begin{equation*}\label{eq:rsf-variational}
	\mathcal{S}_{RSF}(\phi, f_1, f_2) =
\sum_{i=1}^{2}\lambda_i\int_{\Omega}\int_{\Omega}K_\sigma(x-y)|I(y)-f_i(x)|^2M_i(\phi(y))\mbox{d}y\mbox{d}x,
\end{equation*}
%\end{small}
where $M_1(\phi) = H(\phi)$ and $M_2(\phi) = 1 - H(\phi)$.

Different from MBE-GAC model, when using the RSF model, it is necessary to add the arc length term to eliminate the effect of noise.
Usually, $\mathcal{L}(\phi)$ measures the arc length of the zero level curve which is defined by
\begin{equation}\label{eq:arc-length}
	\mathcal{L}(\phi) = \int_{\Omega} |\nabla H(\phi)|\mbox{d}x = \int_{\Omega} \delta(\phi)|\nabla \phi| \mbox{d}x.
\end{equation}
The arc length term penalizes the curvature of the curve, resulting in a smoother and more natural curve shape while suppressing local details and noise.

Overall, we derive the MBE-RSF model as
\begin{equation}\label{eq:rsf-mbe}
\mathcal{E}_{MBE-RSF}(\phi, f_1, f_2) = \mathcal{S}_{RSF} + \mu\mathcal{R}_{MBE} + \nu \mathcal{L}.
\end{equation}
Consequently, set
$$F_{RSF}(\phi) =  -\delta(\phi) (\lambda_1e_1 - \lambda_2e_2)  + \nu \delta(\phi)\mbox{div}(\dfrac{\nabla\phi}{|\nabla\phi|}).$$
The corresponding $L^2$ gradient flow of \eqref{eq:rsf-mbe} is
\begin{equation} \label{eq:rsf-mbe-el}
\begin{cases}
\dfrac{\partial\phi}{\partial t} = \mu(-\alpha \Delta^2\phi + \mbox{div}((|\nabla \phi|^2 - 1)\nabla \phi)) + F_{RSF}(\phi), & \   (x,t)\in\Omega\times(0,T],\\
\phi(x,0)=\phi_0(x), & \quad  x\in \Omega,
\end{cases}
\end{equation}
supplemented with suitable boundary conditions, where
$$e_i = \int_{\Omega}K_\sigma(y-x)|I(x)-f_i(y)|^2\mbox{d}{y}, \quad i=1,2,$$
and
$$f_i(x) = \dfrac{K_\sigma(x)*[M_i(\phi(x))I(x)]}{K_\sigma*M_i(\phi(x))}, \quad i=1,2.$$

The original RSF model uses the DR1 regularization term, which often fails to segment images with noise accurately. Replacing the DR1 regularization term with the MBE regularization term makes the model more robust to noise, resulting in more accurate segmentation results.

\section{Numerical algorithms}\label{sec:numerical} The numerical implementation is carried out by solving the gradient descent flow of the model \eqref{eq:seg-model}. In general, for high-order nonlinear problems, the convergence condition of explicit scheme is harsh, resulting in low computational efficiency. The implicit scheme is stable, but the solution process is challenging as it involves solving a nonlinear steady equation at each time step. Usually, semi-implicit and semi-explicit finite difference scheme is widely used in such problems, see for instance \cite{li2022semi,ethier2008semi}.

Let $h=1$ be the space grid size, $\tau$ be the times step, and denote
$$x_{i} = ih, i=1,2,\cdots, M,  \quad y_{j}=jh, j=1,2,\cdots,N, \quad
t_{n}=n\tau, n=0,1,\cdots $$
where $M\times N$ is the size of the image support. For simplicity, we denote the segmentation term and arc length term in model \eqref{230714-2} and \eqref{eq:rsf-mbe-el} as $\mbox{T}_{seg}(\phi)$. We first generalize the semi-implicit and semi-explicit discretization given in \cite{li2003withoutslope} to the proposed MBE-based model as
\begin{equation}\label{eq:temporal-scheme}
\begin{aligned}
	\frac{\phi_{i,j}^{n+1} - \phi_{i,j}^n}{\tau} &+ \mu\left(\frac{3}{4}\alpha\Delta^2\phi_{i,j}^{n+1} + \Delta\phi_{i,j}^{n+1}\right)
	\\
=& \mu \left(-\frac{\alpha}{4}\Delta^2\phi_{i,j}^n + \mbox{div}(|\nabla\phi_{i,j}^n|^2\nabla\phi_{i,j}^n)\right) + \mbox{T}_{seg}(\phi_{i,j}^n),
\end{aligned}
\end{equation}
where $\phi_{i,j}^n = \phi(x_i, y_j, \tau_{n})$ denotes the discretization of $\phi$.
Experiments have revealed the accuracy of the scheme \eqref{eq:temporal-scheme}.
However, due to the high order of the linear term and the complexity of the nonlinear terms, the time step $\tau$ should be small enough to ensure the stability, which limits the applications of scheme \eqref{eq:temporal-scheme} in real implementation by its low efficiency.

The scalar auxiliary variable (SAV) method \cite{shen2019new} is unconditionally energy stable for solving gradient flows with no restrictions on the specific form of the nonlinear terms. By introducing an auxiliary variable that do not depend on spatial variables, the SAV scheme for the proposed MBE segmentation model only involves solving two linear equations with constant coefficients and the accuracy does not depend on the choice of the time step $\tau$. Besides, the resulting linear equations of our model can be easily solved by the fast Fourier transform (FFT) algorithm \cite{1984FFT}. Thus, in this section, we derive the scalar auxiliary variable (SAV) for the proposed segmentation model. Compared with the traditional semi-implicit and semi-explicit scheme \eqref{eq:temporal-scheme}, the SAV scheme for the MBE segmentation model is easier to implement and can significantly improve the computational efficiency.

\iffalse
Therefore, in this section, we derive the scalar auxiliary variable (SAV) for the proposed segmentation model. This involves introducing a auxiliary variable that do not depend on spatial variables, resulting in a completely discrete scheme with only linear equations and constant coefficients, which can be easily solved by the fast Fourier transform (FFT) algorithm \cite{1984FFT}. Compared with the traditional semi-implicit and semi-explicit scheme \eqref{eq:temporal-scheme}, SAV scheme can improve the computational efficiency significantly.
\fi
\subsection{Scalar auxiliary variable (SAV) scheme}
As we know, the SAV is a new technique to construct time discretization schemes for a class of gradient flows driven by a bounded free energy functional $\mathcal{E}(\phi)$ from below. Obviously, in both MBE-GAC model and MBE-RSF model, there exists a positive constants ${C_{0}}$ such that $\mathcal{E}_{1}(\phi)\geq {C_{0}} >0$.
To illustrate the idea, we rewrite the variational level set model \eqref{eq:seg-model} in the  general form,
\begin{equation*}
\mathcal{E}(\phi)=\frac{1}{2}(\phi, \mathcal{L} \phi)+\mathcal{E}_{1}(\phi),
\end{equation*}
where $\mathcal {L}=\mu \alpha  \Delta^{2}$ is a symmetric non-negative linear operator independent of $\phi$, and $\mathcal{E}_{1}(\phi)$ contains other lower order operators in \eqref{eq:seg-model}.

Introduce a scalar auxiliary variable $r= \sqrt{\mathcal{E}_{1}(\phi)}$ and rewrite the $L^2$ gradient flow \eqref{230714-2} or \eqref{eq:rsf-mbe-el} as
\begin{subequations}
\begin{align}
& \frac{\partial \phi}{\partial t}= -\omega, \\
& \omega  =\mathcal{L}\phi+\frac{r}{\sqrt{\mathcal{E}_{1}(\phi)}}U(\phi), \\
& \frac{\mbox{d}r}{\mbox{d}t} =\frac{1}{2{\sqrt{\mathcal {E}_{1}(\phi)}}} \int_{\Omega}U\frac{\partial\phi}{\partial t}\mbox{d}x,
\end{align}
\end{subequations}

According to the standard procedure given in \cite{shen2019new, yao2020total},
taking the time step $\tau$, we derive the following first-order SAV scheme
\begin{subequations}\label{eq:sav}
\begin{align}
& \label{eq:sav1} \frac{\phi^{n+1}-\phi^{n}}{\tau}  =-\omega^{n+1}, \\
& \label{eq:sav2} \omega^{n+1}  =\mathcal{L}\phi^{n+1}+\frac{{r}^{n+1}}{\sqrt{\mathcal{E}_{1}(\phi^{n})}}U(\phi^{n}),\\
& \label{eq:sav3} \frac{{r}^{n+1}-r^{n}}{\tau}  = \frac{1}{2 \sqrt{\mathcal{E}_{1}(\phi^{n})}} \int_{\Omega}U(\phi^{n})  \frac{\phi^{n+1}-\phi^{n}}{\tau} \mbox{d} x.
\end{align}
\end{subequations}

A major advantage of SAV scheme is that it is easy to implement. In fact, subsisting \eqref{eq:sav2} and \eqref{eq:sav3} into \eqref{eq:sav1} and put the terms consisting $\phi^{n+1}$ on the left hand side, then
\begin{equation}\label{eq:savcom1}
\left(I+\tau \mathcal{L}\right) \phi^{n+1}+\frac{\tau}{2} b^{n} \left(b^{n},\phi^{n+1}\right) = c^n.
\end{equation}
where $b^{n}=U(\phi^{n})/ \sqrt{\mathcal{E}_{1}(\phi^{n})}$ and $c_{n} = \phi^{n}-\tau r^{n}  b^{n}+\frac{\tau}{2}(b^{n}, \phi^{n})b^{n}$ are know quantities.

To obtain $\phi^{n+1}$, denote $(I+\tau  \mathcal{L})$ by $A$. Multiplying \eqref{eq:savcom1} with $A^{-1}$ and taking the inner product with $b^n$, one finds
\begin{equation}\label{eq:savcom3}
(b^{n}, \phi^{n+1})=\frac{\left(b^{n},A^{-1} c^{n}\right)}{1+ \tau / 2 \left(b^{n}, A^{-1}b^{n} \right)}:=d^n.
\end{equation}
Combining with \eqref{eq:savcom1} and \eqref{eq:savcom3} gives
\begin{equation}\label{eq:savcom4}
\phi^{n+1}=A^{-1}\left(c^n-\frac{\tau}{2}b^{n}d^{n}\right).
\end{equation}

We sum up the above SAV scheme as \eqref{alg:SAV}.
\begin{algorithm}
\caption{SAV algorithm of MBE-RSF model}
\label{alg:SAV}
\begin{algorithmic}
\State{Input: $\phi_0$, $\tau$, iterNum, model parameters}
\State{For $n=1$ to iterNum }
\State{Update $\phi^{n+1}$ according the initial function $\phi_0$ }
\State{Calculate $b^n=U(\phi^{n})/ \sqrt{\mathcal{E}_{1}(\phi^{n})}$ and $c^n$(the righthand side of \eqref{eq:savcom1})}
\State{Let $A=I+\tau  \mathcal{L}$ and calculate $(b^n,\phi^{n+1})$ from \eqref{eq:savcom3}}
\State{Calculate $\phi^{n+1}$ from \eqref{eq:savcom4}}
\end{algorithmic}
\end{algorithm}

Then we have following theorem to guarantee the above scheme \eqref{eq:sav} is energy stable for the gradient flow \eqref{eq:rsf-mbe-el}.

\begin{theorem}
The SAV scheme \eqref{eq:sav} is first-order unconditionally energy stable in the sense that:
\begin{align*}
\frac{1}{\tau } & {\left(\tilde{\mathcal{E}}(\phi^{n+1}, r^{n+1})-\tilde{\mathcal{E}}(\phi^{n}, r^{n})\right) } \\
& +\frac{1}{\tau}(\frac{1}{2}\left(\phi^{n+1}-\phi^{n}, \mathcal{L}\left(\phi^{n+1}-\phi^{n}\right)\right)+\left(r^{n+1}-r^{n}\right)^{2})=-(\omega^{n+1},  \omega^{n+1}),
\end{align*}
where $\tilde{\mathcal{E}}(\eta, r)=\dfrac{1}{2}\left(\eta, \mathcal{L} \eta\right)+r^{2}$ is the modified energy.
\end{theorem}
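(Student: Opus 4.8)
The plan is to run the by-now standard first-order SAV energy estimate: test each equation of \eqref{eq:sav} against an appropriate multiplier and add the results. First I would take the $L^2$ inner product of \eqref{eq:sav1} with $\omega^{n+1}$, which instantly yields the right-hand side $-(\omega^{n+1},\omega^{n+1})$ together with the term $\frac{1}{\tau}(\phi^{n+1}-\phi^{n},\omega^{n+1})$ on the left. Then I would insert the expression for $\omega^{n+1}$ from \eqref{eq:sav2}, splitting $(\phi^{n+1}-\phi^{n},\omega^{n+1})$ into a linear part $(\phi^{n+1}-\phi^{n},\mathcal{L}\phi^{n+1})$ and a coupling part $\big(\phi^{n+1}-\phi^{n},\,\tfrac{r^{n+1}}{\sqrt{\mathcal{E}_{1}(\phi^{n})}}U(\phi^{n})\big)$.

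For the linear part I would use the symmetry and non-negativity of $\mathcal{L}$ together with the elementary identity $(a-b,\mathcal{L}a)=\tfrac12\big[(a,\mathcal{L}a)-(b,\mathcal{L}b)+(a-b,\mathcal{L}(a-b))\big]$ with $a=\phi^{n+1}$, $b=\phi^{n}$; this is exactly what produces $\tfrac12(\phi^{n+1},\mathcal{L}\phi^{n+1})-\tfrac12(\phi^{n},\mathcal{L}\phi^{n})$ plus the numerical dissipation term $\tfrac12(\phi^{n+1}-\phi^{n},\mathcal{L}(\phi^{n+1}-\phi^{n}))$. For the coupling part I would multiply \eqref{eq:sav3} by $2r^{n+1}$ and observe that its right-hand side is precisely $\frac{1}{\tau}\big(\tfrac{r^{n+1}}{\sqrt{\mathcal{E}_{1}(\phi^{n})}}U(\phi^{n}),\,\phi^{n+1}-\phi^{n}\big)$; combined with the scalar identity $2r^{n+1}(r^{n+1}-r^{n})=(r^{n+1})^{2}-(r^{n})^{2}+(r^{n+1}-r^{n})^{2}$, this shows the coupling part equals $\frac{1}{\tau}\big[(r^{n+1})^{2}-(r^{n})^{2}+(r^{n+1}-r^{n})^{2}\big]$.

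Adding the two contributions and recalling that $\tilde{\mathcal{E}}(\eta,r)=\tfrac12(\eta,\mathcal{L}\eta)+r^{2}$ gives exactly the asserted identity; since $\mathcal{L}$ is non-negative, the two bracketed terms on the left are nonnegative, so the modified energy is non-increasing for any $\tau>0$, which is the claimed unconditional stability. I do not expect a genuine obstacle here: the one point requiring care is the bookkeeping of the coupling term, namely checking that the factor $r^{n+1}/\sqrt{\mathcal{E}_{1}(\phi^{n})}$ matches between \eqref{eq:sav2} and, after multiplication by $2r^{n+1}$, the right-hand side of \eqref{eq:sav3} — this matching is precisely what the SAV splitting was designed to achieve. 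It is also worth remarking that the bound $\mathcal{E}_{1}(\phi^{n})\ge C_{0}>0$ noted above ensures $r^{n}=\sqrt{\mathcal{E}_{1}(\phi^{n})}$ and hence the whole scheme is well defined, and that $\tilde{\mathcal{E}}$ is a first-order consistent approximation of the original energy $\mathcal{E}(\phi)$ in \eqref{eq:seg-model}.
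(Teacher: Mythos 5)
Your argument is correct and is essentially the paper's own proof: the authors likewise test the three equations of \eqref{eq:sav} with $\omega^{n+1}$, $(\phi^{n+1}-\phi^{n})/\tau$ and $2r^{n+1}$, and use the same quadratic identity $2a^{n+1}(a^{n+1}-a^{n})=(a^{n+1})^{2}-(a^{n})^{2}+(a^{n+1}-a^{n})^{2}$ for both the $\mathcal{L}$-term and the scalar variable. The only incidental slip is your closing remark that $r^{n}=\sqrt{\mathcal{E}_{1}(\phi^{n})}$: this holds only at initialization ($n=0$), since for $n\ge 1$ the scheme evolves $r^{n}$ by \eqref{eq:sav3} rather than by re-evaluating the square root, but this does not affect the energy identity.
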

\begin{proof}
Firstly, we multiply the equations \eqref{eq:sav} by $\omega^{n+1}$, $(\phi^{n+1}-\phi^{n}) / \tau$, $2r^{n+1}$, respectively. Next, integrating the first two equations, adding them together, and using the identity$$2 a^{n+1}\left(a^{n+1}-a^{n}\right)=\left(a^{n+1}\right)^{2}-\left(a^{n}\right)^{2}+\left(a^{n+1}-a^{n}\right)^{2},$$ dropping some unnecessary terms, we obtain the stability result.
\end{proof}

To illustrate the stability of the SAV scheme \eqref{eq:sav}, we applied it to real medical images, as shown in Fig. \ref{fig:testenergy}. We conducted a series of experiments using varying time steps, keeping the initial level set position same and performing $1000$ iterations to ensure reliable results. The experiments show that accurate image segmentation outcomes can be achieved with different time steps when using the SAV scheme. With smaller time steps, the energy decay rate during the initial evolution process is relatively slow (Fig. \ref{plot001}, Fig. \ref{plot01}). Moreover, as the time steps increase, the number of iterations required for the energy to stabilize significantly decreases (Fig. \ref{plot05}, Fig. \ref{plot1}), indicating potential computational efficiency gains at larger time steps. Furthermore, the modified energy consistently declines over iterations, as demonstrated in Fig. \ref{fig:plotenergy}, confirming the robustness and effectiveness of the SAV scheme under various conditions.
\begin{figure}[htbp]
	\centerline{
	\subfigcapskip=-3pt
		\subfigure[$\tau=0.01$]{\includegraphics[height=0.22\textwidth,width=0.2\textwidth]{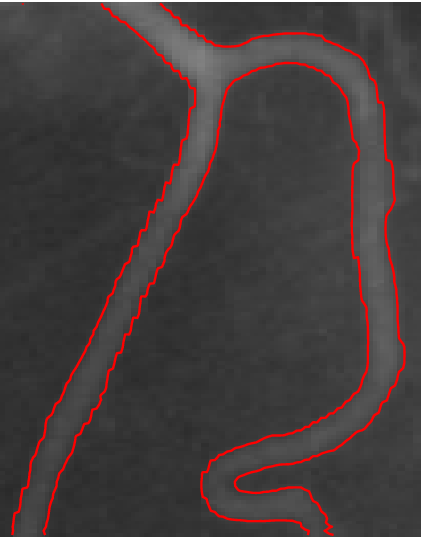}\label{energy001}}\quad
		\subfigure[$\tau=0.1$]{\includegraphics[height=0.22\textwidth,width=0.2\textwidth]{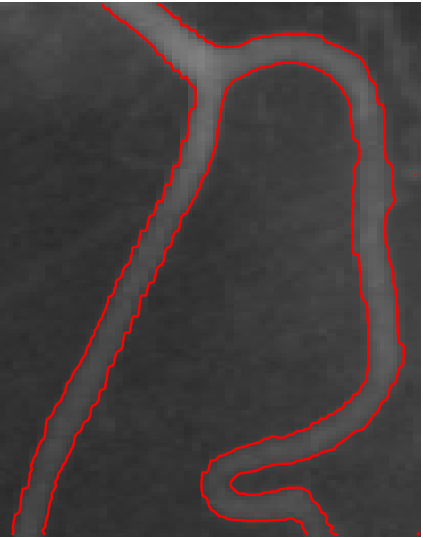}\label{energy01}}\quad
		\subfigure[$\tau=0.5$]{\includegraphics[height=0.22\textwidth,width=0.2\textwidth]{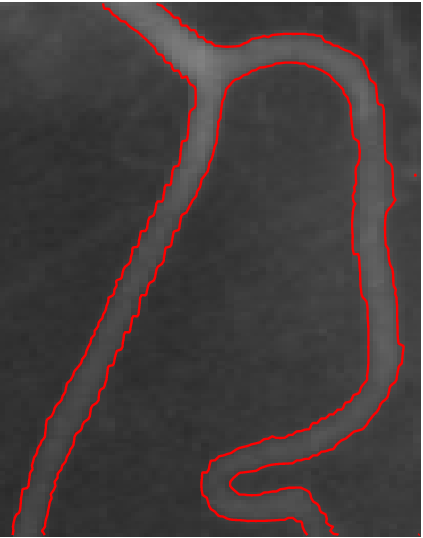}\label{energy05}}\quad
		\subfigure[$\tau=1$]{\includegraphics[height=0.22\textwidth,width=0.2\textwidth]{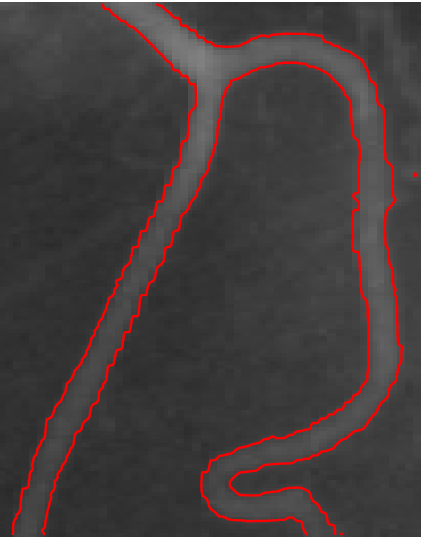}\label{energy1}}\quad

	}
	\caption{Segmentation results of different time steps: (a) $\tau=0.01$; (b) $\tau=0.1$; (c) $\tau=0.5$; (d) $\tau=1$.}\label{fig:testenergy}
\end{figure}

\begin{figure}[htbp]
	\centerline{
	\subfigcapskip=-3pt
		\subfigure[$\tau=0.01$]{\includegraphics[height=0.35\textwidth,width=0.40\textwidth]{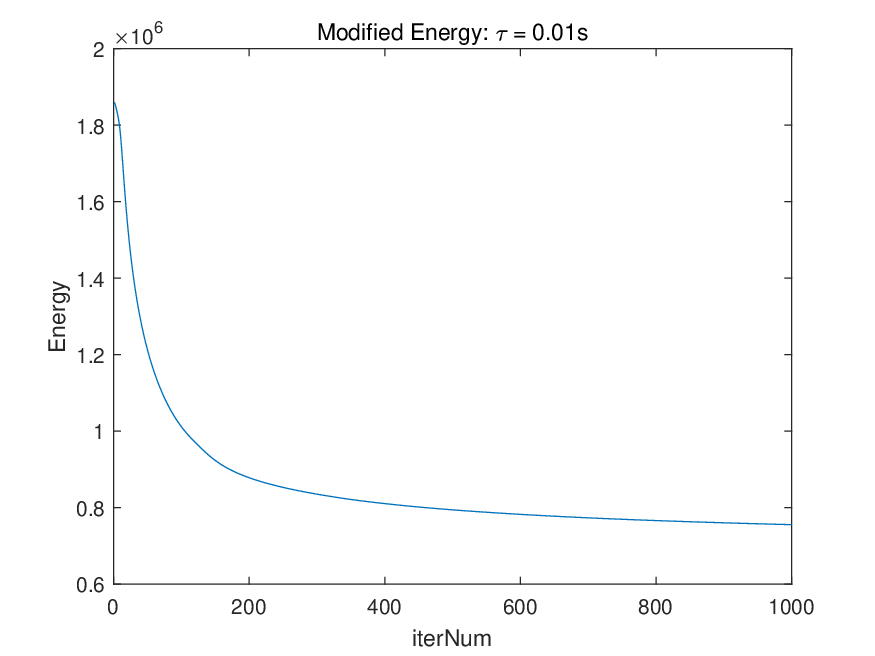}\label{plot001}}\quad
		\subfigure[$\tau=0.1$]{\includegraphics[height=0.35\textwidth,width=0.40\textwidth]{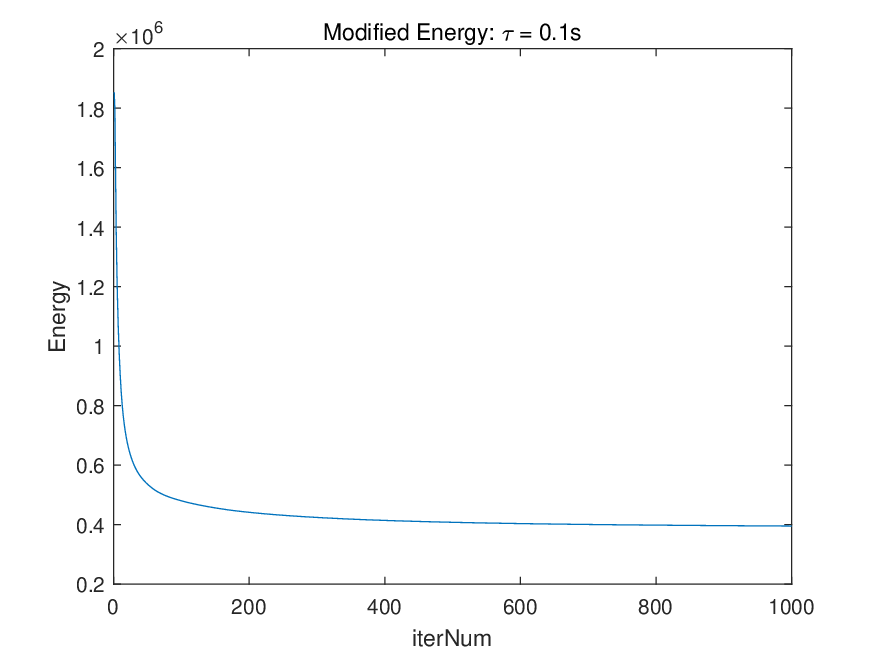}\label{plot01}}
		}\vspace{0.1cm}
\centerline{ \subfigure[$\tau=0.5$]{\includegraphics[height=0.35\textwidth,width=0.40\textwidth]{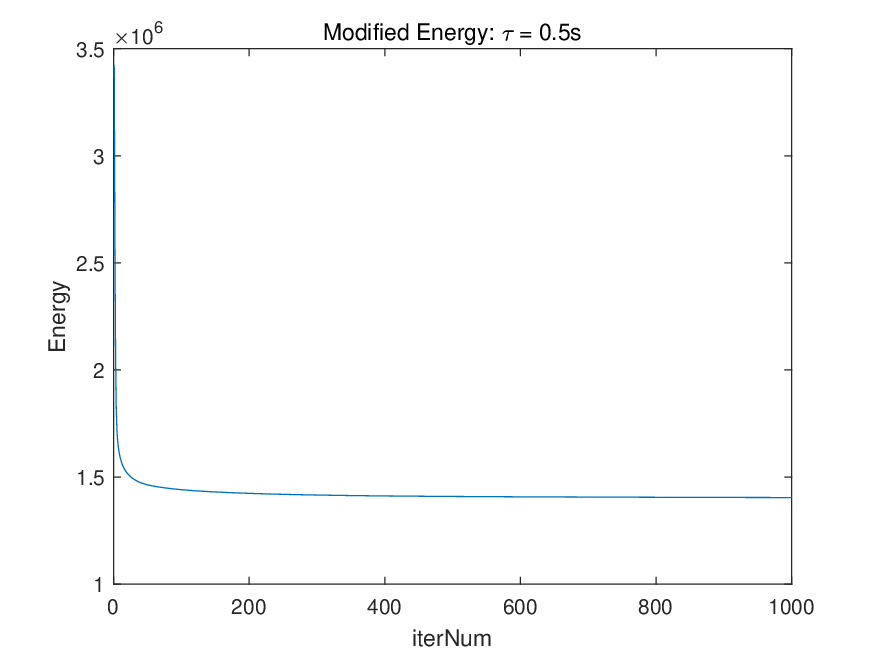}\label{plot05}}\quad
		\subfigure[$\tau=1$]{\includegraphics[height=0.35\textwidth,width=0.40\textwidth]{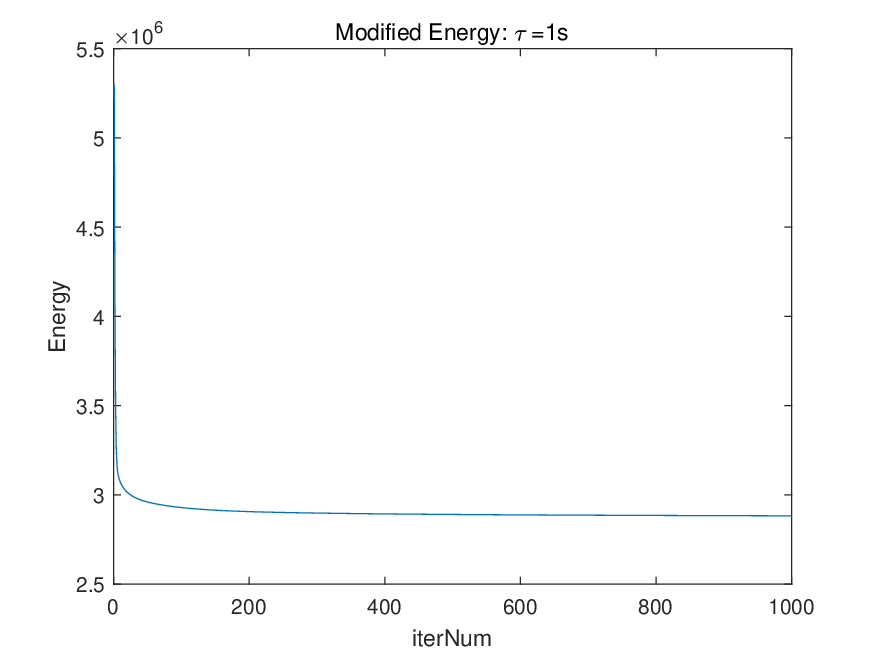}\label{plot1}}

	}
	\caption{Corresponding modified energy $\tilde{\mathcal{E}}$ of different time steps: (a) $\tau=0.01$; (b) $\tau=0.1$; (c)$\tau=0.5$; (d) $\tau=1$.}\label{fig:plotenergy}
\end{figure}

\subsection{The discretization of space}

We employ the $n+1$ level data only some of the linear terms, and the others we employ the $n$ level data, which confirms we could use discrete Fourier transform to solve the $\phi^{n+1}$.
Define the discrete backward and froward differential operators with periodic boundary condition as follows:
\begin{eqnarray*}
 &\partial_1^-\phi_{i,j} = \left\{\begin{array}{ll}
   \phi_{i,j} - \phi_{i-1,j}, & 1<i\le M,\\
   \phi_{1,j}-\phi_{M,j}, & i=1,
  \end{array}\right.\\
 &\partial_1^+\phi_{i,j} = \left\{\begin{array}{ll}
  \phi_{i+1,j} - \phi_{i,j}, & 1\le i< M,\\
  \phi_{1,j}-\phi_{M,j}, & i=M,
 \end{array}\right.\\
 &\partial_2^-\phi_{i,j} = \left\{\begin{array}{ll}
  \phi_{i,j} - \phi_{i,j-1}, & 1< j\le N,\\
  \phi_{i,1}-\phi_{i,N}, & j=1,
 \end{array}\right.\\
   &\partial_2^+\phi_{i,j} = \left\{\begin{array}{ll}
 \phi_{i,j+1} - \phi_{i,j}, & 1\le j< N,\\
 \phi_{i,1}-\phi_{i,N}, & j=N,
   \end{array}\right.
\end{eqnarray*}
and the central difference operators, the gradient, divergence, Laplace, and biharmonic operators are defined accordingly as
\begin{equation*}
\begin{aligned}
	& \partial_1^c \phi_{i,j} = (\partial_1^-\phi_{i,j} + \partial_1^+\phi_{i,j})/2,\\
	& \partial_2^c \phi_{i,j} = (\partial_2^-\phi_{i,j} + \partial_2^+\phi_{i,j})/2,\\
	 & \nabla^c\phi_{i,j} = \langle \partial_1^c\phi_{i,j}, \partial_2^c\phi_{i,j}\rangle,\\
	 & \mbox{div}^c(\langle\phi_{i,j}, \psi_{i,j} \rangle) = \partial_1^c\phi_{i,j} +  \partial_2^c\psi_{i,j},\\
	& \Delta^c\phi_{i,j} = \partial_1^+\partial_1^-\phi_{i,j} + \partial_2^+\partial_2^-\phi_{i,j},\\
	 & \Delta^{2c}\phi_{i,j} =  \Delta^c\Delta^c\phi_{i,j}.
\end{aligned}
\end{equation*}

As the low computational efficiency of finite difference schemes for fourth-order equations, we apply the discrete Fourier transform $\mathcal{F}$ to both sides \cite{1984FFT},
\begin{equation}\label{eq:fourier-operator}
	\begin{array}{l}
		\mathcal{F}\partial_1^{\pm} \phi_{i,j} = \pm(e^{\pm\sqrt{-1}z_i^1}-1)\mathcal{F}\phi_{i,j},\\
		\mathcal{F}\partial_2^{\pm} \phi_{i,j} = \pm(e^{\pm\sqrt{-1}z_j^2}-1)\mathcal{F}\phi_{i,j},
	\end{array}
\end{equation}
where $z_i^1 = 2\pi(i-1)/M, i=1,2,\cdots,M$, $z_j^2 = 2\pi(j-1)/N, j=1,2,\cdots,N$.

Therefore, once $\mathcal{F}\phi_{i,j}^{n+1}$ is calculated, we can obtain $\phi_{i,j}^{n+1}$ by the discrete inverse Fourier transform(IFFT).
Because of the utilize of the FFT, it is efficient to solve the designed numerical scheme.

In addition, the Dirac function $\delta(\phi)$ is approximated by the following forms,
\begin{equation}\label{eq:delta1}
	\delta_{1,\epsilon}(p) = \left\{\begin{array}{ll}
		0, & p\in\mathbb{R},\quad |p|>\epsilon,\\
		\dfrac{1}{2\epsilon}\left(1+\cos(\dfrac{\pi p}{\epsilon})\right), & |p|\leq\epsilon,
	\end{array}\right.
\end{equation}
and
\begin{equation}\label{eq:delta2}
\delta_{2,\epsilon}(p)={\frac{1}{\pi}}\cdot \frac{\epsilon}{{\epsilon^2}+{p^2}}, \quad p\in\mathbb{R}.
\end{equation}

The support of equation \eqref{eq:delta1} is limited, which can cause the level set to become trapped in local minima during evolution. On the other hand, equation \eqref{eq:delta2} acts on all curves, generating new curves and leading to improved global minimum results. In this paper, we choose the equation \eqref{eq:delta2} as the smooth approximation.

\section{Experimental results} \label{sec:exresults}
In this section, we compare the experimental results of the MBE-GAC and MBE-RSF models for image segmentation.  Our findings show that the MBE-GAC model performs well in segmenting images with clear edges, while the MBE-RSF model is more effective for images with fuzzy edges.  Moreover, we present the segmentation results obtained using the MBE-RSF model.  Comparative experiments indicate that the MBE regularization term is more robust to noise compared to other regularization terms, and eliminates the need for level set re-initialization.  Furthermore, the MBE-RSF model is capable of handling a wide range of composite and real images, including medical images and those with complex boundaries and intensity inhomogeneity distributions.  These results demonstrate the effectiveness and versatility of the MBE-based models for various image segmentation tasks.

Now we provide the meanings of each parameter in the experimental conclusions. $\mu$ is the coefficient of the regularization term and $\nu$ is the coefficient of the length term in model \eqref{eq:seg-model}, $\alpha$ is the coefficient of the biharmonic term in MBE equation, $\lambda_{i}$ and $\sigma$ are parameters in the MBE-RSF model\eqref{eq:rsf-mbe-el}, $\tau$ is the time step, and $iter$ represents the number of iterations.

\subsection{Independence of initial curve}\label{sec:5-1}
To verify the independence of the MBE regularization term on the initial curve, we perform an experiment using two composite images with different initial curves. The first image, depicted in Fig. \ref{fig:text-flower}, is a uniform intensity image. We apply the MBE-GAC model and classical regularization terms with the same parameters to the image. The experimental results show that the MBE regularization term performs similarly to the classical regularization term, but with a smaller range of variation in $|\nabla \phi|$. Moreover, the segmentation results are accurate for different initial curves and can effectively identify the target boundary.

The second image, shown in Fig. \ref{fig:text-2}, has heterogeneous regions. We apply the MBE-RSF model to the image and compare its segmentation performance with DR1 regularization term. The experimental results demonstrated that the segmentation curves generated by the MBE-GAC and MBE-RSF models are smoother than those produced by DR1 regularization term. For instance, the boundary curve of the quadrilateral in Fig. \ref{fig:text-2-i} is smoother than that in Fig. \ref{fig:text-2-g}. We also enlarged details in Fig. \ref{figdetail1} and Fig. \ref{figdetail2} to highlight the effectiveness of MBE regularization in achieving smoother contours compared to DR1 regularization Overall, our findings indicate that the MBE regularization term is effective in producing smooth and accurate segmentation results that are independent of the initial curve.
\begin{figure}[htbp]
	\centerline{
		\subfigcapskip=-3pt
	\subfigure[Initial]{\includegraphics[height=0.18\textwidth, width=0.18\textwidth]{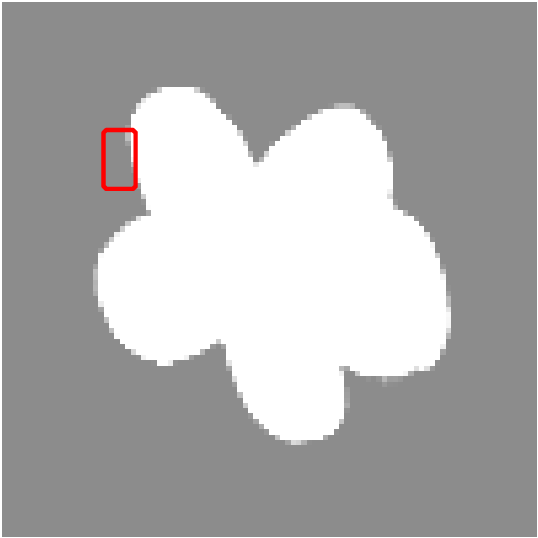}}
%\subfigure[Initial-mesh]{\includegraphics[trim={2cm 1cm 2cm 0cm},clip, width=0.20\textwidth]{fig/MBE_FFT_GAC-flower_init_mesh.png}\label{fig:111}}
	\subfigure[DR1-GAC]{\includegraphics[height=0.18\textwidth, width=0.18\textwidth]{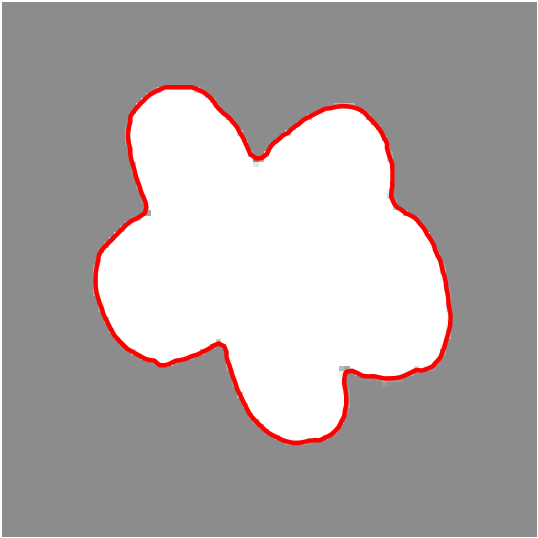}\label{fig:111}}
	\subfigure[$|\nabla \phi|$ of (b)]{\includegraphics[height=0.18\textwidth, width=0.19\textwidth]{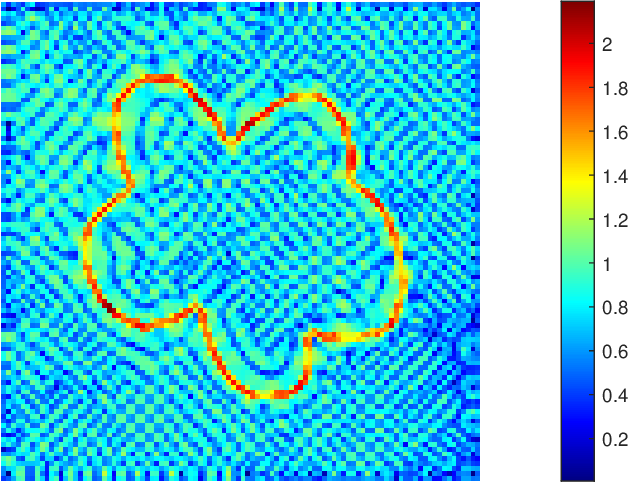}}
\subfigure[MBE-GAC]{\includegraphics[height=0.18\textwidth, width=0.18\textwidth]{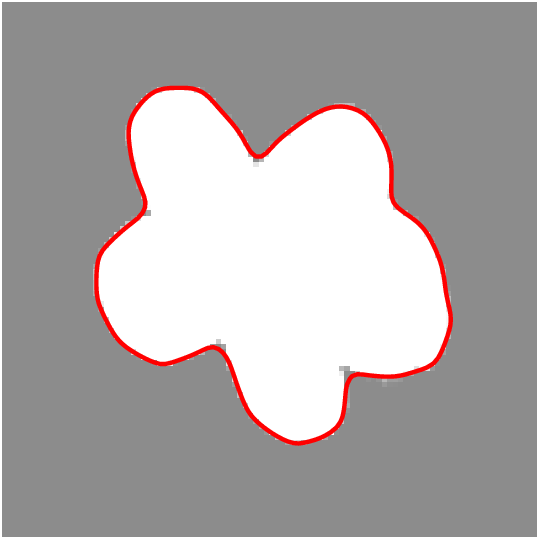}\label{fig:112}}
\subfigure[$|\nabla \phi|$ of (d)]{\includegraphics[height=0.18\textwidth, width=0.19\textwidth]{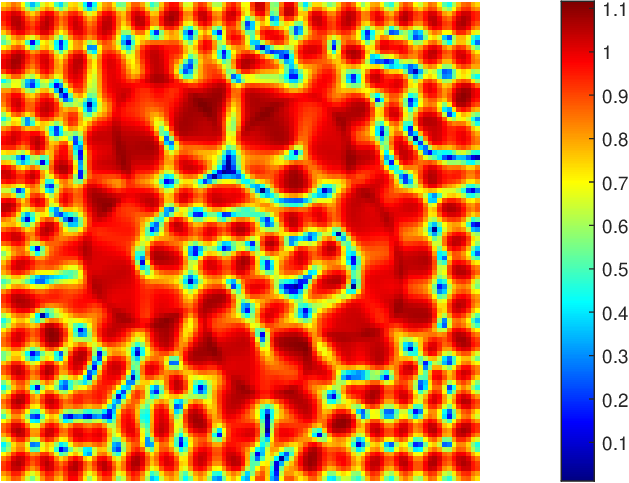}}
}\vspace{0.1cm}
\centerline{
	\subfigcapskip=-3pt
	\subfigure[Initial]{\includegraphics[height=0.18\textwidth, width=0.18\textwidth]{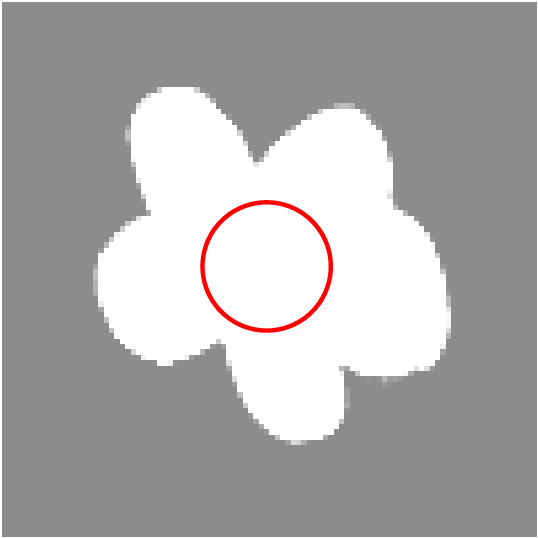}}
\subfigure[DR1-GAC]{\includegraphics[height=0.18\textwidth, width=0.18\textwidth]{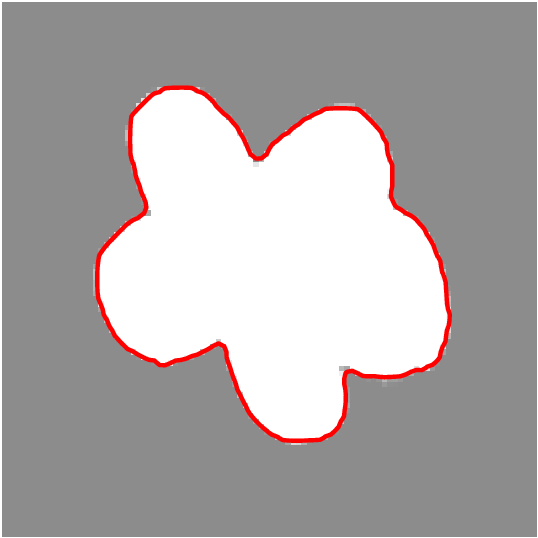}\label{fig:11}}
	\subfigure[$|\nabla \phi|$ of (g)]{\includegraphics[height=0.18\textwidth, width=0.19\textwidth]{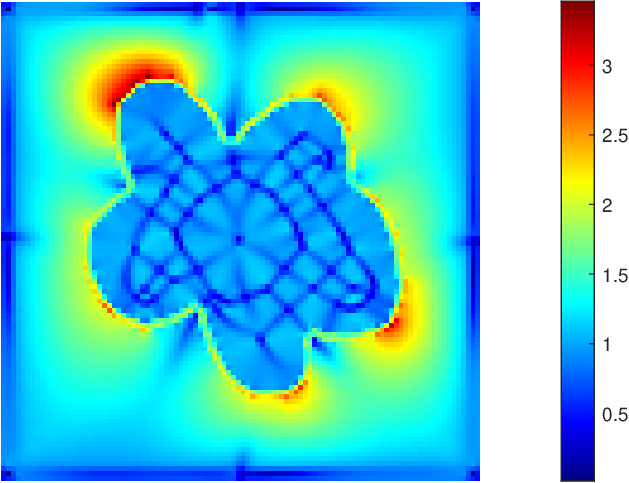}}
	\subfigure[MBE-GAC]{\includegraphics[height=0.18\textwidth, width=0.18\textwidth]{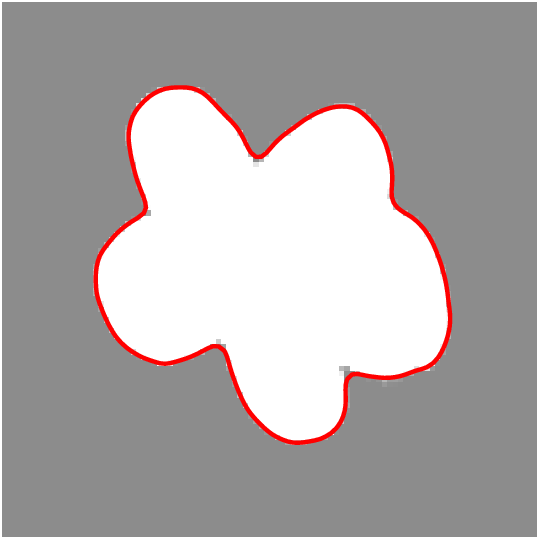}\label{fig:12}}
	\subfigure[$|\nabla \phi|$ of (h)]{\includegraphics[height=0.18\textwidth, width=0.19\textwidth]{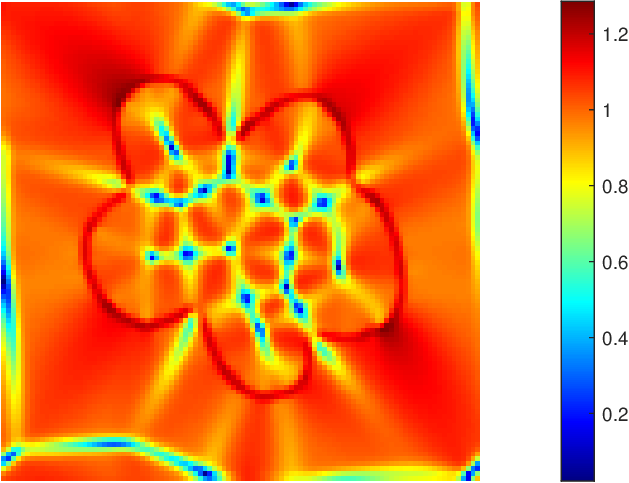}}
}
	\caption{Segmentation results of the DR1-GAC model and MBE-GAC model with same initial level set functions. Upper row: binary function
as the initial level set function; lower row: signed distance function as the initial level set function.}\label{fig:text-flower}
\end{figure}

\begin{figure}[htbp]
\begin{center}
\includegraphics[height=.5\textwidth,width=.65\textwidth ]{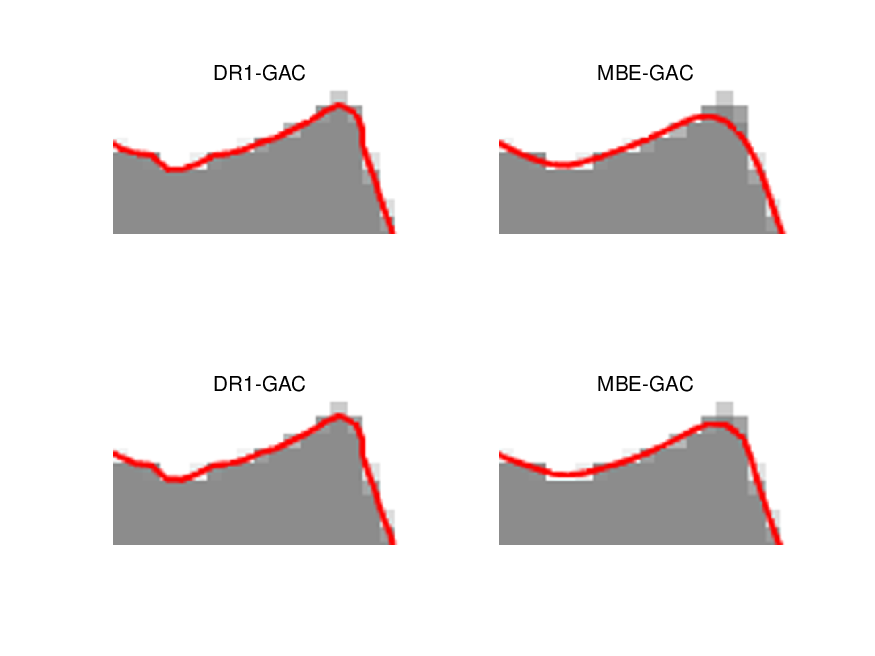}
\end{center}
\caption{Local details of DR1-GAC model and MBE-GAC model with same initial level set functions. Upper row: binary function
as the initial level set function; lower row: signed distance function as the initial level set function}\label{figdetail1}
\end{figure}

\begin{figure}[htbp]
	\centerline{
		\subfigcapskip=-2pt
	\subfigure[Initial]{\includegraphics[height=0.18\textwidth,width=0.18\textwidth]{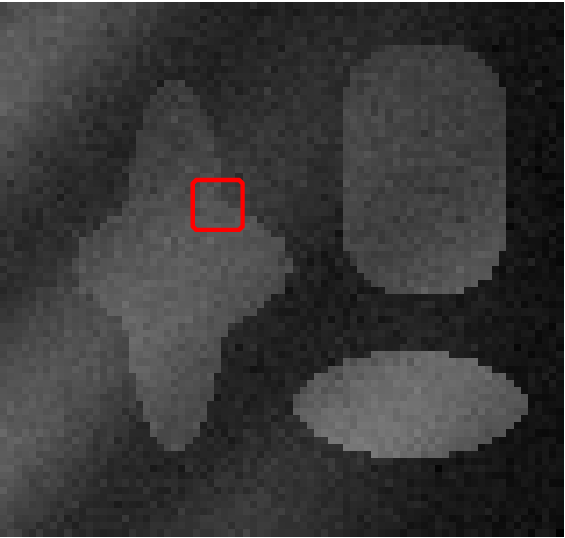}} \subfigure[DR1-RSF]{\includegraphics[height=0.18\textwidth,width=0.18\textwidth]{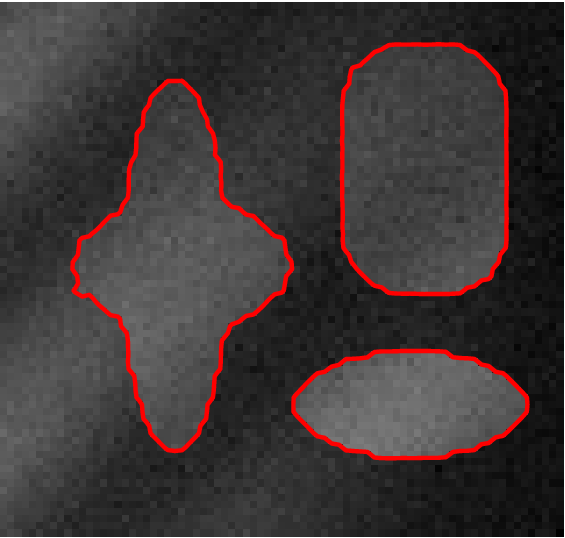}\label{fig:21}}
	\subfigure[$|\nabla \phi|$ of (b)]{\includegraphics[height=0.18\textwidth,width=0.19\textwidth]{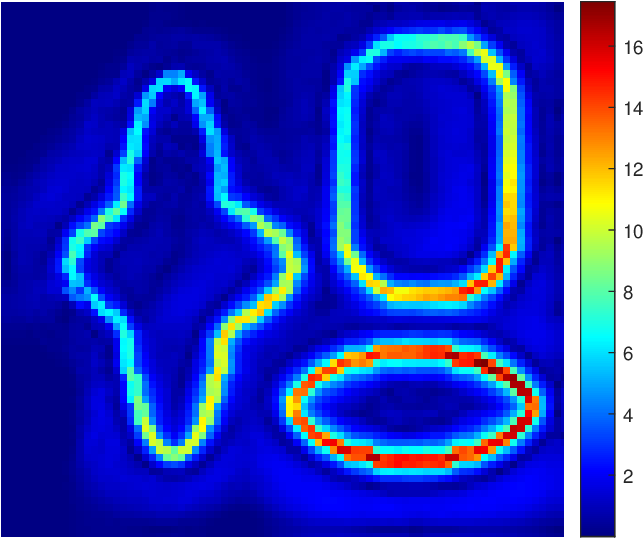}}
\subfigure[MBE-RSF]{\includegraphics[height=0.18\textwidth,width=0.18\textwidth]{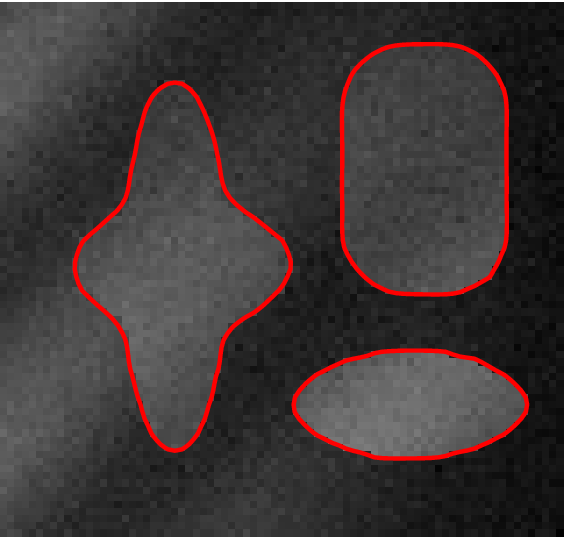}\label{fig:22}}
\subfigure[$|\nabla \phi|$ of (d)]{\includegraphics[height=0.18\textwidth,width=0.19\textwidth]{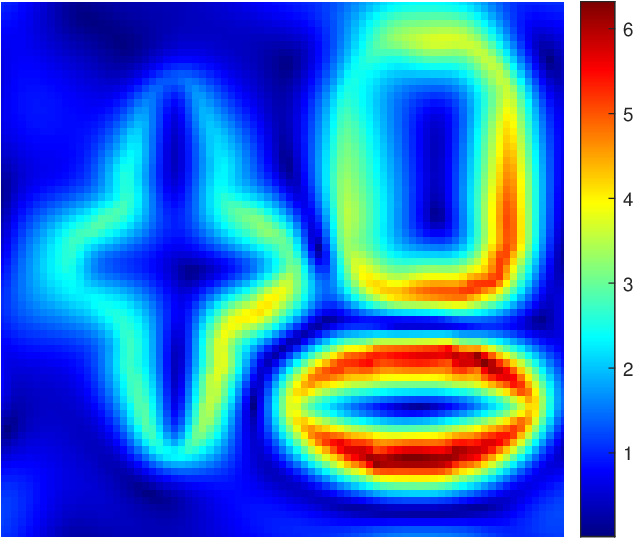}}
}\vspace{0.1cm}
\centerline{
\subfigbottomskip=5pt
	\subfigcapskip=-2pt
	\subfigure[Initial]{\includegraphics[trim={3cm 1cm 3cm 1cm},clip,height=0.18\textwidth, width=0.18\textwidth]{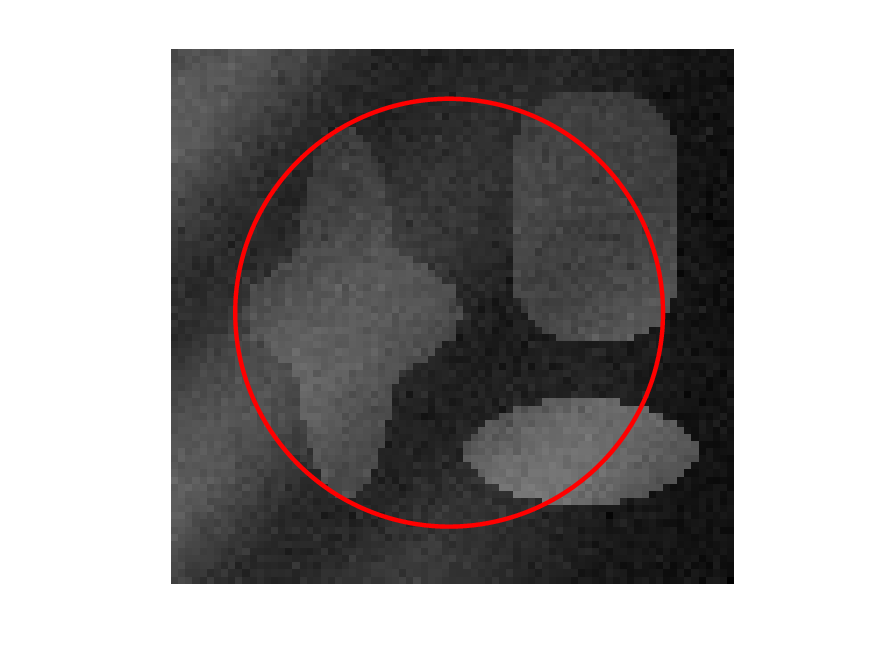}}
\subfigure[DR1-RSF]{\includegraphics[height=0.18\textwidth, width=0.18\textwidth]{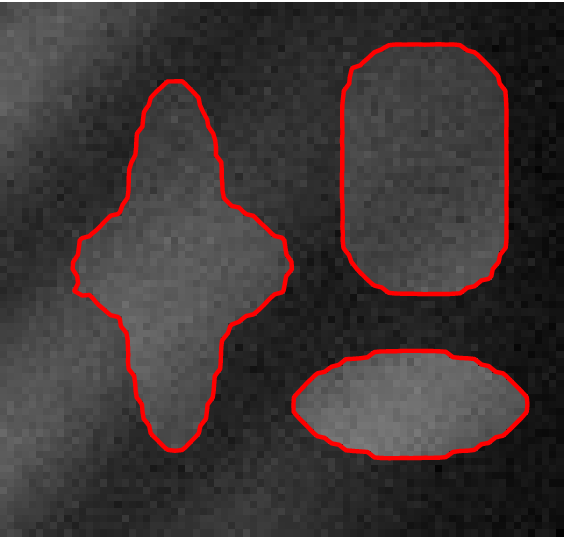}}\label{fig:text-2-g}
	\subfigure[$|\nabla \phi|$ of (g)]{\includegraphics[height=0.18\textwidth, width=0.19\textwidth]{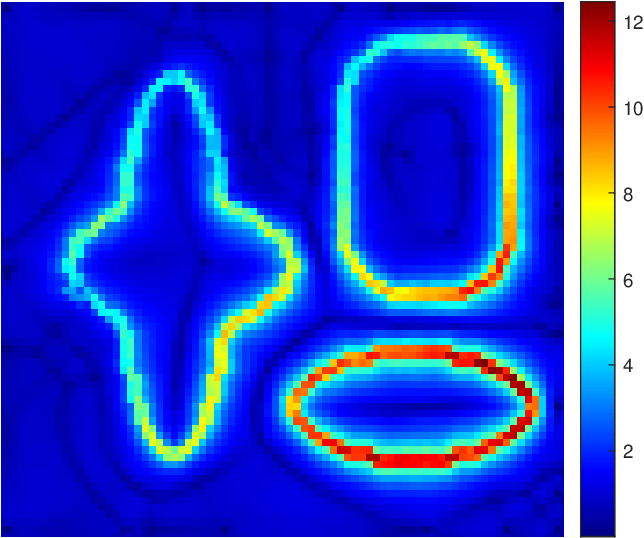}}
	\subfigure[MBE-RSF]{\includegraphics[height=0.18\textwidth,  width=0.18\textwidth]{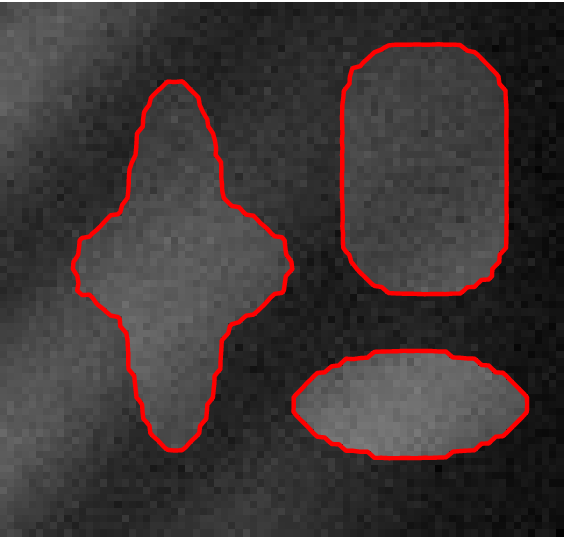}}\label{fig:text-2-i}
	\subfigure[$|\nabla \phi|$ of (i)]{\includegraphics[height=0.18\textwidth, width=0.19\textwidth]{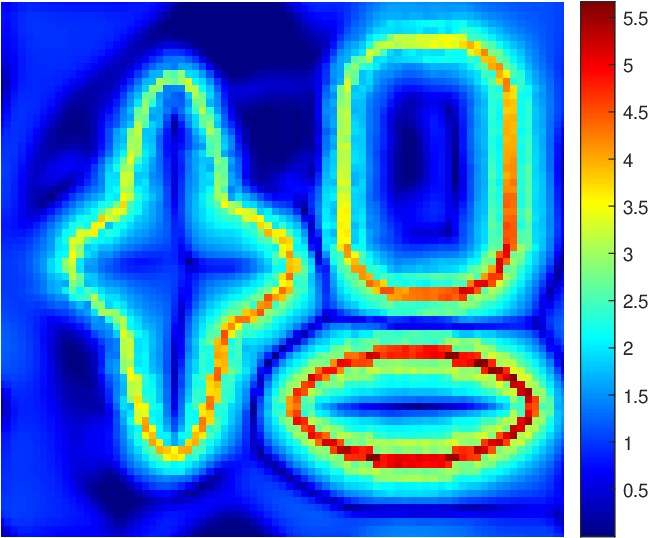}}
}
	\caption{Segmentation results of DR1-RSF model and MBE-RSF model with same initial level set functions. Upper row: binary function
as the initial level set function; lower row: signed distance function as the initial level set function.}\label{fig:text-2}
\end{figure}

\begin{figure}[htbp]
\begin{center}
\includegraphics[height=.5\textwidth,width=.65\textwidth ]{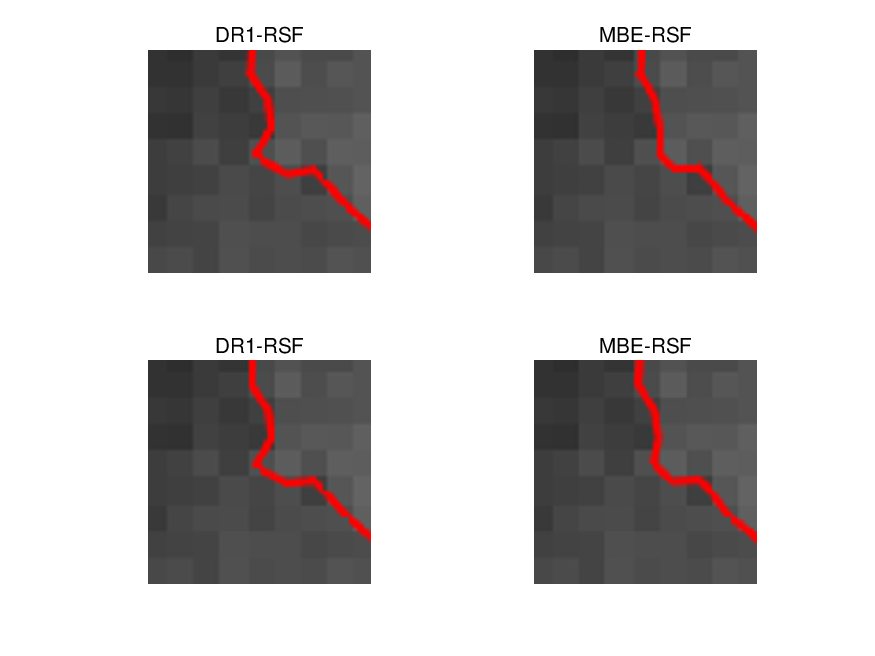}
\end{center}
\caption{Local details of DR1-RSF model and MBE-RSF model with same initial level set functions. Upper row: binary function
as the initial level set function; lower row: signed distance function as the initial level set function}\label{figdetail2}
\end{figure}
\subsection{Property of smooth control}\label{sec:5-2}

%With different parameters setting (\cref{tab:test_length-9}), segmentation results are shown in \cref{fig:length-9-contour}.

In the variational level set method, the coefficient $\nu$ of the arc length term controls the degree of smoothing, with larger values leading to higher curve smoothness and smoother shapes. By adjusting the value of $\nu$, the curve shape and smoothness can be flexibly controlled.

We test the MBE regularization term on a boundary-blurred image shown in Fig. \ref{fig:length} and set $\lambda_1=0.33$, $\lambda_2=0.67$, $\epsilon=1$, $\sigma=3$, $\tau = 0.01$, $iter=2000$. Our experimental results show that even when $\nu=0$, the biharmonic term can still ensure the smoothness of the segmentation curve and requires fewer iterations. However, lowering the bi-harmonic term parameter $\mu$ can lead to some erroneous segmentation results, as seen in images Fig. \ref{fig:length7} and Fig. \ref{fig:length8}. This suggests that the fourth-order term in the MBE equation has a greater control over the segmentation results than the biharmonic term when $\mu$ is low.

Based on these findings, we plan to further explore the smooth control characteristics of the MBE regularization term. Specifically, we will investigate how the different terms in the MBE equation affect the smoothness and accuracy of the segmentation results under different parameter settings. This will enable us to better understand the behavior of the MBE regularization term and improve its performance in various segmentation tasks.
\begin{figure}[htbp]
	\centerline{
	\subfigcapskip=-3pt
		\subfigure[$\alpha$=50]{\includegraphics[height=0.20\textwidth,width=0.25\textwidth]{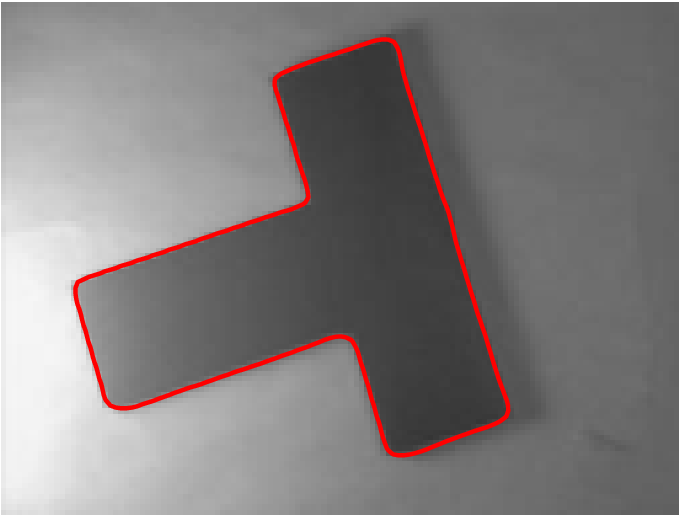}\label{fig:length3}}\quad
		\subfigure[$\alpha$=100]{\includegraphics[height=0.20\textwidth,width=0.25\textwidth]{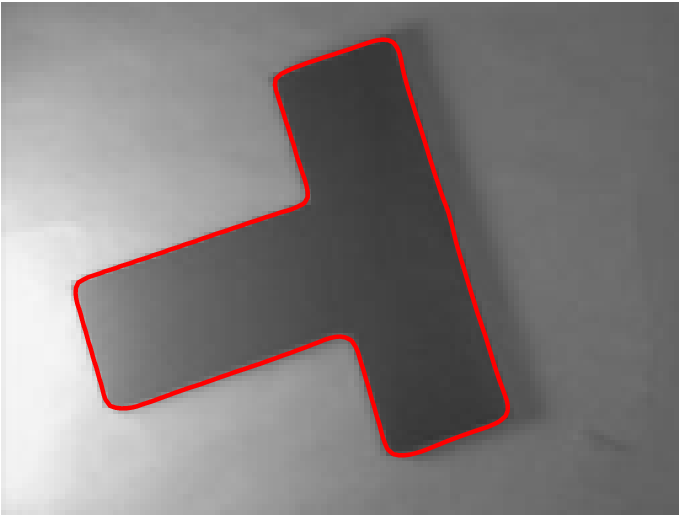}\label{fig:length4}}\quad
		\subfigure[$\alpha$=200]{\includegraphics[height=0.20\textwidth,width=0.25\textwidth]{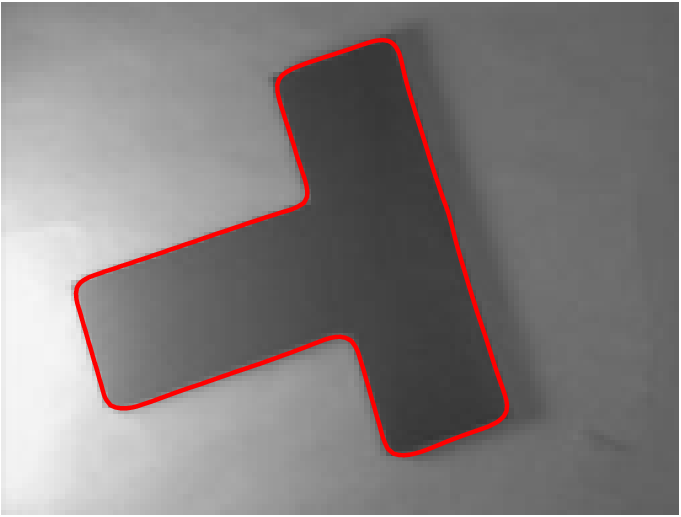}\label{fig:length5}}
	}\vspace{0.1cm}
      \centerline{
		\subfigure[$\alpha$=1]{\includegraphics[height=0.20\textwidth,width=0.25\textwidth]{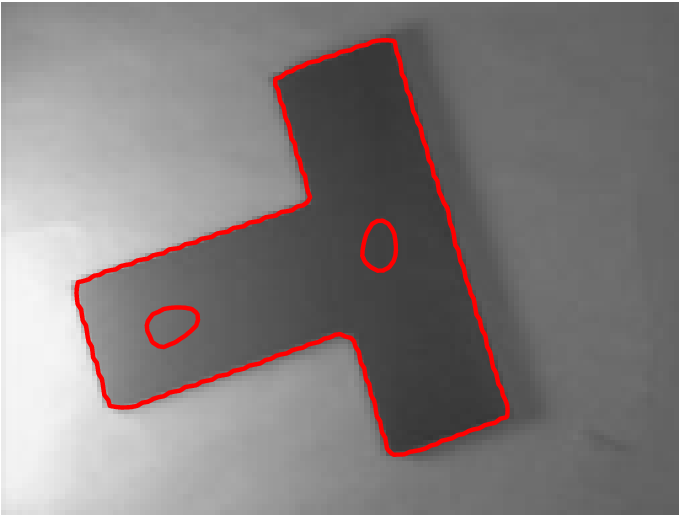}\label{fig:length7}}\quad
		\subfigure[$\alpha$=5]{\includegraphics[height=0.20\textwidth,width=0.25\textwidth]{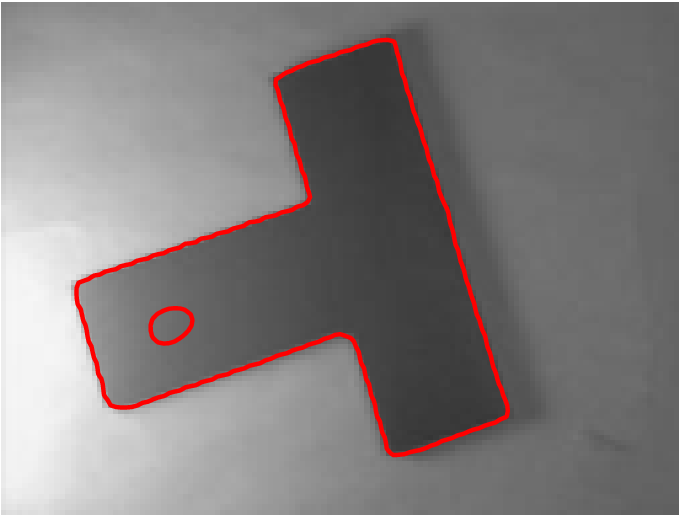}\label{fig:length8}}\quad
		\subfigure[$\alpha$=20]{\includegraphics[height=0.20\textwidth,width=0.25\textwidth]{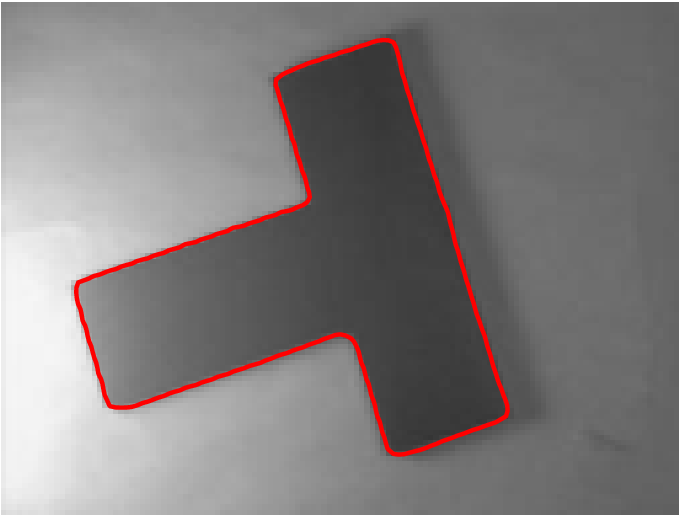}\label{fig:length9}}
	}
	\caption{Segmentation results of the MBE-RSF model. Upper row: results without arc length term; lower row: results with arc length term when $\nu=10$.}\label{fig:length}
\end{figure}
\begin{table}[htbp]
	\centering
 \setlength{\tabcolsep}{5mm}
	\caption{Parameters of Fig. \ref{fig:length}.}
	\begin{tabular}{lccccccc}
		\hline
		\multicolumn{1}{r}{} & $\lambda_1$ &$\lambda_2$ & $\mu$ &$\alpha$ & $\nu$ &$\tau$ & iter \\
		\hline
		Fig. \ref{fig:length3} & 1     & 2     & 1     & 50    & 0     & 0.01  & 3000 \\
		
		Fig. \ref{fig:length4} & 1     & 2     & 1     & 100   & 0     & 0.01  & 3000 \\
		
		Fig. \ref{fig:length5} & 1     & 2     & 1     & 200   & 0     & 0.01  & 3000 \\
		
		Fig. \ref{fig:length7} & 1     & 3     & 1     & 1     & 10    & 0.01  & 10000 \\
		
		Fig. \ref{fig:length8} & 1     & 3     & 1     & 5     & 10    & 0.01  & 10000 \\
		
		Fig. \ref{fig:length9} & 1     & 3     & 1     & 20    & 10    & 0.01  & 10000 \\
		\hline
	\end{tabular}%
	\label{tab:test_length}%
\end{table}

\begin{figure}[htbp]
	\centerline{
		\subfigcapskip=-3pt
        %\subfigure[]{\includegraphics[trim={3cm 1cm 3cm 1cm},clip, width=0.25\textwidth]{fig/disk_inhome.png}}
		\subfigure[]{\includegraphics[height=0.25\textwidth,width=0.25\textwidth]{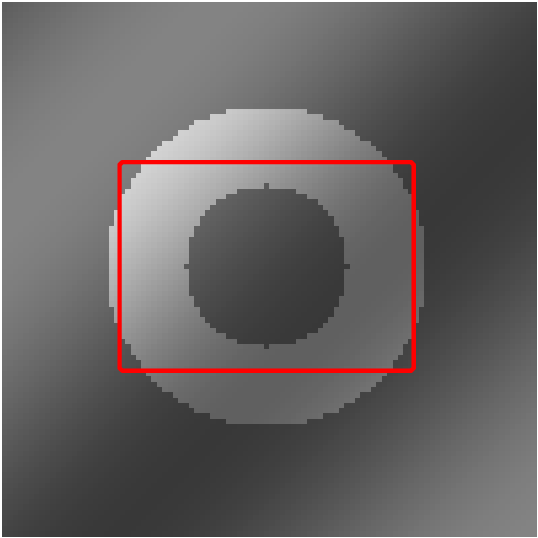}\label{fig:lengh-9-init1}}\quad
\subfigure[]{\includegraphics[height=0.25\textwidth,width=0.25\textwidth]{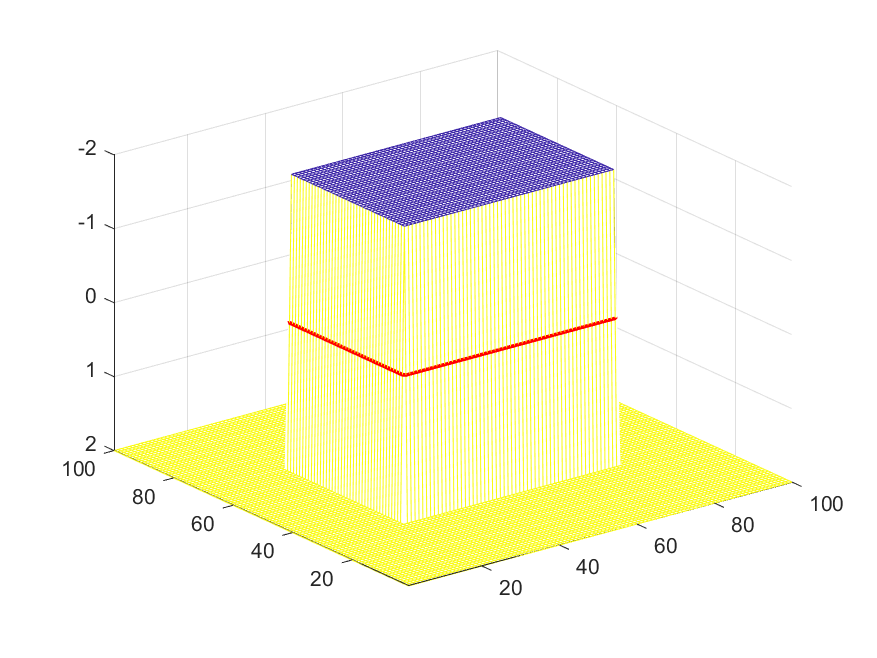}}
	}
	\caption{The initial curve and level set function.}\label{fig:length-9-init}
\end{figure}

We utilize the MBE-RSF model to distinguish the ring from the background. The initial curve and its corresponding level set function are depicted in Fig. \ref{fig:length-9-init}. With parameters set to $\mu=1$, $\tau=0.01$, we observe that increasing the coefficient of the bi-harmonic term results in smoother curves. For instance, in Fig. \ref{fig:length_9-6}, Fig. \ref{fig:length_9-2}, and Fig. \ref{fig:length_9-4}, all parameters except $\alpha$ are kept the same, and the segmentation result of Fig. \ref{fig:length_9-4} is notably smoother, demonstrating that the bi-harmonic term effectively regulates the smoothness of the curve. Additionally, we set $\alpha=1$, $\nu=0, 10, 500$ for the results shown in Fig. \ref{fig:length_9-1}, Fig. \ref{fig:length_9-6}, and Fig. \ref{fig:length_9-7}, respectively, while keeping other parameters unchanged. We find that even with $\nu=500$ in Fig. \ref{fig:length_9-7}, the local smoothness of the curves remains the same. This is because the arc-length term, used as a regularization term, only controls the macroscopic length of the curve without improving its local smoothness.

\begin{figure}[htbp]
	\centerline{
		\subfigcapskip=-3pt

	\subfigure[$\alpha=1$,$\nu=0$]{\includegraphics[height=0.22\textwidth,width=0.22\textwidth]{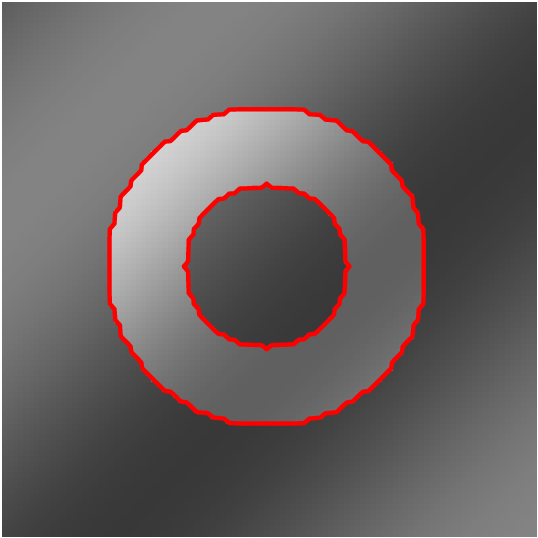}\label{fig:length_9-1}}\quad
	\subfigure[$\alpha=1$,$\nu=10$]{\includegraphics[height=0.22\textwidth,width=0.22\textwidth]{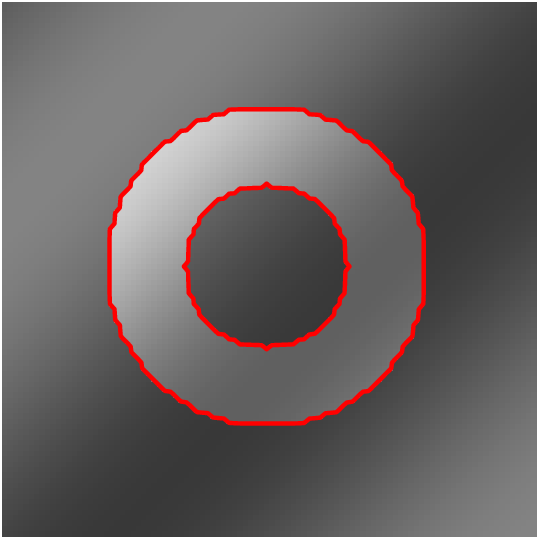}\label{fig:length_9-6}}\quad
	\subfigure[$\alpha=10$,$\nu=10$]{\includegraphics[height=0.22\textwidth,width=0.22\textwidth]{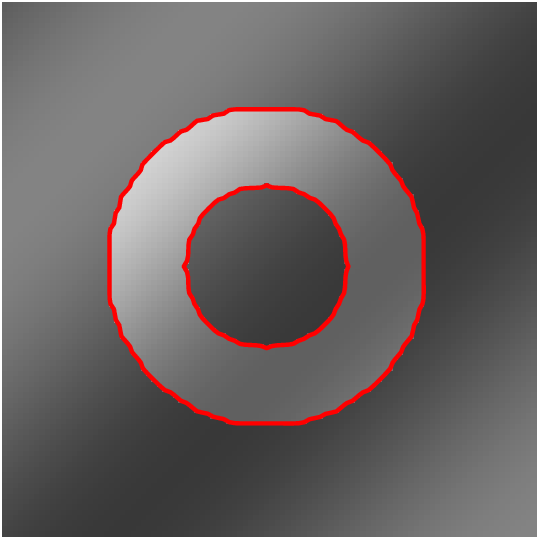}\label{fig:length_9-2}}
}\vspace{0.1cm}
\centerline{
	\subfigcapskip=-3pt

	\subfigure[$\alpha=100$,$\nu=0$]{\includegraphics[height=0.22\textwidth,width=0.22\textwidth]{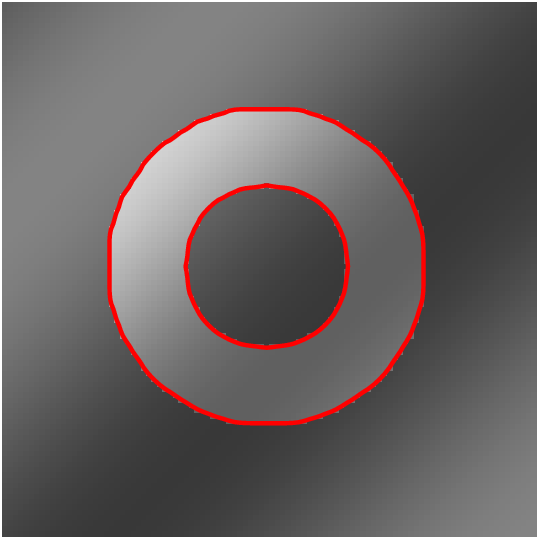}\label{fig:length_9-3}}\quad
	\subfigure[$\alpha=100$,$\nu=10$]{\includegraphics[height=0.22\textwidth,width=0.22\textwidth]{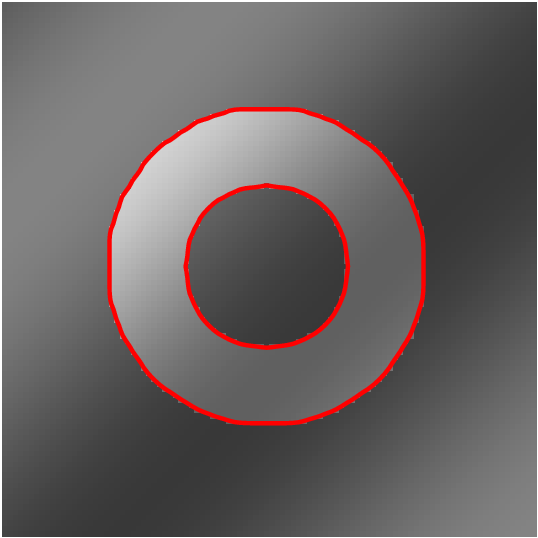}\label{fig:length_9-4}}\quad
	\subfigure[$\alpha=1$,$\nu=500$]{\includegraphics[height=0.22\textwidth,width=0.22\textwidth]{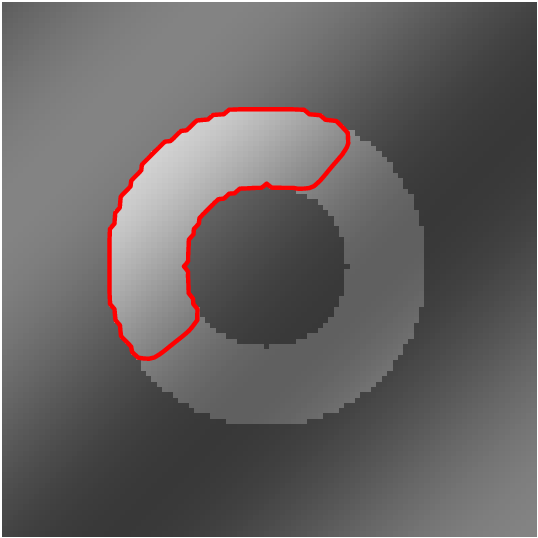}\label{fig:length_9-7}}
}
	\caption{Segmentation results of MBE-RSF model.}\label{fig:length-9-contour}
\end{figure}
\begin{table}[htbp]
	\centering
	%\scriptsize
    \setlength{\tabcolsep}{6mm}
	\caption{Parameters of Fig. \ref{fig:length-9-contour}. ($\lambda_1=0.33$, $\lambda_2=0.67$, $\epsilon=1$, $\sigma=3$, $\tau = 0.01$)}
	\begin{tabular}{rccccccc}\hline
		 & $\mu$ &$\alpha$ & $\nu$ &$\tau$ & iter \\\hline
		Fig. \ref{fig:length_9-1}   & 1     & 1     & 0     & 0.01  & 2000 \\

		Fig. \ref{fig:length_9-6}   & 1 &1 & 10 & 0.01 & 2000\\

		Fig. \ref{fig:length_9-2}   & 1     & 10    & 10    & 0.01  & 2000 \\

		Fig. \ref{fig:length_9-3}  & 1     & 100   & 0     & 0.01  & 2000 \\

		Fig. \ref{fig:length_9-4}   & 1     & 100   & 10    & 0.01  & 2000 \\

		Fig. \ref{fig:length_9-7} & 1 & 1 & 500 & 0.01 & 2000\\\hline
	\end{tabular}
	\label{tab:test_length-9}%
\end{table}%

We tested an image with sharp corners, as shown in Figure \ref{fig:length-602-contour}. The initial contour is depicted in Figure \ref{fig:length-602-init}, and the parameters are listed in Table \ref{tab:test-602}. The results were similar. When $\alpha$ is small, the arc length term can only control the macro-scale curve length and cannot force the curve to evolve completely to the vertices of the object to be segmented (Fig. \ref{fig:length-602-0} and Fig. \ref{fig:length-602-5}); however, as the coefficient of the fourth-order term gradually increases, the segmentation curve further evolves towards the edges of the object (Fig. \ref{fig:length-602-0} and Fig. \ref{fig:length-602-2}); when the arc length term is consistent, the larger the coefficient of the fourth-order term, the higher the local smoothness of the segmentation curve (Fig. \ref{fig:length-602-1}, Fig. \ref{fig:length-602-4}, and Fig. \ref{fig:length-602-3}), but overall, when both the arc length term and higher-order terms are appropriately valued, it can ensure that the segmentation result is fine and can better maintain corner information (Fig. \ref{fig:length-602-1}).
 %when using the same coefficient for the bi-harmonic term, a larger arc-length term resulted in a shorter total length of the segmented contour. Conversely, when using the same coefficient for the arc-length term, a larger fourth-order term coefficient led to greater local smoothness of the segmented object.}

\begin{figure}[htbp]
	\centerline{
		\subfigcapskip=-3pt
        %\subfigure[]{\includegraphics[trim={3cm 1cm 3cm 1cm},clip, width=0.25\textwidth]{fig/disk_inhome.png}}
		\subfigure[]{\includegraphics[height=0.25\textwidth,width=0.25\textwidth]{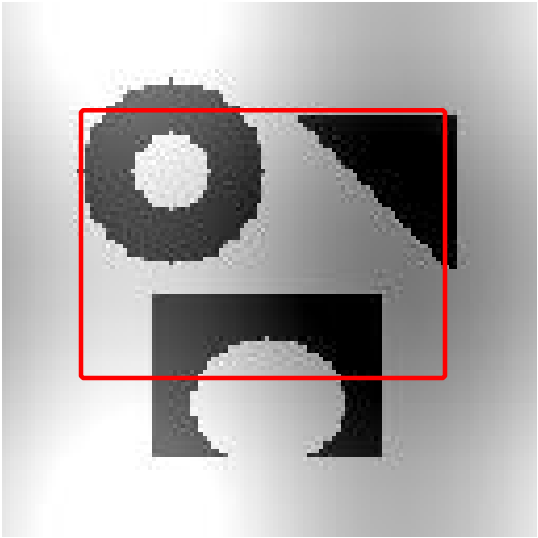}}\quad
		\subfigure[]{\includegraphics[height=0.25\textwidth,width=0.25\textwidth]{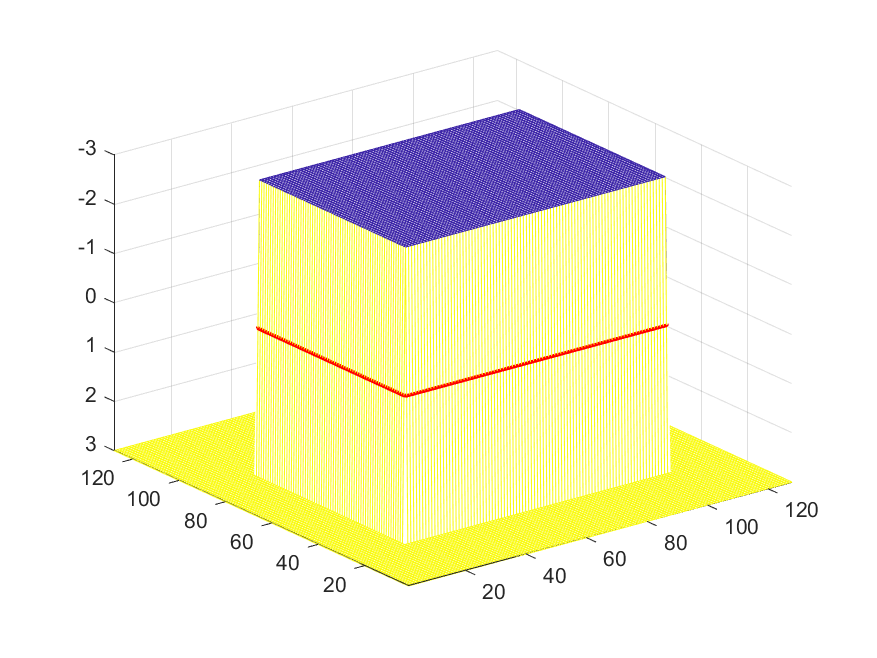}}
	}
	\caption{The initial curve and level set function.}\label{fig:length-602-init}
\end{figure}

\begin{figure}[htbp]
	\centerline{
		\subfigcapskip=-3pt
\subfigure[$\alpha=10$,$\nu=0$]{\includegraphics[height=0.22\textwidth,width=0.22\textwidth]{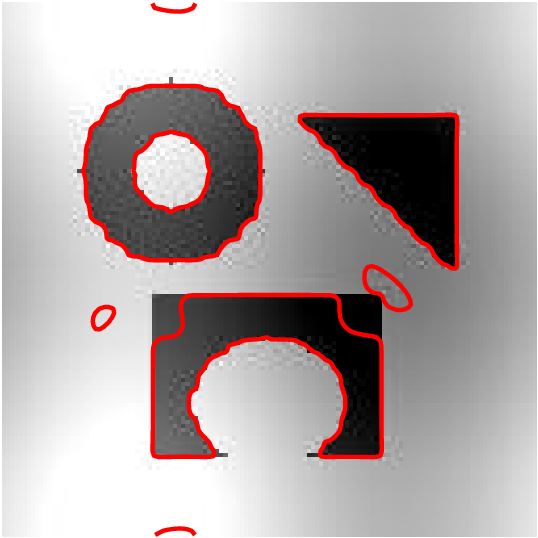}\label{fig:length-602-0}}\quad
\subfigure[$\alpha=30$,$\nu=100$]{\includegraphics[height=0.22\textwidth,width=0.22\textwidth]{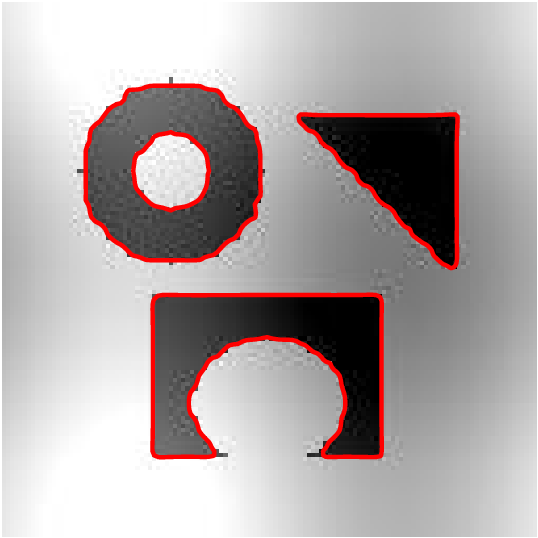}\label{fig:length-602-1}}\quad
\subfigure[$\alpha=200$,$\nu=0$]{\includegraphics[height=0.22\textwidth,width=0.22\textwidth]{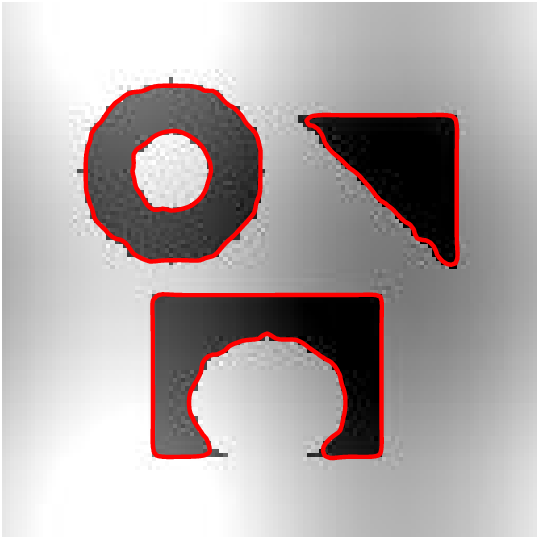}\label{fig:length-602-2}}
	%\subfigure[$\alpha=1$,$\nu=0$]{\includegraphics[trim={3cm 1cm 3cm 1cm},clip,width=0.22\textwidth]{fig/test_length_9-1_contour.png}\label{fig:length_9-1}}
	%\subfigure[$\alpha=1$,$\nu=10$]{\includegraphics[trim={3cm 1cm 3cm 1cm},clip,width=0.22\textwidth]{fig/test_length_9-6_contour.png}\label{fig:length_9-6}}
	%\subfigure[$\alpha=10$,$\nu=10$]{\includegraphics[trim={3cm 1cm 3cm 1cm},clip,width=0.22\textwidth]{fig/test_length_9-2_contour.png}\label{fig:length_9-2}}
}\vspace{0.1cm}
\centerline{
	\subfigcapskip=-3pt
\subfigure[$\alpha=10$,$\nu=1000$]{\includegraphics[height=0.22\textwidth,width=0.22\textwidth]{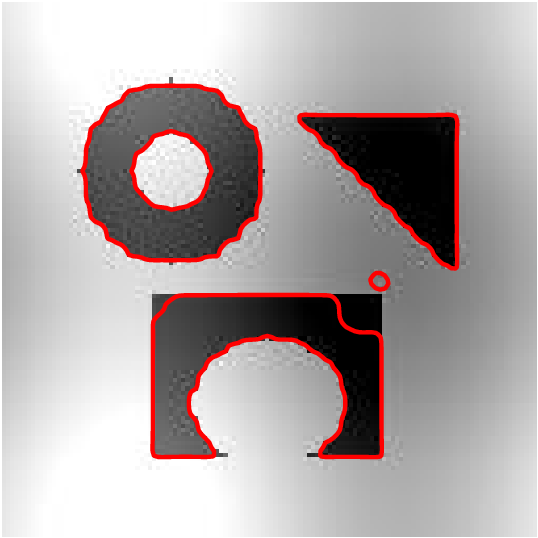}\label{fig:length-602-5}}\quad
\subfigure[$\alpha=100$,$\nu=100$]{\includegraphics[height=0.22\textwidth,width=0.22\textwidth]{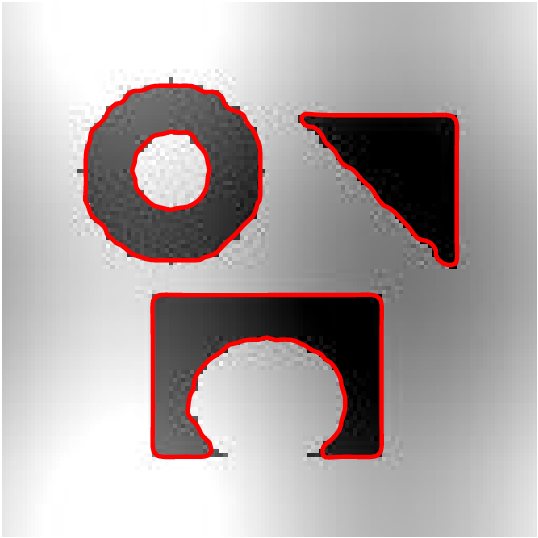}\label{fig:length-602-4}}\quad
\subfigure[$\alpha=200$,$\nu=100$]{\includegraphics[height=0.22\textwidth,width=0.22\textwidth]{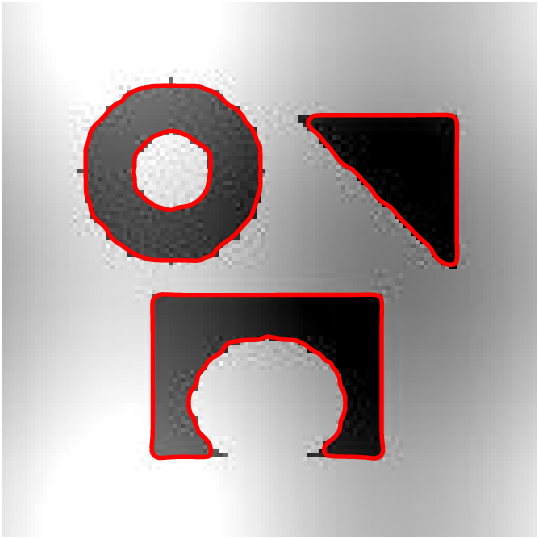}\label{fig:length-602-3}}
	%\subfigure[$\alpha=100$,$\nu=0$]{\includegraphics[trim={3cm 1cm 3cm 1cm},clip,width=0.22\textwidth]{fig/test_length_9-3_contour.png}\label{fig:length_9-3}}
	%\subfigure[$\alpha=100$,$\nu=10$]{\includegraphics[trim={3cm 1cm 3cm 1cm},clip,width=0.22\textwidth]{fig/test_length_9-4_contour.png}\label{fig:length_9-4}}
	%\subfigure[$\alpha=1$,$\nu=500$]{\includegraphics[trim={3cm 1cm 3cm 1cm},clip,width=0.22\textwidth]{fig/test_length_9-7_contour.png}\label{fig:length_9-7}}
}
	\caption{Segmentation results of MBE-RSF model.}\label{fig:length-602-contour}
\end{figure}
\begin{table}[htbp]
	\centering
	%\scriptsize
    \setlength{\tabcolsep}{6mm}
	\caption{Parameters of Fig. \ref{fig:length-602-contour}. ($\lambda_1=1$, $\lambda_2=3.5$, $\epsilon=1$, $\sigma=3$, $\tau = 0.01$)}
	\begin{tabular}{rccccccc}\hline
		 & $\mu$ &$\alpha$ & $\nu$ &$\tau$ & iter \\\hline
		Fig. \ref{fig:length-602-0}   & 1     & 10     & 0     & 0.01  & 600 \\

		Fig. \ref{fig:length-602-1}   & 1     &30 & 100 & 0.01 & 600\\

		Fig. \ref{fig:length-602-2}   & 1     & 200    & 0   & 0.01  & 600 \\

		Fig. \ref{fig:length-602-5}  & 1     & 10   & 1000     & 0.01  & 600 \\

		Fig. \ref{fig:length-602-4}   & 1     & 100   & 100   & 0.01  & 600 \\

		Fig. \ref{fig:length-602-3} & 1 & 200 & 100 & 0.01 & 600\\\hline
	\end{tabular}
	\label{tab:test-602}%
\end{table}%

In summary, the MBE regularization term can effectively control the smoothness of the contour by adjusting the biharmonic coefficient $\alpha$, while the arc-length term only controls the macroscopic length of the curve with little influence on local smoothness. Based on these findings, we plan to further explore the application of the MBE-RSF model in noise image segmentation.

\subsection{Property of anti-noise}\label{sec:5-3}
Image noise is a common problem that can negatively impact the performance of image segmentation models. In order to evaluate the the robustness of the model with respect to noises, we contaminate the original images with Gaussian noise. We apply MBE and DR2 regularization methods to the RSF model and evaluate their segmentation performance on a same noisy image. Results can be compared to determine which regularization method performs better for the specific noise image segmentation problem at hand.

The images to be segmented are shown in Fig. \ref{fig:length-10-contour} and Fig. \ref{fig:length-101-contour}, with noise levels of $10$ and $15$. The initial level and function selection are the same as in Fig. \ref{fig:length-9-init}. In noisy situations, segmentation of non-uniform intensity images is more difficult than that of uniform intensity images.
Both models require two fitting function $f_1$ and $f_2$ to approximate the image intensities, and some of the noisy speckle would be enlarged during the fitting procedure due to the influence of noise and inhomogeneous background, as shown in Fig. \ref{fig:inhomo-noisy}. The above analysis shows that noise has a great influence on the segmentation results of RSF fitting model, so it is necessary to add a suitable regular term.
\begin{figure}[htbp]
	\centerline{
			\subfigcapskip=-2pt%distance
		\subfigure[]{\includegraphics[width=3cm,height=3cm]{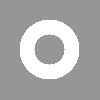}\label{fig:homo}}\quad
\subfigure[]{\includegraphics[width=3cm,height=3cm]{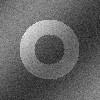}\label{fig:noisy}}\quad
 \subfigure[]{\includegraphics[width=3cm,height=3cm]{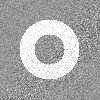}\label{fig:fitting}}
	}
	\caption{(a)Original homogeneous image; (b)Noisy inhomogeneous image ($std = 10$); (c)Ideally fitted image.}\label{fig:inhomo-noisy}
\end{figure}
\begin{figure}[htbp]
%				\subfigbottomskip=0pt
	\centerline{
						\subfigcapskip=-2pt
		\subfigure[]{\includegraphics[height=0.22\textwidth, width=0.22\textwidth]{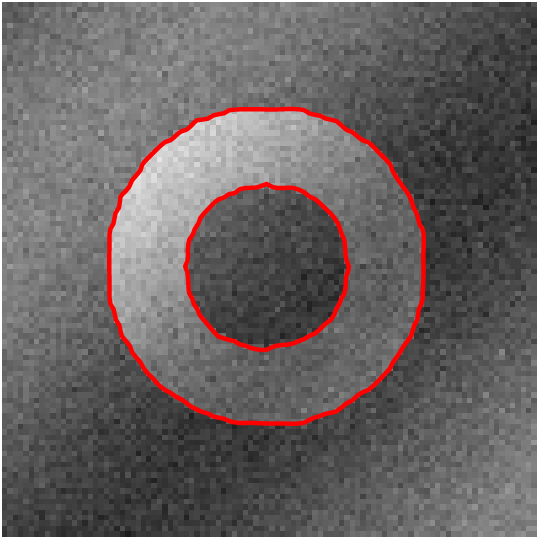}\label{fig:length-10-24}}\quad 
		\subfigure[]{\includegraphics[height=0.22\textwidth, width=0.22\textwidth]{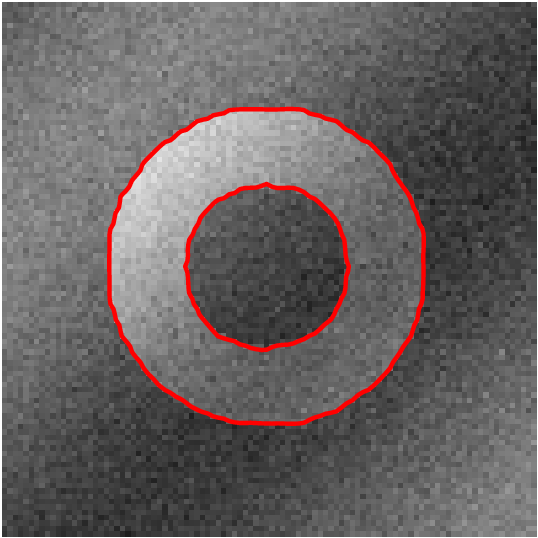}\label{fig:length-10-33}}\quad 
		\subfigure[]{\includegraphics[height=0.22\textwidth, width=0.22\textwidth]{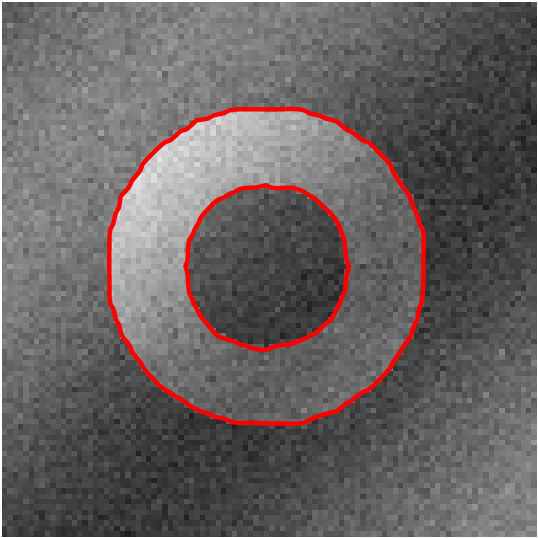}\label{fig:length-10-26}} 
	}\vspace{0.15cm}
\centerline{
					\subfigcapskip=-2pt
		\subfigure[]{\includegraphics[height=0.22\textwidth, width=0.22\textwidth]{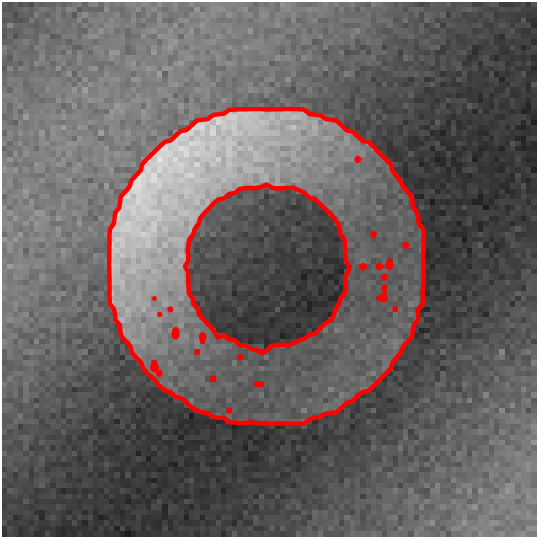}\label{fig:length-10-DR-8}}\quad
		\subfigure[]{\includegraphics[height=0.22\textwidth,  width=0.22\textwidth]{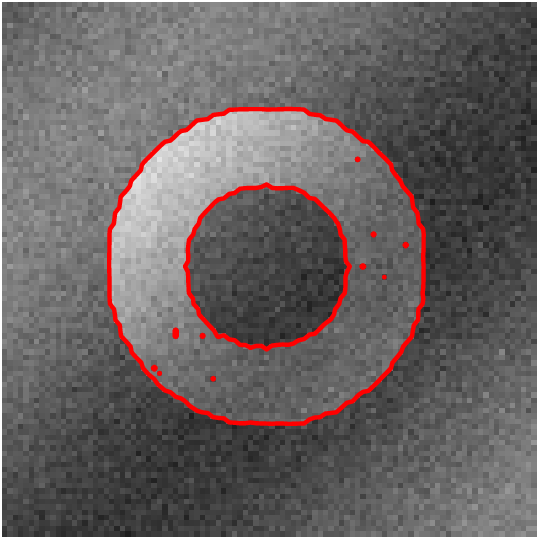}\label{fig:length-10-DR-10}}\quad 
		\subfigure[]{\includegraphics[height=0.22\textwidth,  width=0.22\textwidth]{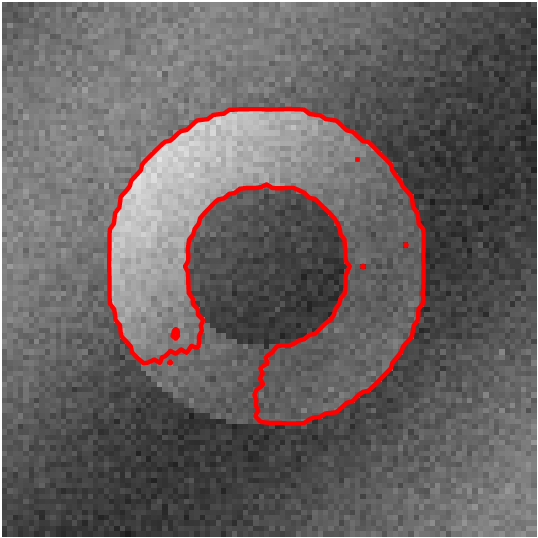}\label{fig:length-10-DR-12}}
	}
	\caption{Segmentation results of MBE-RSF model (a)-(c) and DR2-RSF model (d)-(f) for the noisy image ($std = 10$).}\label{fig:length-10-contour}
\end{figure}

\begin{figure}[htbp]
\centerline{
						\subfigcapskip=-2pt
	\subfigure[Figure \ref{fig:length-10-24}]{\includegraphics[height=0.22\textwidth, width=0.24\textwidth]{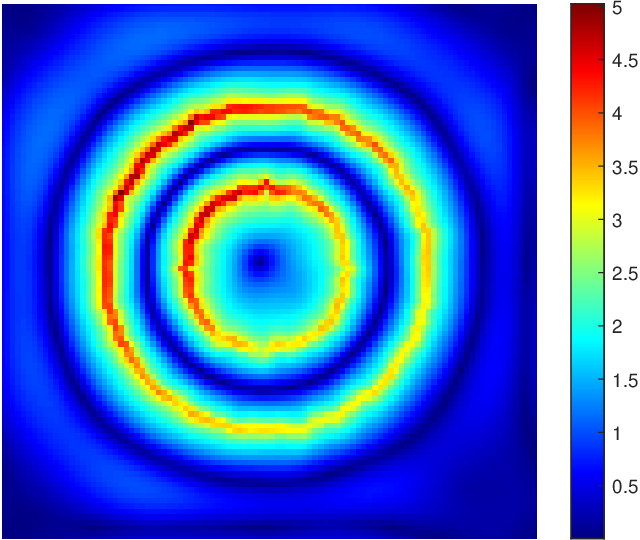}}\quad 
	\subfigure[Figure \ref{fig:length-10-33}]{\includegraphics[height=0.22\textwidth, width=0.24\textwidth]{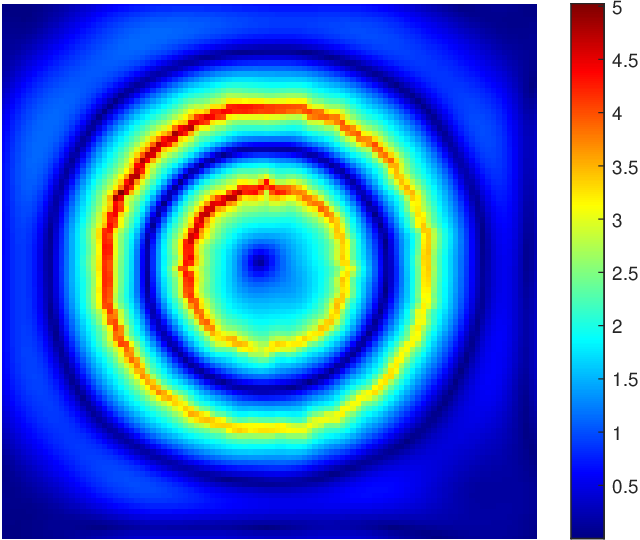}}\quad 
	\subfigure[Figure \ref{fig:length-10-26}]{\includegraphics[height=0.22\textwidth,  width=0.24\textwidth]{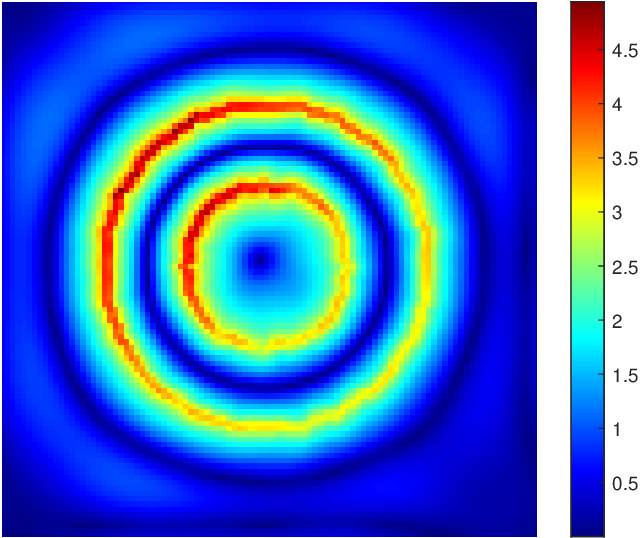}}
	}\vspace{0.1cm}
\centerline{
					\subfigcapskip=-2pt
		\subfigure[Figure \ref{fig:length-10-DR-8}]{\includegraphics[height=0.22\textwidth, width=0.24\textwidth]{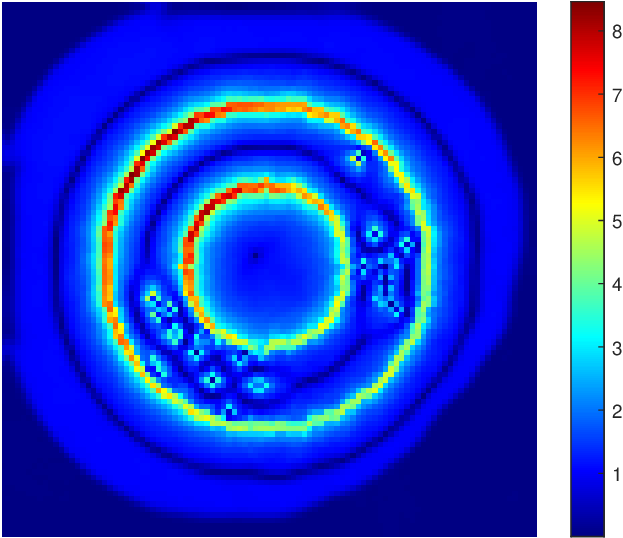}}\quad 
		\subfigure[Figure \ref{fig:length-10-DR-10}]{\includegraphics[height=0.22\textwidth, width=0.24\textwidth]{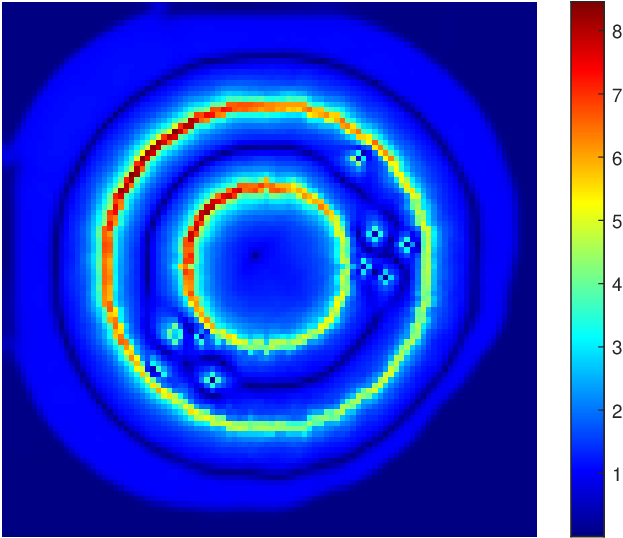}}\quad 
		\subfigure[Figure \ref{fig:length-10-DR-12}]{\includegraphics[height=0.22\textwidth, width=0.24\textwidth]{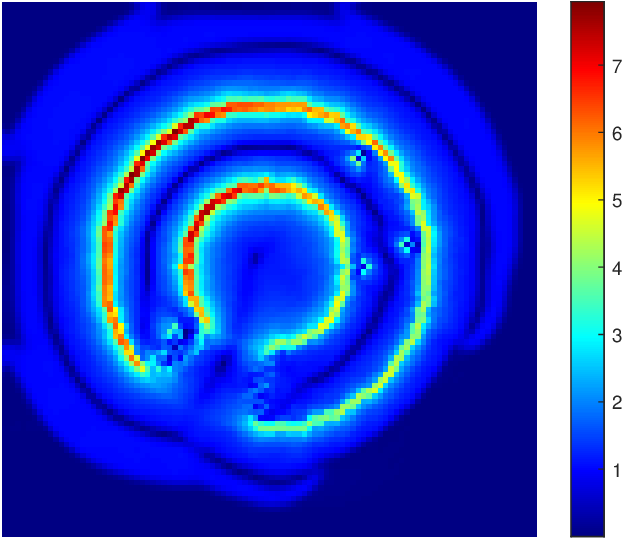}}
}
\caption{The map of  $|\nabla \phi|$ corresponding to Figure \ref{fig:length-10-contour}.}\label{fig:length-10-nablau}
\end{figure}

\begin{table}[htbp]
  \centering
  %\scriptsize
  \setlength{\tabcolsep}{6mm}
  \caption{Parameters of Fig. \ref{fig:length-10-contour}. ($\lambda_1=0.33$, $\lambda_2=0.67$, $\epsilon=1$, $\sigma=5$) }
%      \resizebox{\linewidth}{!}{
    \begin{tabular}{cccccccccc}\hline
    std\_n = 10  & $\mu$ & $\alpha$ & $\nu$ & $\tau$ & iter \\\hline
   Fig. \ref{fig:length-10-24}   & 1     & 15    & 10 & 0.01   & 4000 \\
    Fig. \ref{fig:length-10-33}   & 1     & 15    & 20   & 0.01 & 4000 \\
  Fig. \ref{fig:length-10-26}     & 1     & 20    & 0    & 0.01 & 4000 \\
    \hline
    Fig. \ref{fig:length-10-DR-8}     & 6.6  & -  & 20    & 0.015 & 2000 \\
    Fig. \ref{fig:length-10-DR-10}    & 6.6  & -  & 100   & 0.015 & 2000 \\
    Fig. \ref{fig:length-10-DR-12}   & 8   & -   & 150   & 0.013 & 4000 \\\hline
    \end{tabular}%
%}
  \label{tab:test_length_10}%
\end{table}%

The effectiveness of DR2 regularization term in combating noise can be seen in the segmentation results of DR2-RSF and MBE-RSF when the noise level is set to 10, as shown in Fig. \ref{fig:length-10-contour}.  We set $\lambda_1=0.33$, $\lambda_2=0.67$ and $\sigma=5$, the other parameters are shown in Table \ref{tab:test_length_10}. It is found that the DR2 regularization term is ineffective in combating noise for the segmentation results of the DR2-RSF model always contain spots. The MBE-RSF model can perfectly segment the banded ring, thanks to its ability to control local smoothness and anti-noise of the curve using the MBE regularization term.
Moreover, the segmentation results of MBE-RSF are robust to different parameter selection conditions, as demonstrated in Fig. \ref{fig:length-10-24}, Fig. \ref{fig:length-10-33}, Fig. \ref{fig:length-10-26}. Even when the arc length parameter $\nu=0$, the model can achieve good segmentation results, further highlighting the advantages of the MBE regularization term.

\begin{figure}[htbp]
%				\subfigbottomskip=0pt
	\centerline{
						\subfigcapskip=-2pt
		\subfigure[]{\includegraphics[height=0.22\textwidth, width=0.22\textwidth]{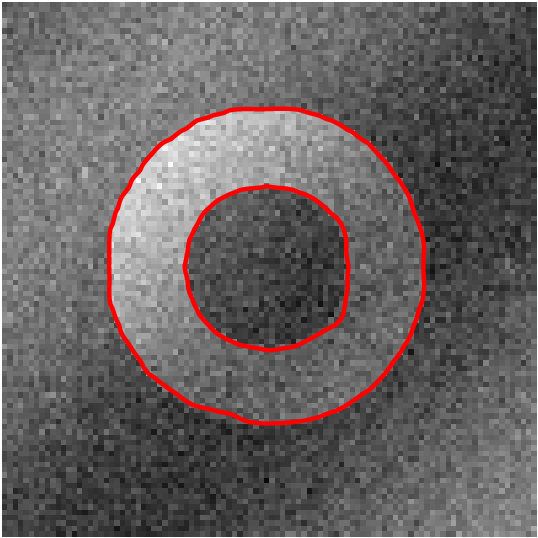}\label{fig:length-101-1}}\quad 
		\subfigure[]{\includegraphics[height=0.22\textwidth, width=0.22\textwidth]{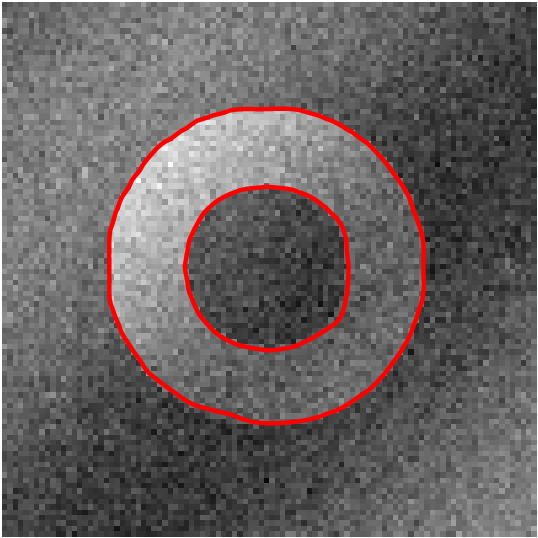}\label{fig:length-101-2}}\quad 
		\subfigure[]{\includegraphics[height=0.22\textwidth, width=0.22\textwidth]{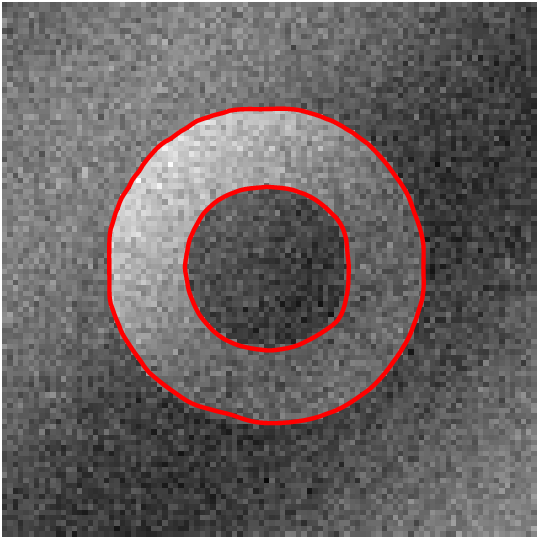}\label{fig:length-101-3}}
	}\vspace{0.15cm}
\centerline{
					\subfigcapskip=-2pt
		\subfigure[]{\includegraphics[height=0.22\textwidth, width=0.22\textwidth]{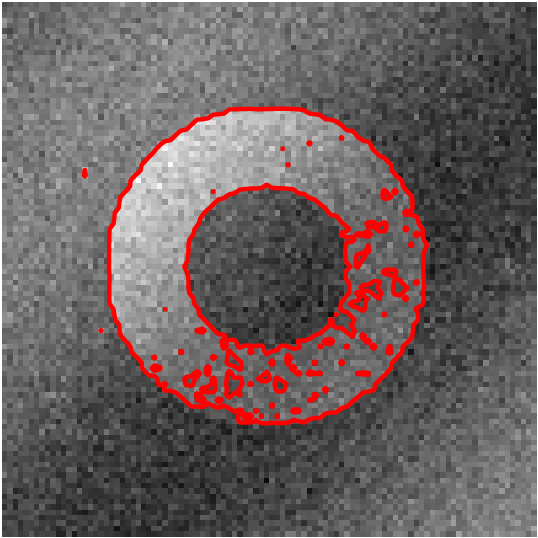}\label{fig:length-101-DR-2}}\quad
		\subfigure[]{\includegraphics[height=0.22\textwidth,  width=0.22\textwidth]{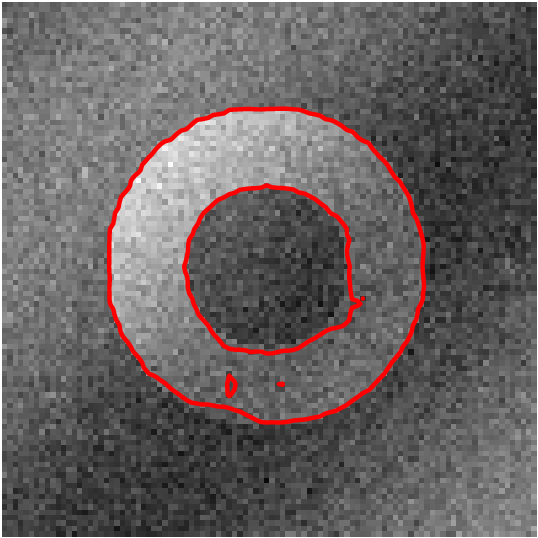}\label{fig:length-101-DR-4}}\quad 
		\subfigure[]{\includegraphics[height=0.22\textwidth,  width=0.22\textwidth]{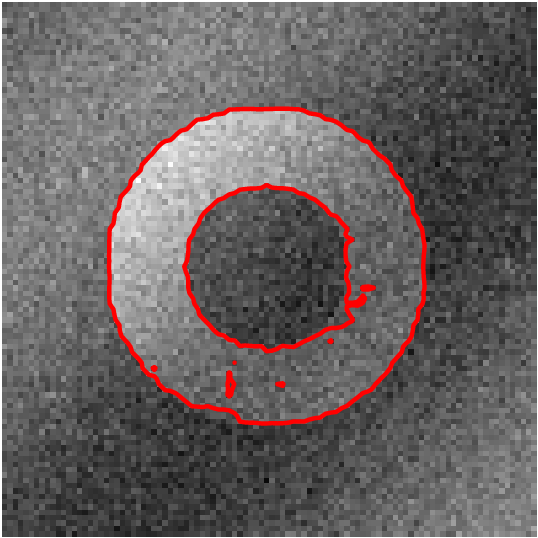}\label{fig:length-101-DR-5}}
	}
	\caption{Segmentation results of MBE-RSF model (a)-(c) and DR2-RSF model (d)-(f) for the noisy image ($std = 15$).}\label{fig:length-101-contour}
\end{figure}

The segmentation results of the DR2-RSF and MBE-RSF models with a noise level of 15 and parameters set to $\lambda_1=0.4$, $\lambda_2=0.6$, and $\sigma=6$ are shown in Fig. \ref{fig:length-101-contour}, with the other parameters selected as listed in Table \ref{tab:addlabel}. Similar to the case with a noise level of 10, the DR2-RSF model fails to produce smooth segmentation results. As the noise level increases, more artifacts appear at the segmentation boundaries in the DR2-RSF model, indicating that noise has a greater negative impact on its segmentation performance. In contrast, in the absence of arc length constraint, the MBE-RSF model adjusts the coefficients between $\mu$ and $\alpha$ to smoothly separate the ring region from the background.

\begin{figure}[htbp]
\centerline{
	\subfigure[Fig. \ref{fig:length-101-1}]{\includegraphics[height=0.22\textwidth, width=0.24\textwidth]{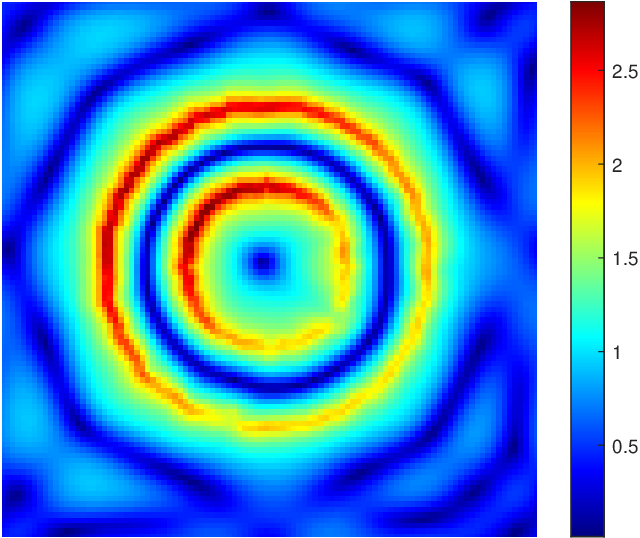}}\quad 
	\subfigure[Fig. \ref{fig:length-101-2}]{\includegraphics[height=0.22\textwidth, width=0.24\textwidth]{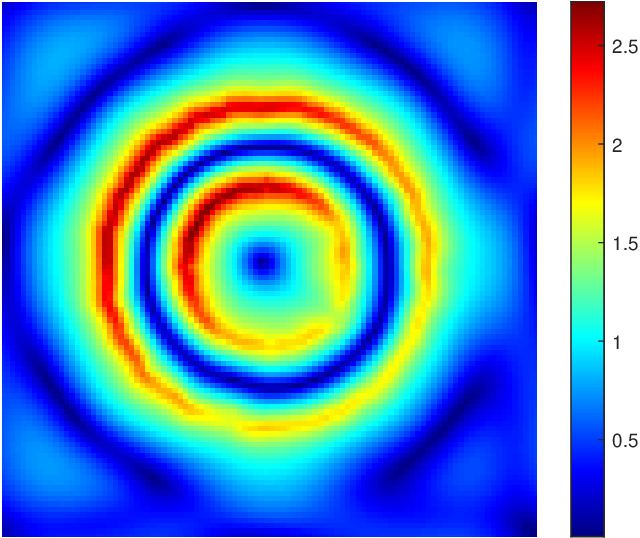}}\quad 
	\subfigure[Fig. \ref{fig:length-101-3}]{\includegraphics[height=0.22\textwidth, width=0.24\textwidth]{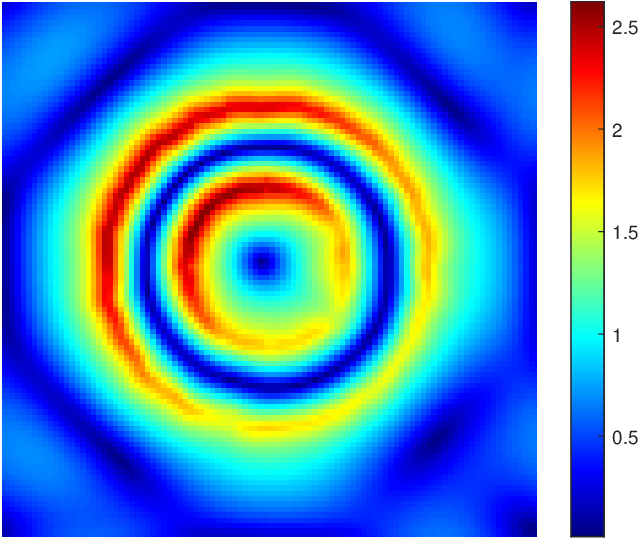}}
\vspace{0.1cm}
	}\centerline{
	\subfigcapskip=-2pt
		\subfigure[Fig. \ref{fig:length-101-DR-2}]{\includegraphics[height=0.22\textwidth, width=0.24\textwidth]{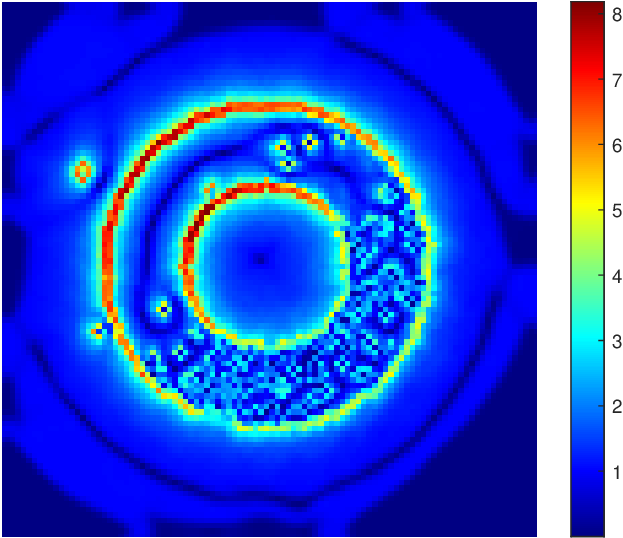}}\quad 
		\subfigure[Fig. \ref{fig:length-101-DR-4}]{\includegraphics[height=0.22\textwidth, width=0.24\textwidth]{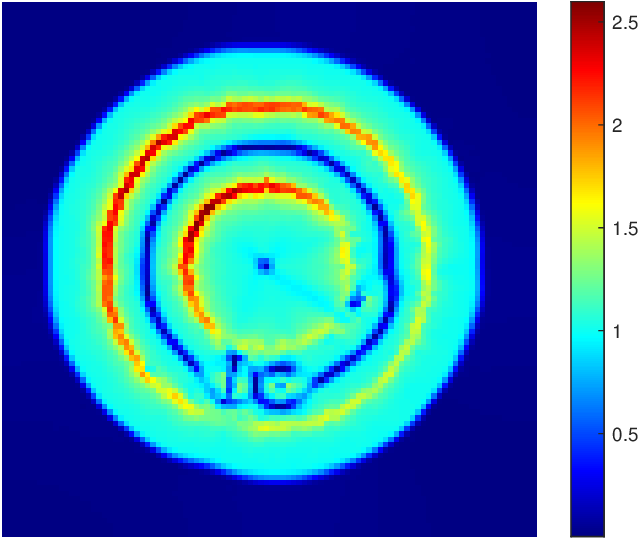}}\quad 
		\subfigure[Fig. \ref{fig:length-101-DR-5}]{\includegraphics[height=0.22\textwidth, width=0.24\textwidth]{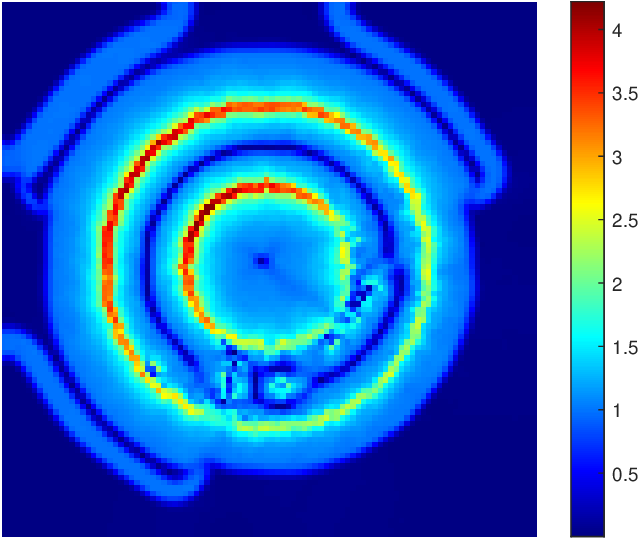}}
}
\caption{The map of  $|\nabla \phi|$ corresponding to Fig. \ref{fig:length-101-contour}.}\label{fig:length-101-nablau}
\end{figure}
The segmentation results of the DR2-RSF and MBE-RSF models with a noise level of 15 and parameters set to $\lambda_1=0.4$, $\lambda_2=0.6$, and $\sigma=6$ are shown in Fig. \ref{fig:length-101-contour}, with the other parameters selected as listed in Table \ref{tab:addlabel}. Similar to the case with a noise level of 10, the DR2-RSF model fails to produce smooth segmentation results. As the noise level increases, more artifacts appear at the segmentation boundaries in the DR2-RSF model, indicating that noise has a greater negative impact on its segmentation performance. In contrast, in the absence of arc length constraint, the MBE-RSF model adjusts the coefficients between $\mu$ and $\alpha$ to smoothly separate the ring region from the background.

\begin{table}[htbp]
	\centering
	%\scriptsize
    \setlength{\tabcolsep}{6mm}
	\caption{Parameters of Fig. \ref{fig:length-101-contour}.  ($\lambda_1=0.4$, $\lambda_2=0.6$, $\epsilon=1$, $\sigma=6$)}
	%      \resizebox{\linewidth}{!}{
	\begin{tabular}{cccccccccc}\hline
		std\_n = 15  & $\mu$ & $\delta$ & $\nu$ &$\tau$ & iter \\\hline
		Fig. \ref{fig:length-101-1}   & 10    & 10    & 0     & 0.001 & 20000 \\
		Fig. \ref{fig:length-101-2}     & 10    & 20    & 0     & 0.001 & 20000 \\
		Fig. \ref{fig:length-101-3}   & 10    & 30    & 0     & 0.001 & 20000 \\
		Fig. \ref{fig:length-101-DR-2}       & 7     &    -   & 1     & 0.012 & 8000 \\
		Fig.\ref{fig:length-101-DR-4}      & 200   &   -    & 1     & 0.0005 & 30000 \\
		Fig. \ref{fig:length-101-DR-5}     & 50    &   -    & 100   & 0.002 & 10000 \\\hline
	\end{tabular}%
	%}
	\label{tab:addlabel}%
\end{table}%

Furthermore, both methods require more iterations to achieve a stable state due to the increased noise level. The experimental results in Fig. \ref{fig:length-101-contour} further highlight the advantages of the MBE regularization method in combating noise.
\subsection{MBE-RSF model for real image segmentation}\label{sec:5-4}
Real images often contain complex structures, various backgrounds, and different lighting conditions, which pose significant challenges to image segmentation. Numerical experiments on real images can also be time-consuming. Therefore, in this section, we use both the semi-implicit semi-explicit finite-difference method(FDM) and the SAV scheme to discretize the images. Our experimental results demonstrate that the MBE-RSF model produces highly accurate segmentation results, even on complex real-world images.

%Compared to the DR1/DR2 regularization terms,
The MBE-RSF model has shown promising results in real image segmentation.
In natural image segmentation, as shown in Fig. \ref{fig:bird} and Fig. \ref{fig:bear}, the MBE-RSF model effectively segments objects in complex scenes with varying backgrounds and lighting conditions. The model also has demonstrated its ability to segment objects in challenging scenarios with occlusions and overlapping objects.

We have also applied the MBE-RSF model to medical image segmentation and compared it with the DR1/DR2-RSF models. To ensure the fairness of the comparison, both models are fine-tuned to achieve optimal segmentation results. Our experimental results show that the MBE-RSF model outperforms the DR1/DR2-RSF models in segmenting low-dose CT images, X-ray images, microscopic cell images, and MRI images.

\begin{figure}[htbp]
	\centerline{
		\subfigcapskip=-2pt
		\subfigure[Original]{\includegraphics[trim={0cm 0cm 0cm 0cm},clip, width=0.23\textwidth,height=4.5cm]{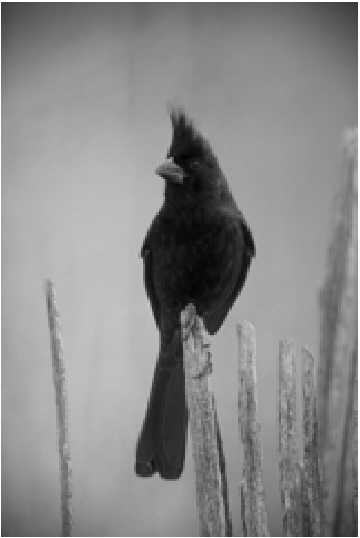}}
       %\subfigure[Init-mesh]{\includegraphics[trim={0cm 0cm 0cm 0cm},clip, width=0.20\textwidth,height=3.5cm]{fig/MBE_SAV_RSF-196027_init_mesh.png}}

       %\subfigure[Init-jet]{\includegraphics[trim={0cm 0cm 0cm 0cm},clip, width=0.20\textwidth,height=3.5cm]{fig/MBE_SAV_RSF-196027_init_contour1}}
        \quad
        \subfigure[{DR1-RSF}]{\includegraphics[trim={0cm 0cm 0cm 0cm},clip, width=0.23\textwidth,height=4.5cm]{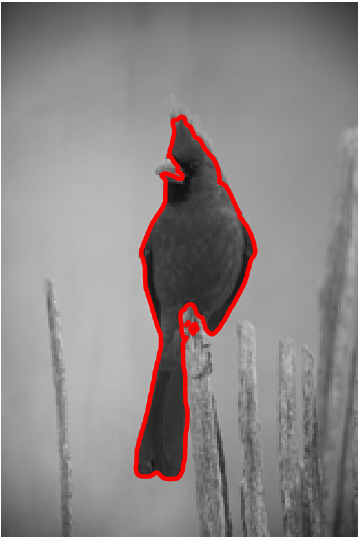}\label{fig:birddr1}}
		%\subfigure[$|\nabla \phi|$ of (e)]{\includegraphics[trim={0cm 0cm 0cm 0cm},clip, width=0.20\textwidth,height=3.5cm]{fig/196027-DR-3_nablau.png}}
        \quad
		\subfigure[{DR2-RSF}]{\includegraphics[trim={0cm 0cm 0cm 0cm},clip, width=0.23\textwidth,height=4.5cm]{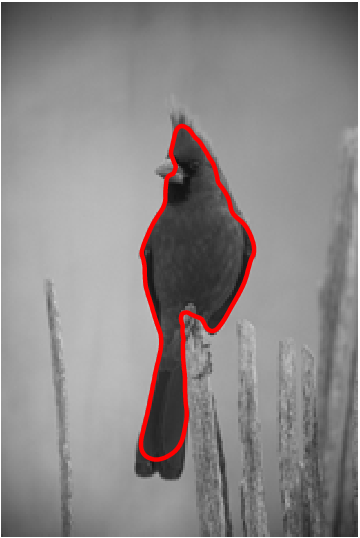}\label{fig:birddr2}}
        %\subfigure[$|\nabla \phi|$ of (g)]{\includegraphics[trim={0cm 0cm 0cm 0cm},clip, width=0.20\textwidth,height=3.5cm]{fig/196027-RD-3_nablau.png}}
        \vspace{0.1cm}
	}\centerline{
	\subfigcapskip=-2pt
   \subfigure[Init-contour]{\includegraphics[trim={0cm 0cm 0cm 0cm},clip, width=0.23\textwidth,height=4.5cm]{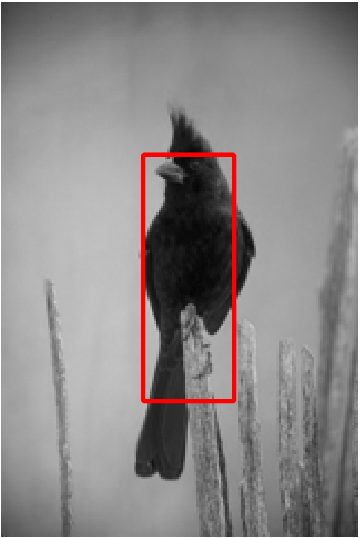}}
   \quad
       \subfigure[{MBE-RSF-FDM}]{\includegraphics[trim={0cm 0cm 0cm 0cm},clip, width=0.23\textwidth,height=4.5cm]{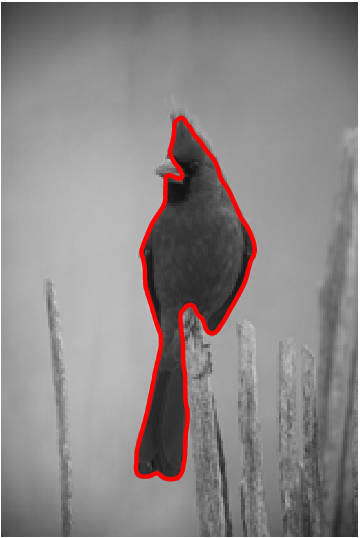}\label{fig:birdmbe}}
       \quad
        
           \subfigure[{MBE-RSF-SAV}]{\includegraphics[trim={0cm 0cm 0cm 0cm},clip, width=0.23\textwidth,height=4.5cm]{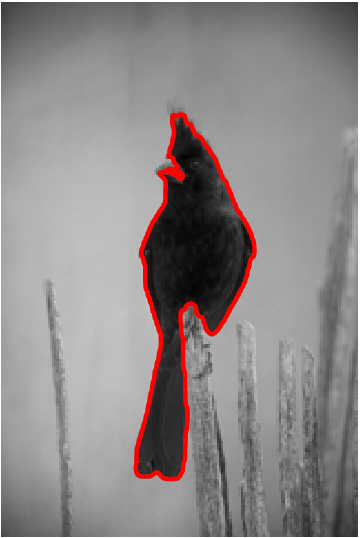}\label{fig:birdsav}}
         
}
	\caption{Segmentation results of DR1-RSF model, DR2-RSF model and MBE-RSF model.}\label{fig:bird}
\end{figure}
\begin{figure}[htbp]
	\centerline{
		\subfigcapskip=-2pt
		\subfigure[Original]{\includegraphics[trim={0cm 0cm 0cm 0cm},clip, width=0.24\textwidth,height=2.5cm]{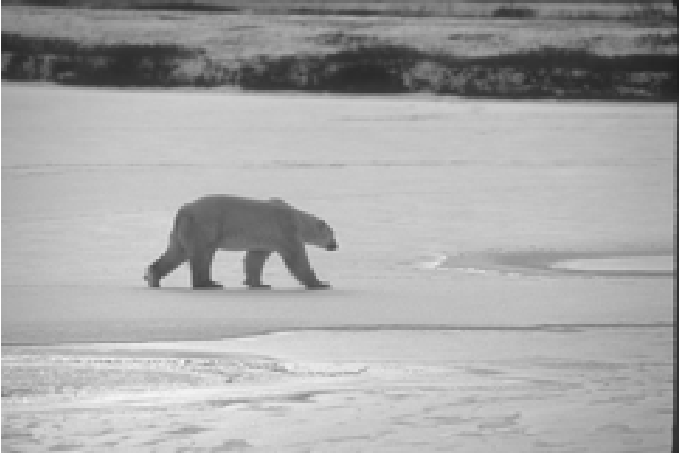}}
        %\subfigure[Init-mesh]{\includegraphics[trim={0cm 0cm 0cm 0cm},clip, width=0.22\textwidth,height=2.2cm]{fig/MBE_SAV_RSF-100007_init_mesh.png}}
        \quad

		\subfigure[DR1-RSF]{\includegraphics[trim={0cm 0cm 0cm 0cm},clip, width=0.24\textwidth,height=2.5cm]{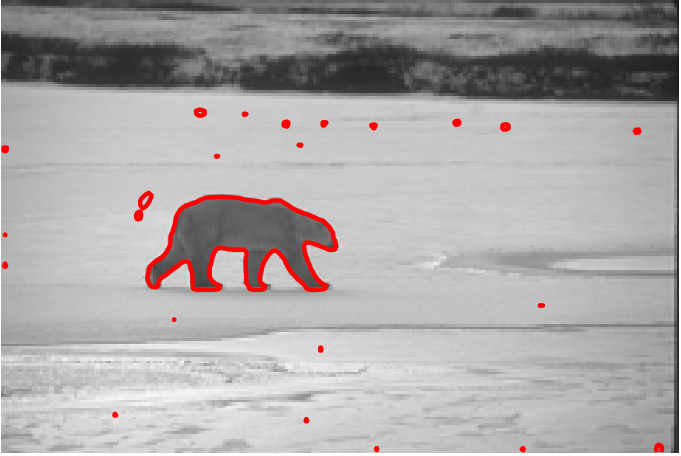}\label{fig:beardr1}}
		
        \quad
        \subfigure[DR2-RSF]{\includegraphics[trim={0cm 0cm 0cm 0cm},clip, width=0.24\textwidth,height=2.5cm]{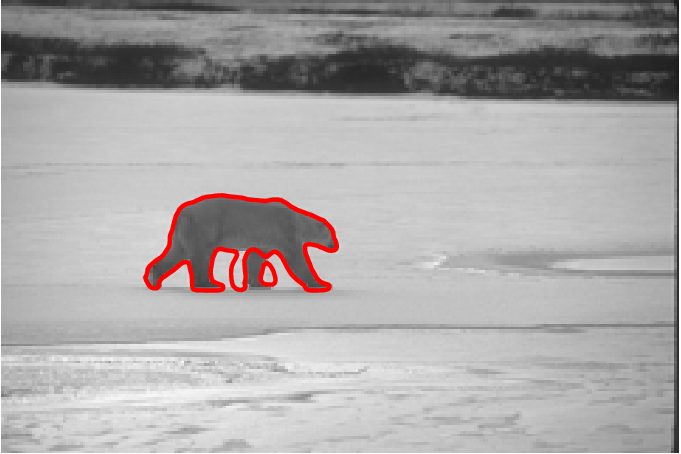}\label{fig:beardr2}}
     
        \vspace{0.1cm}
        }\centerline{
	\subfigcapskip=-2pt
\subfigure[Init-contour]{\includegraphics[trim={0cm 0cm 0cm 0cm},clip, width=0.24\textwidth,height=2.5cm]{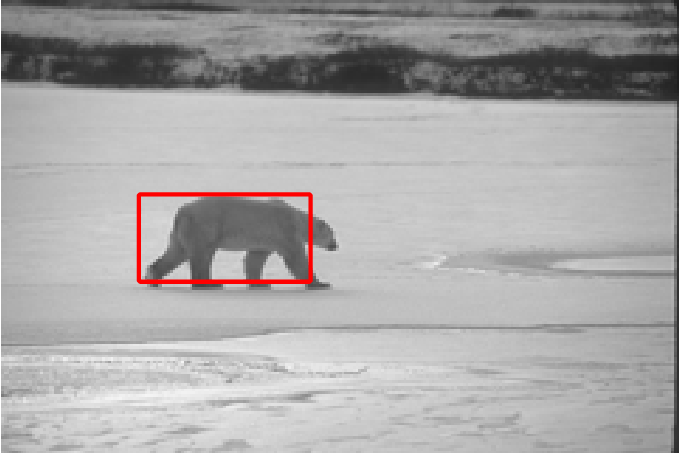}}
 \quad		
\subfigure[MBE-RSF-FDM]{\includegraphics[trim={0cm 0cm 0cm 0cm},clip, width=0.24\textwidth,height=2.5cm]{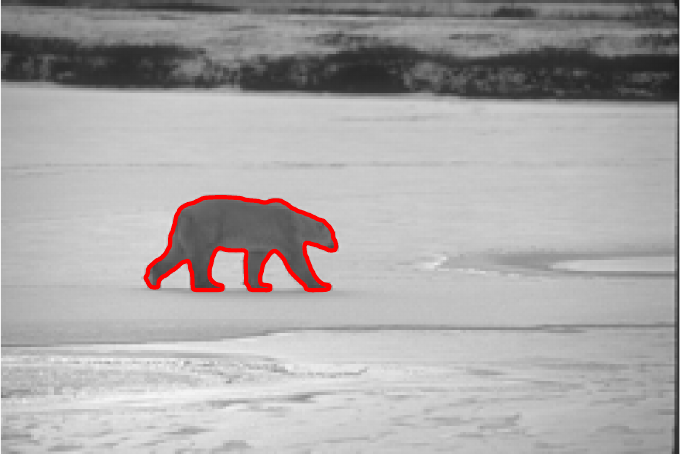}\label{fig:bearmbe}}
		
 \quad		
\subfigure[MBE-RSF-SAV]{\includegraphics[trim={0cm 0cm 0cm 0cm},clip, width=0.24\textwidth,height=2.5cm]{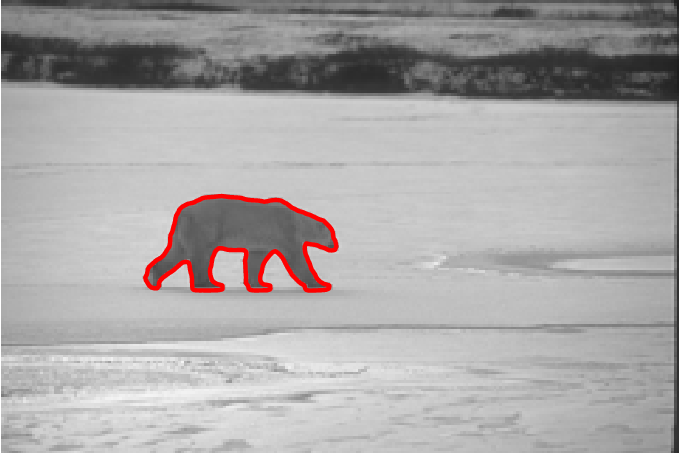}\label{fig:bearsav}}
        
	}
	\caption{Segmentation results of DR1-RSF model, DR2-RSF model and MBE-RSF model.}\label{fig:bear}
\end{figure}

\begin{figure}[htbp]
	\centerline{
				\subfigcapskip=-2pt
\subfigure[Original]{\includegraphics[width=0.23\textwidth,height=1.7cm]{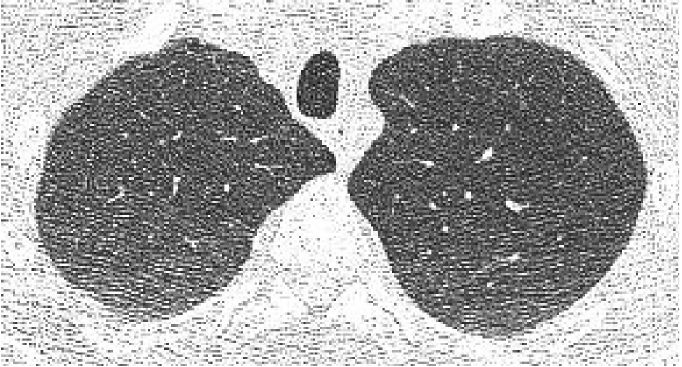}}\quad

	    \subfigure[DR1-RSF]{\includegraphics[width=0.23\textwidth,height=1.7cm]{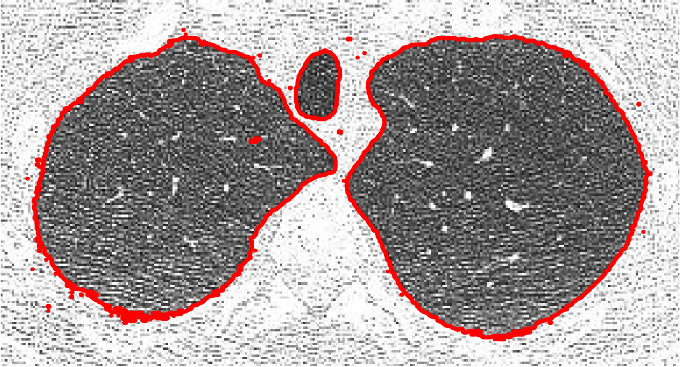}\label{fig:ctdr1}}\quad
%\subfigure[$|\nabla \phi|$ of (e)]{\includegraphics[width=0.22\textwidth,height=1.8cm]{fig/Medical_01CF3F82AD34B1B7-DR-1_nablau.png}}
        \subfigure[DR2-RSF]{\includegraphics[width=0.23\textwidth,height=1.7cm]{Medical_01CF3F82AD34B1B7-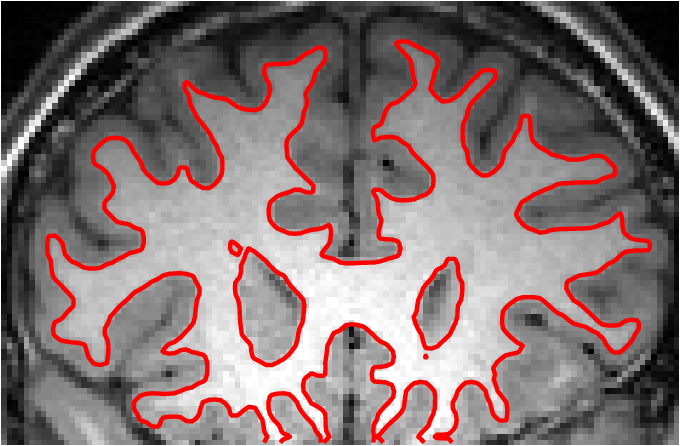}\label{fig:ctdr2}}
       % \subfigure[$|\nabla \phi|$ of (g)]{\includegraphics[width=0.22\textwidth,height=1.8cm]{fig/Medical_01CF3F82AD34B1B7-RD-1_nablau.png}}
\vspace{0.1cm}
	}\centerline{
	\subfigcapskip=-2pt
  \subfigure[Init-contour]{\includegraphics[ width=0.23\textwidth,height=1.7cm]{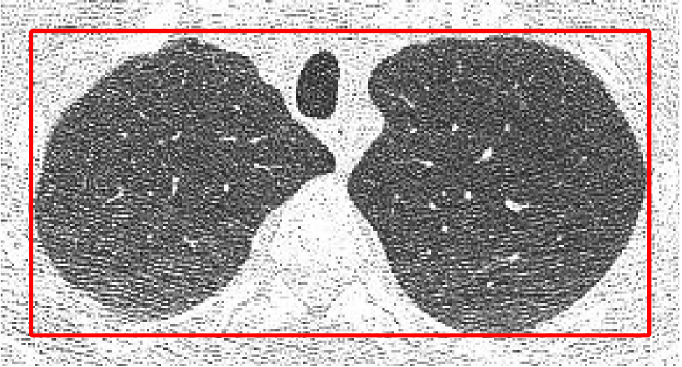}}\quad
		\subfigure[MBE-RSF-FDM]{\includegraphics[width=0.23\textwidth,height=1.7cm]{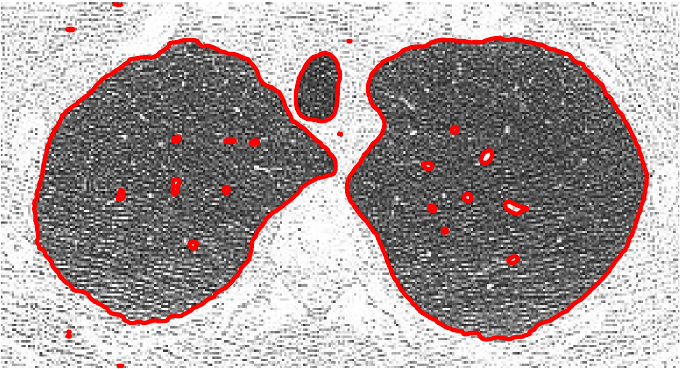}\label{fig:ctmbe}}\quad
		%\subfigure[$|\nabla \phi|$ of (i)]{\includegraphics[width=0.22\textwidth,height=1.8cm]{fig/MBE-FDM-RSF-CT_nablau.png}}
        \subfigure[MBE-RSF-SAV]{\includegraphics[width=0.23\textwidth,height=1.7cm]{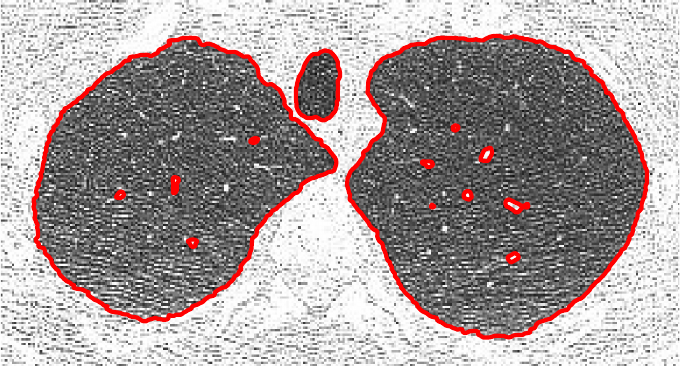}\label{fig:ctsav}}
        % \subfigure[$|\nabla \phi|$ of (k)]{\includegraphics[width=0.22\textwidth,height=1.8cm]{fig/MBE_SAV_RSF01CF3F82AD34B1B7_nablau.png}}
        }
	\caption{Segmentation results of DR1-RSF model, DR2-RSF model and MBE-RSF model.}\label{fig:CT}
\end{figure}

 Fig. \ref{fig:CT} shows the segmentation results of low-dose lung CT images. Due to the poor imaging quality of the image, there are a lot of artifacts and noise in the image, which will increase the difficulty of segmentation. Neither DR1 nor DR2 model can represent the smooth curve of lung, which indicates that DR1 and DR2 regularization terms cannot overcome the influence of artifacts and noise. Due to the function of MBE regularization term, the RSF-MBE model can obtain smooth segmentation curve and successfully separate the alveoli. Consistent with the conclusions of the previous section, this experiment demonstrates the advantage of MBE regularization terms in controlling the smoothness of curves and overcoming the influence of noise. Due to the function of MBE regularization term, the model can control the smoothness of curves without losing small targets.

 \begin{figure}[htbp]
	\centerline{
		\subfigcapskip=-2pt
\subfigure[Original]{\includegraphics[width=0.23\textwidth,height=2.5cm]{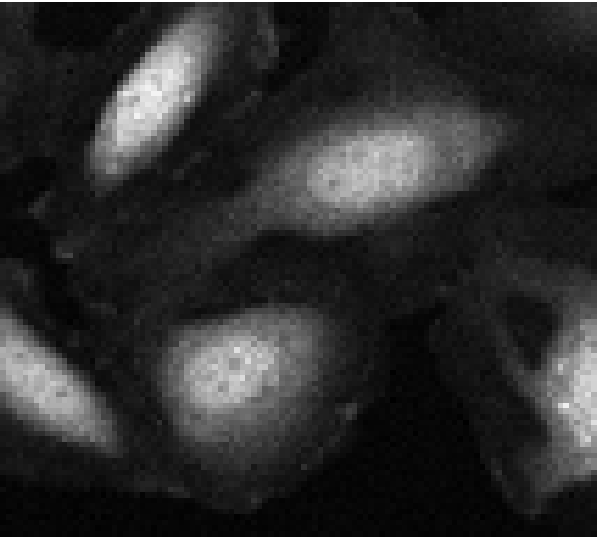}}\quad

	    \subfigure[DR1-RSF]{\includegraphics[width=0.23\textwidth,height=2.5cm]{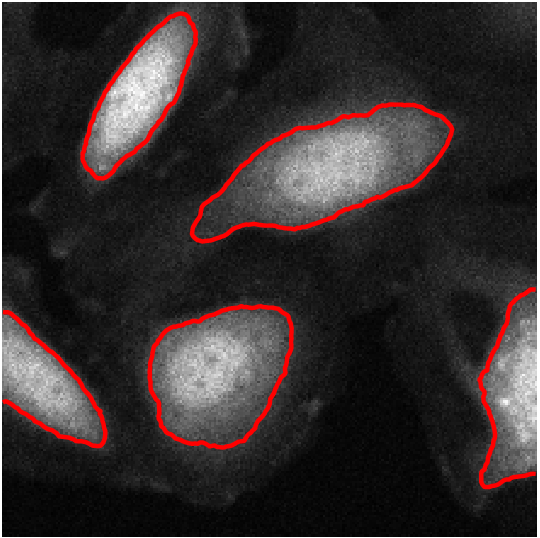}\label{fig:celldr1}}\quad
%\subfigure[$|\nabla \phi|$ of (e)]{\includegraphics[width=0.22\textwidth,height=2.5cm]{fig/DR1_RSF-6_nablau.png}}
        \subfigure[DR2-RSF]{\includegraphics[width=0.23\textwidth,height=2.5cm]{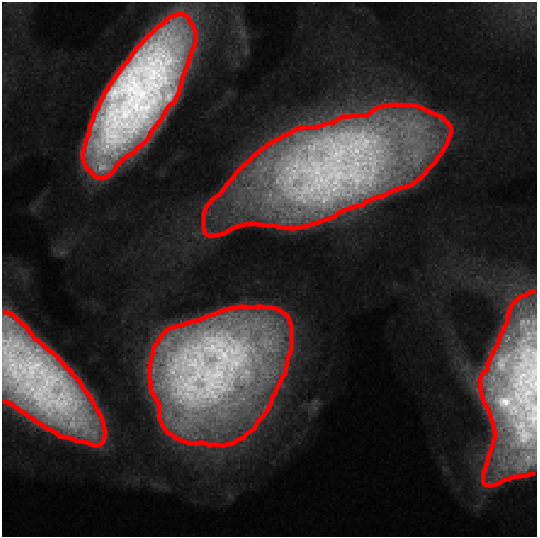}\label{fig:celldr2}}
        %\subfigure[$|\nabla \phi|$ of (g)]{\includegraphics[width=0.22\textwidth,height=2.5cm]{fig/DR2_RSF-6_nablau.png}}
\vspace{0.1cm}
	}\centerline{
	\subfigcapskip=-2pt
 \subfigure[Init-contour]{\includegraphics[ width=0.23\textwidth,height=2.5cm]{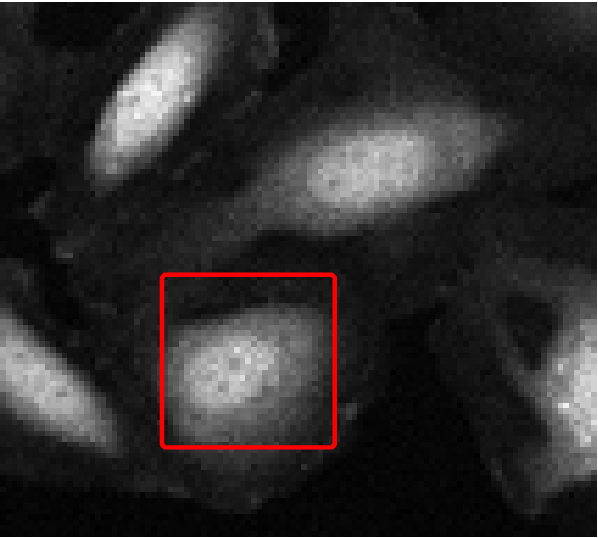}}\quad
		\subfigure[MBE-RSF-FDM]{\includegraphics[width=0.23\textwidth,height=2.5cm]{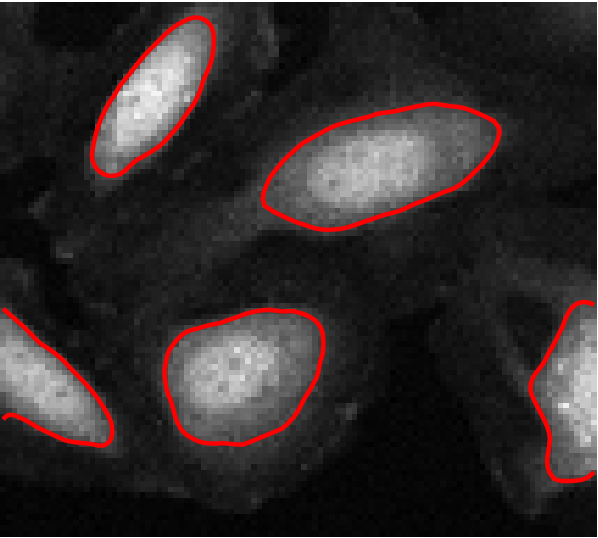}\label{fig:cellmbe}}\quad
		%\subfigure[$|\nabla \phi|$ of (i)]{\includegraphics[width=0.22\textwidth,height=2.5cm]{fig/MBE_FFT_RSF-6_nablau.png}}
        \subfigure[MBE-RSF-SAV]{\includegraphics[width=0.23\textwidth,height=2.5cm]{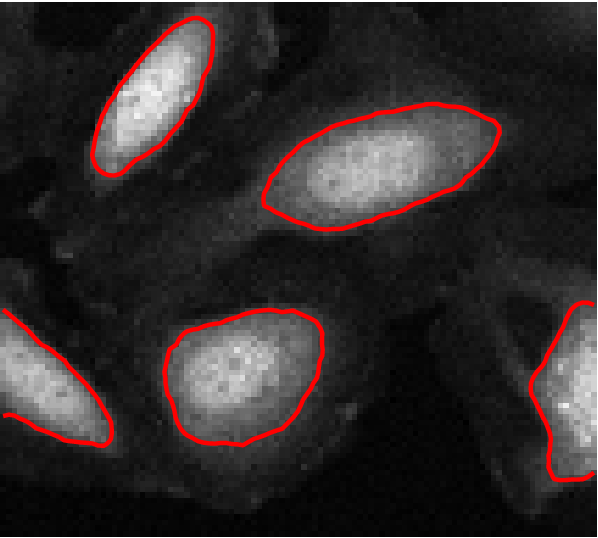}\label{fig:cellsav}}
        % \subfigure[$|\nabla \phi|$ of (k)]{\includegraphics[width=0.22\textwidth,height=2.5cm]{fig/MBE_SAV_RSF-6_nablau.png}}
	}
	\caption{Segmentation results of DR1-RSF model, DR2-RSF model and MBE-RSF model.}\label{fig:CELL}
\end{figure}

\begin{figure}[htbp]
	\centerline{
		\subfigcapskip=-2pt
		\subfigure[Original]{\includegraphics[trim={0cm 0cm 0cm 0cm},clip, width=0.23\textwidth,height=2cm]{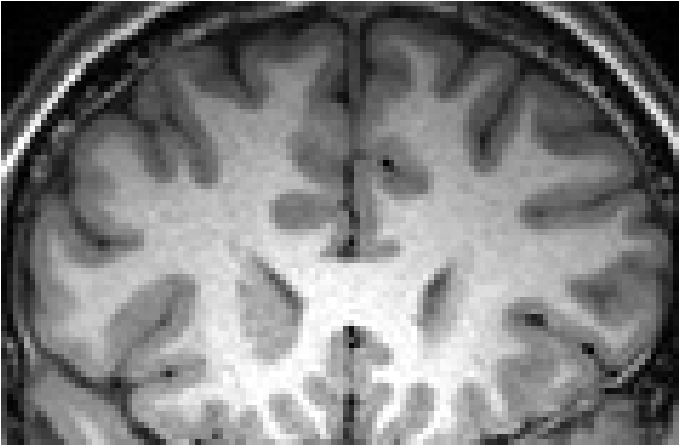}}\quad
        %\subfigure[Init-mesh]{\includegraphics[trim={0cm 0cm 0cm 0cm},clip, width=0.22\textwidth,height=2.2cm]{fig/MBE_SAV_RSF-1_init_mesh.png}}

        % \subfigure[Init-jet]{\includegraphics[trim={0cm 0cm 0cm 0cm},clip, width=0.22\textwidth,height=2.2cm]{fig/MBE_SAV_RSF-1_init_contour1.png}}

		\subfigure[DR1-RSF]{\includegraphics[trim={0cm 0cm 0cm 0cm},clip, width=0.23\textwidth,height=2cm]{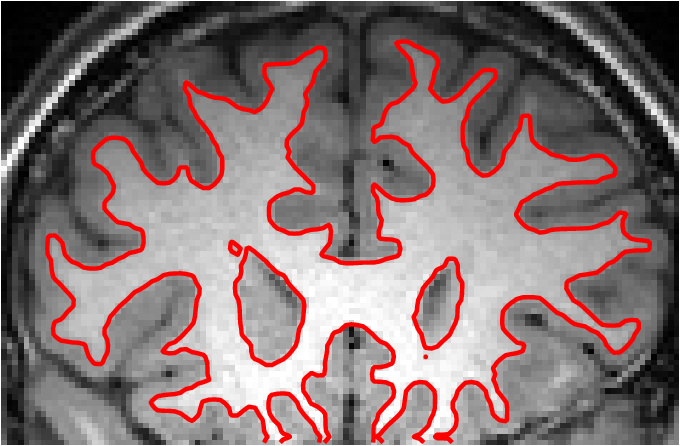}\label{fig:mridr1}}\quad
		%\subfigure[$|\nabla \phi|$ of (e)]{\includegraphics[trim={0cm 0cm 0cm 0cm},clip, width=0.22\textwidth,height=2.2cm]{fig/RSF-1_nablau.png}}
		\subfigure[DR2-RSF]{\includegraphics[trim={0cm 0cm 0cm 0cm},clip, width=0.23\textwidth,height=2cm]{RD-1_contour.eps}\label{fig:mridr2}}
        %\subfigure[$|\nabla \phi|$ of (g)]{\includegraphics[trim={0cm 0cm 0cm 0cm},clip, width=0.22\textwidth,height=2.2cm]{fig/RD-1_nablau.png}}
        \vspace{0.1cm}
	}\centerline{
	\subfigcapskip=-2pt
\subfigure[Init-contour]{\includegraphics[trim={0cm 0cm 0cm 0cm},clip, width=0.23\textwidth,height=2cm]{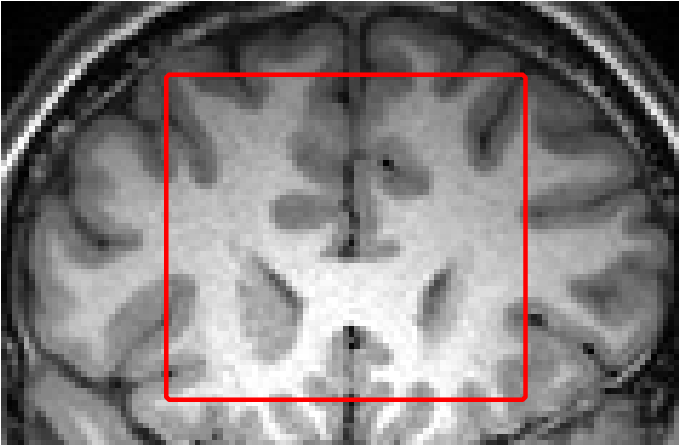}}\quad
		\subfigure[MBE-RSF-FDM]{\includegraphics[trim={0cm 0cm 0cm 0cm},clip, width=0.23\textwidth,height=2cm]{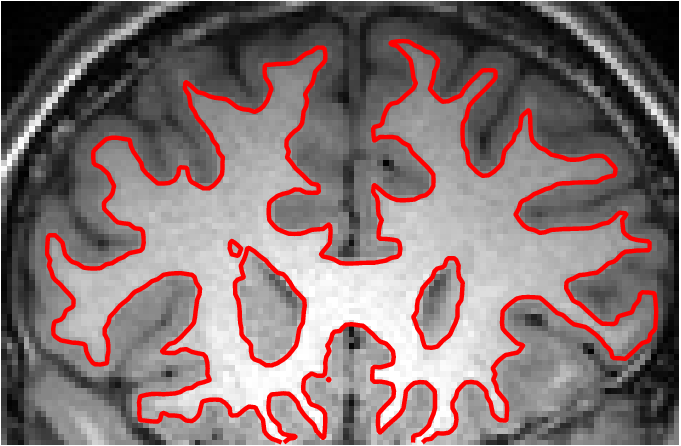}\label{fig:mrimbe}}\quad
		%\subfigure[$|\nabla \phi|$ of (i)]{\includegraphics[trim={0cm 0cm 0cm 0cm},clip, width=0.22\textwidth,height=2.2cm]{fig/MBE_FFT_RSF-1_nablau.png}}
		\subfigure[MBE-RSF-SAV]{\includegraphics[trim={0cm 0cm 0cm 0cm},clip, width=0.23\textwidth,height=2cm]{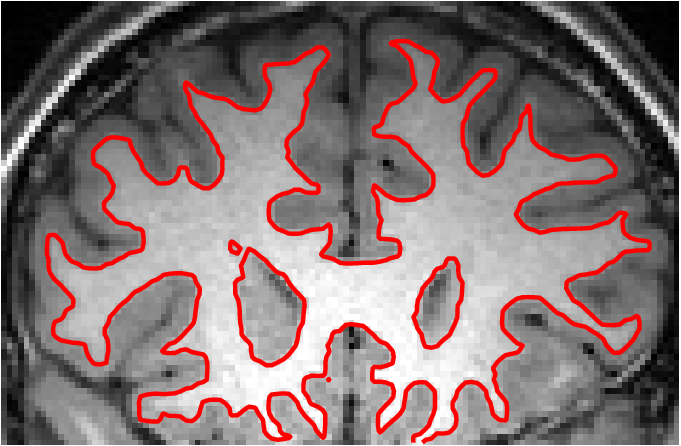}\label{fig:mrisav}}
       % \subfigure[$|\nabla \phi|$ of (k)]{\includegraphics[trim={0cm 0cm 0cm 0cm},clip, width=0.22\textwidth,height=2.2cm]{fig/MBE_SAV_RSF-1_nablau.png}}
	}
  \caption{Segmentation results of DR1-RSF model, DR2-RSF model and MBE-RSF model.}\label{fig:MRI}
 \end{figure}

 Fig. \ref{fig:CELL} and Fig. \ref{fig:MRI} show the results of microcellular images and brain MRI images, respectively. Both proposed model and compared models can achieve excellent segmentation results, which verifies the effectiveness of the model. Meanwhile, for microscopic images with blurred edges and brain MRI with rugged edges, the MBE regularization term can achieve fine segmentation while controlling the local smoothness of the curve.

  \begin{figure}[htbp]
	\centerline{
		\subfigcapskip=-2pt
		\subfigure[Original]{\includegraphics[trim={1cm 0cm 0cm 0cm},clip, width=0.23\textwidth,height=4.5 cm]{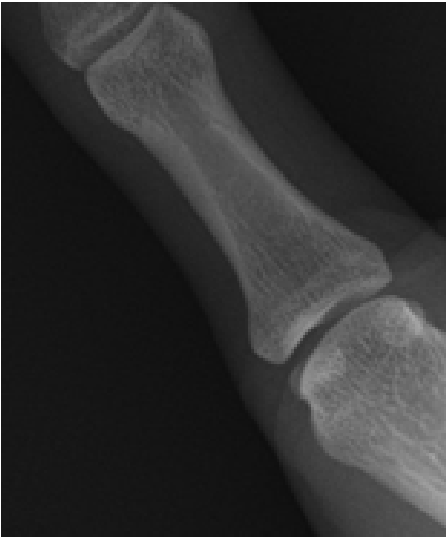}}\quad

        \subfigure[DR1-RSF]{\includegraphics[trim={1cm 0cm 0cm 0cm},clip, width=0.23\textwidth,height=4.5cm]{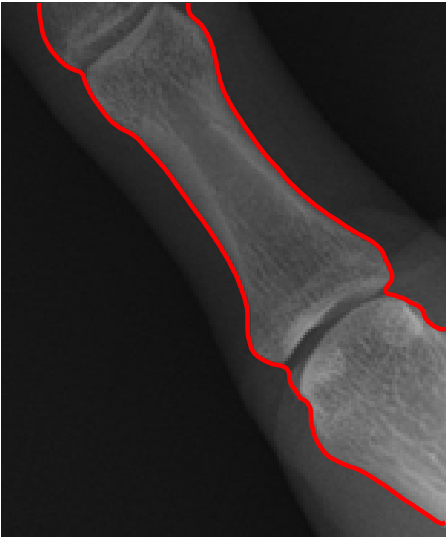}\label{fig:xraydr1}}\quad
		
		\subfigure[DR2-RSF]{\includegraphics[trim={1cm 0cm 0cm 0cm},clip, width=0.23\textwidth,height=4.5cm]{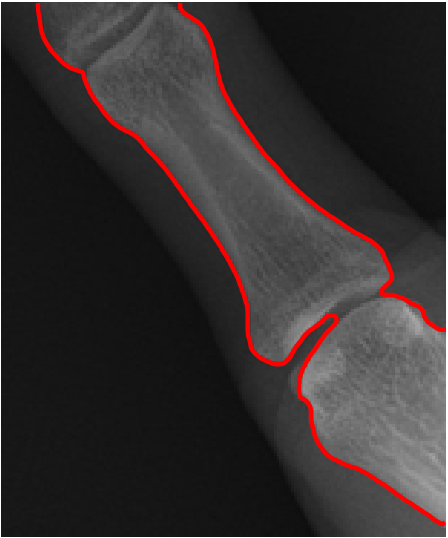}\label{fig:xraydr2}}
        
        \vspace{0.1cm}
	}\centerline{
	\subfigcapskip=-2pt
\subfigure[Init-contour]{\includegraphics[trim={1cm 0cm 0cm 0cm},clip, width=0.23\textwidth,height=4.5cm]{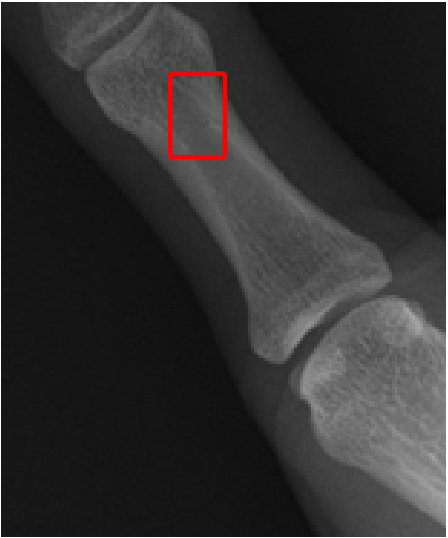}}\quad
       \subfigure[MBE-RSF-FDM]{\includegraphics[trim={1cm 0cm 0cm 0cm},clip, width=0.23\textwidth,height=4.5cm]{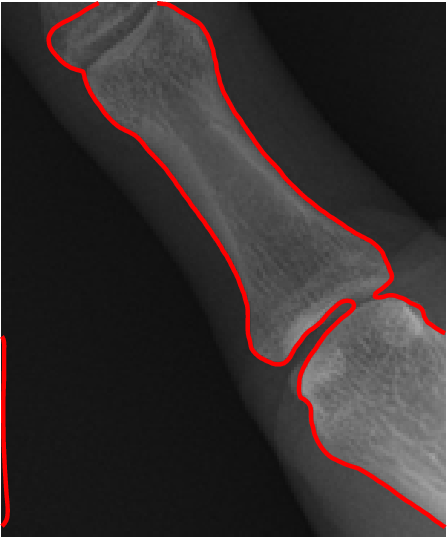}\label{fig:xraymbe}}\quad
        
           \subfigure[MBE-RSF-SAV]{\includegraphics[trim={1cm 0cm 0cm 0cm},clip, width=0.23\textwidth,height=4.5cm]{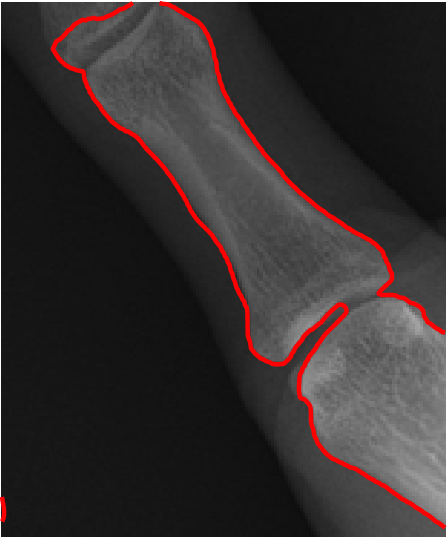}\label{fig:xraysav}}
           
}
	\caption{The segmentation results of DR1-RSF model, DR2-RSF model and MBE-RSF model.}\label{fig:X-ray}
\end{figure}
Fig. \ref{fig:X-ray} compares the segmentation results of the X-ray images joints of the three models. The segmentation curve obtained by the MBE-RSF model is more suitable for the edges in the image, the model can distinguish the joints with weak contrast in the image. This shows that the proposed model can also obtain more precise segmentation results for images with lower contrast.
\section{Conclusions}
\label{sec:conclusions}
In this paper, we have proposed a new high-order variational level set method for image segmentation problems, using  molecular beam epitaxial film manufacturing. This approach has eliminated the need for re-initialization and has enhanced the stability of the evolution. Two segmentation models, MBE-GAC and MBE-RSF, have been presented, demonstrating the flexibility of the MBE regularization term. We have designed the SAV scheme coupled with FFT for the proposed MBE segmentation models, resulting in a significant improvement in efficiency. Numerical experiments have demonstrated that our approach has effectively controlled the local smoothness of the segmentation curve, proving robust against noise. In addition, the model has exhibited remarkable segmentation results in handling fuzzy edges, intensity inhomogeneity, and small and complex objects. Overall, the MBE regularization method has shown superiority and potential in image segmentation, both in terms of accuracy and efficiency.

\bmhead{Acknowledgements}
This work is partially supported by the National Natural Science Foundation of China (12171123, U21B2075, 12371419, 12271130),
the Natural Science Foundation of Heilongjiang Province (ZD2022A001), the Fundamental Research Funds for the Central Universities (HIT.NSRIF202202, 2022FRFK060020,  HIT.NSRIF. 2020081, 2022FRFK060014).
 
  \bibliography{references}
% common bib file
%% if required, the content of .bbl file can be included here once bbl is generated
%%\input sn-article.bbl

\end{document}